\newcommand{\muh}{\hat{\mu}}
\newcommand{\pih}{\hat{\pi}}
\newcommand{\psiu}{\psi^{u}}
\newcommand{\psil}{\psi^{l}}
\newcommand{\lb}{\text{l}}
\newcommand{\ub}{\text{u}}
\newcommand{\varphih}{\hat{\varphi}}
\newcommand{\varphil}{\varphi^{\lb}}
\newcommand{\varphiu}{\varphi^{\ub}}
\newcommand{\dhl}{\hat{d}_l}%^{\lb}}
\newcommand{\dhu}{\hat{d}_u}%^{\ub}}
\newcommand{\dl}{d_{l}}
\newcommand{\du}{d_{u}}
\renewcommand{\d}{d}
\renewcommand{\dh}{\hat{d}}
\newcommand{\thetal}{\theta^{\lb}}
\newcommand{\thetau}{\theta^{\ub}}
\newcommand{\thetahl}{\hat{\theta}^{\lb}}
\newcommand{\thetahu}{\hat{\theta}^{\ub}}
\newcommand{\thetah}{\hat{\theta}}
\newcommand{\lambdal}{\lambda^{\lb}}
\newcommand{\lambdau}{\lambda^{\ub}}
\newcommand{\psihl}{\hat{\psi}^{\lb}}
\newcommand{\psihu}{\hat{\psi}^{\ub}}
\newcommand{\psih}{\hat{\psi}}
\newcommand{\Ph}{\hat{P}}
\renewcommand{\P}{P}
\newcommand{\abs}[1]{\left\lvert#1\right\rvert}
\newcommand{\norm}[1]{\left\|#1\right\|}
\theoremstyle{plain}
\newtheorem{proposition}{Proposition}
\newtheorem{lemma}{Lemma}
\theoremstyle{definition}
\newtheorem{definition}{Definition}
\newtheorem{assumption}{Assumption}
\theoremstyle{remark}
\Crefname{assumption}{Assumption}{Assumptions}
\crefname{assumption}{Assumption}{Assumptions}
\Crefname{equation}{}{}
\crefname{equation}{}{}
\begin{document}

\twocolumn[

\aistatstitle{Auditing Fairness under Unobserved Confounding}

\aistatsauthor{ Yewon Byun \And Dylan Sam \And  Michael Oberst \And Zachary C. Lipton \And Bryan Wilder }
\vspace{3mm}
\aistatsaddress{ Machine Learning Department, Carnegie Mellon University} ]

\begin{abstract}

Many definitions of fairness or inequity involve unobservable causal quantities that cannot be directly estimated without strong assumptions. For instance, it is particularly difficult to estimate notions of fairness that rely on hard-to-measure concepts such as risk (e.g., quantifying whether patients at the same risk level have equal probability of treatment, regardless of group membership).
Such measurements of risk can be accurately obtained when no unobserved confounders have jointly influenced past decisions and outcomes. However, in the real world, this assumption rarely holds. 
In this paper, we show that, surprisingly, one can still compute meaningful bounds on treatment rates for high-risk individuals (i.e., conditional on their true, \textit{unobserved} negative outcome), even when entirely eliminating or relaxing the assumption that we observe all relevant risk factors used by decision makers. We use the fact that in many real-world settings (e.g., the release of a new treatment) we have data from prior to any allocation to derive unbiased estimates of risk. 
This result enables us to audit unfair outcomes of existing decision-making systems in a principled manner.
We demonstrate the effectiveness of our framework with a real-world study of Paxlovid allocation, provably identifying that observed racial inequity cannot be explained by unobserved confounders of the same strength as important observed covariates.

\end{abstract}

\section{INTRODUCTION}

A fundamental problem in the outcomes of decision-making systems across a variety of domains, such as healthcare, housing assistance, and the criminal justice system, is the presence of inequities across demographic lines \citep{nelson2002unequal, tonry2010social, artiga2020disparities, buchmueller2020aca, Shinn2022AllocatingHS}.
To reduce these unfair outcomes, it is essential that we are first able to identify and quantify them appropriately.
We primarily consider settings where we desire a resource to be allocated at equal rates (across groups) to those who would otherwise experience adverse events: a formalization of the idea that we want to allocate to ``high-risk'' individuals.
In healthcare, these members could be individuals who would die without treatment, or in housing, individuals who would become homeless if not provided housing assistance.
We refer to the allocation rate to these types of individuals as the ``treatment rate among the needy'' (see~\cref{def:treatment_rate_among_the_needy}).\footnote{There is a wealth of literature on causal measures of fairness, and our chosen metric, when used to quantify inequity, can be seen as a special case of \textit{counterfacutal equalized odds}, or more specifically, the ``opportunity rate'' as defined in Definition 4.2 of \citet{mishler2021fairness}.} 
Yet, this notion is counterfactual and difficult to measure---once an individual is treated, we cannot say what would have happened had they been denied treatment.

Equity, quantified in these terms, can be estimated from data, but \textit{only} if we observe all confounders---variables that jointly influence both the decision to allocate and the outcome under no allocation. 
Our work fits in the broader literature on causal fairness, a literature that has produced a variety of causality-informed measures of equity, which we further discuss in Section \ref{sec:fairness}. 
Throughout the literature, it is frequently assumed that all confounders \textit{are} observed, permitting the identification of causal fairness measures from data \citep{kusner2017counterfactual, mishler2021fairness, nilforoshan2022causal}.\footnote{There are exceptions to this pattern---\citet{rambachan2022counterfactual}, for instance, is closer in spirit to our work, proposing a sensitivity analysis framework for causal fairness metrics under unobserved confounding. We discuss differences to our approach in detail in~\cref{sec:fairness}.}

\begin{figure*}[t]
    \centering
    \includegraphics[width=.95\textwidth]{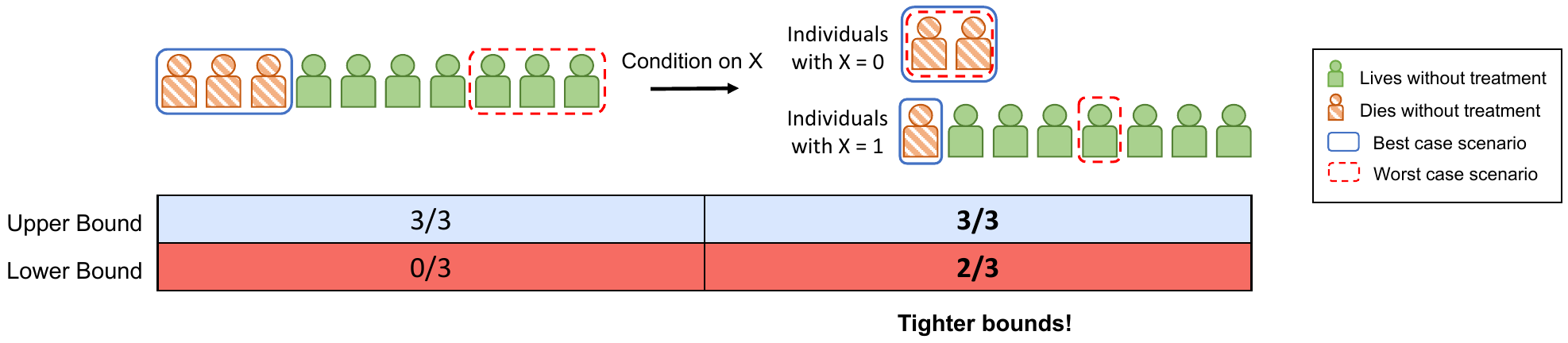}
    \vspace{-3mm}
    \caption{A conceptual figure to build intuition for our main partial identification result (Theorem \ref{theorem:no_assumption}). For simplicity, let the treatment rate $P(T=0)=0.3$ and mortality rate $P(Y(0)=1)=0.3$. We denote upper (blue) / lower (red) bounds on treatment rate among the needy $P(T=1 | Y(0)=1, X)$. In the marginal case (left), we see that the bounds are vacuous.  However, when we exploit covariate information (right) (e.g., selecting 2 samples from X=0 and 1 sample from X=1), we observe bounds that are much tighter.}
    \label{fig:intuition}
\end{figure*}

Yet, in the real world, this assumption rarely holds. In reality, resources are often allocated based on indicators of need or risk that we do not observe in our datasets (``unobserved confounders''), which could lead us to understate or overstate the amount of inequity.
On the one hand, similar rates of allocation across groups could mask inequity, given unobserved differences in need. On the other hand, these differences could explain apparent inequities in allocation across groups. 

To make progress in the face of unobserved confounding, we use the fact that in many real-world settings, we have data from a period where no individuals received resources (e.g., prior to a new drug entering the market, or a similar region where housing assistance is unavailable).  Such data allows us to derive unbiased estimates of what would happen to individuals without the allocation of resources, under an assumption that this baseline risk generalizes to the setting where resources are available. 
Unfortunately, if unobserved confounders exist, we still cannot exactly identify the treatment rate of needy individuals.

In this setting, we show that, interestingly, one can derive meaningful bounds on the treatment rate among the needy, without any assumptions on the strength of unobserved confounders. \Cref{fig:intuition} builds intuition for this result, which is given in~\cref{theorem:no_assumption}. We provide bias-corrected estimators for our bounds that are consistent and asymptotically normal, and extend recent results in the partial identification literature to handle the non-smooth nature of our estimators (\cref{prop:normality-max}) that make them more precise. As a result, our bounds can incorporate machine learning (ML) estimators that converge at slower than parametric rates, while retaining the benefits of asymptotic normality (e.g., confidence intervals). Finally, we derive even tighter bounds that incorporate assumptions on the plausible strength of unobserved confounding with a sensitivity model (\cref{theorem:identification_gamma}) and corresponding estimators that attain similar asymptotic properties to those discussed above.

These results are of immediate practical interest, allowing us to audit unfair outcomes in existing decision-making systems, while accounting for the effects of potential unobserved confounders.
We demonstrate the effectiveness of our framework in a real-world case study, to audit inequities in the allocation of Paxlovid to COVID-19 patients.
Our framework provably identifies and quantifies racial inequity 
in Paxlovid allocation, even if unobserved confounders had effects (on decisions and outcomes) similar to that of 
important observed covariates.
Finally, we provide empirical comparisons on both semi-synthetic and synthetic data, demonstrating that our framework more accurately and more efficiently (i.e., requiring less data) captures ground truth treatment rates, improving upon alternative methods.
In short, our work provides principled conditions under which machine learning estimators can be used as a tool to both identify and quantify inequity in existing decision-making systems for important limited resources.

\section{RELATED WORK}

\textbf{Fairness and Causality}
\label{sec:fairness}
The literature on fairness and decision-making is vast, and we will not claim to summarize it here. Of particular relevance to our work is the literature on \textit{causal fairness} \citep{kilbertus2017avoiding, plecko2022causal, loftus2018causal}, where fairness metrics are defined with respect to e.g., counterfactual outcomes.  

Even in this sub-literature, there are a wealth of ways to characterize fairness, such as counterfactual fairness\footnote{See \citet{nilforoshan2022causal}, \citet{zhang2018fairness}, and Section 4.4.1 of \citet{plecko2022causal} for discussions of the nuances of various definitions of counterfactual fairness.} \citep{kusner2017counterfactual, nilforoshan2022causal, wu2019counterfactual}, counterfactual equalized odds \citep{mishler2021fairness}, and so on \citep{khademi2019fairness}.  
Another line of work, with some overlaps with the concepts above, studies fairness along certain paths in causal graphs \citep{nabi2018fair, schroder2023causal}.
Our choice of metric is similar in spirit to counterfactual equalized odds and is precisely equivalent to the notion of ``opportunity rate'' given by \citet{mishler2021fairness} (see Def. 4.2 in their paper).

There are a variety of research directions pursued in the causal fairness literature, such as learning predictive models that lead to fair decisions, or giving conditions under which various notions of causal fairness can be ``identified'' from data---that is, written in terms of the observed distribution, instead of counterfactual outcomes.  Our work is most similar to a nascent line of work considering scenarios where these measures \textit{cannot} be identified, but can nonetheless be bounded. 
For instance, \citet{kilbertus2020sensitivity} measures the sensitivity of counterfactual fairness to misspecified causal structures via unmeasured confounding.
Perhaps closest to our work is that of \citet{rambachan2022counterfactual}, who provide bounds on causal fairness measures under a different sensitivity analysis framework.  Their framework assumes a bound on the differences in conditional means of the potential outcome, whereas our results are derived from bounds on treatment propensities (see~\cref{sec:sensitivity}). 
However, the most notable distinction from their work is that we derive bounds that partially identify inequity \textit{without} any assumptions on the strength of confounders, under our assumption on the availability of pre- and post-availability data (see~\cref{sec:preliminaries}).

\textbf{Partial Identification and Sensitivity Analysis} 
In statistics and econometrics, \textit{partial identification} refers to the derivation of bounds on causal quantities when the exact value cannot be identified from assumptions \citep{manski2003partial}. \textit{Sensitivity analysis} often refers to the derivation of bounds under assumptions about the ``strength'' of unobserved confounding.  
Sensitivity analysis has been pursued under a variety of models, dating back to \citet{Cornfield1959-4f}.
We will not attempt to summarize the literature here, except to note a few ideas that we draw upon. 
First, one insight in our analysis is that incorporating covariate information can improve the tightness of our bounds, an insight similarly leveraged in recent work \citep{yadlowsky2018bounds, levis2023covariate}. Second, our sensitivity model can be viewed as a variant of the sensitivity model introduced by \citet{tan2006distributional}, although our causal quantity of interest differs substantially, requiring the derivation of novel bounds.  Finally, we draw inspiration from the sensitivity analysis literature to assess the plausibility of our sensitivity parameters via an informal comparison to the strength of observed confounders \citep{frank2000benchmarking,Hsu2013-16}.

\section{PRELIMINARIES}\label{sec:preliminaries}

\begin{comment}
\begin{figure*}[t]
    \centering
    \includegraphics[width=.9\textwidth]{figures/inequity_figures-4.pdf}
    \vspace{-3mm}
    \caption{A conceptual figure to build intuition for our main partial identification result (Theorem \ref{theorem:no_assumption}). For simplicity, let the treatment rate $P(T=0)=0.3$ and mortality rate $P(Y(0)=1)=0.3$. We denote upper (blue) /lower (red) bounds on treatment rate among the needy $P(T=1 | Y(0)=1, X)$. In the marginal case (left), we see that the bounds are vacuous.  However, when we exploit covariate information (right) (e.g., selecting 2 samples from X=0 and 1 sample from X=1), we observe that the bounds are much tighter.}
    \label{fig:intuition}
\end{figure*}
\end{comment}

\paragraph{Notation} 
We use upper-case letters to denote random variables (e.g., $X$), and lower-case letters to denote their realizations (e.g., $x$). We use $X \in \mathcal{X}$ to denote covariates, $T \in \{0, 1\}$ to denote treatment, and $Y \in \{0, 1\}$ to denote a binary outcome. We let $Y = 1$ denote an adverse outcome (e.g., mortality), and $Y = 0$ denote a benign outcome (e.g., survival). We additionally define the potential outcome $Y(0)$ as the outcome of each individual without treatment.

Given our interest in identifying inequity across different subpopulations, we use $G \in \{1, \ldots, K\}$ to denote subpopulation membership, where group membership is a known function of $X$. We define some quantities (e.g., \cref{def:treatment_rate_among_the_needy} and some bounds) in terms of the overall population for simplicity of notation, where the extension to group-wise quantities is straightforward.

We further use $D \in \{0, 1\}$ to denote whether a sample belongs to \textit{pre-availability} or \textit{post-availability} data. 
Pre-availability data ($D = 0$) is drawn from a setting where the resource is not available (e.g., a time period before a drug entered the market) and post-availability data ($D = 1$) from a setting where the resource is available. 
We consider our data to be drawn from a common distribution $P$, where $P(\cdot \mid D = 0)$ and  $P(\cdot \mid D = 1)$ denote pre- and post-availability distributions respectively.

\paragraph{Availability of Treatment and Outcome Data} 
During the pre-availability period ($D = 0$), the treatment is not available by definition, and so $D = 0 \implies T = 0$.  In post-availability data (where $D = 1$), we do not assume access to outcome data---for instance, the outcome of interest may be a long-term outcome not immediately measurable in the post-availability period. As a result, when $D = 1$, we set $Y$ to an arbitrary value.  Because $T$ is fixed when $D = 0$, and because $Y$ is unknown when $D = 1$, we observe data as follows
\begin{equation*}
(X, T, Y) =
\begin{cases}
   (X, 0, Y) & \text{ if $D = 0$ (pre-availability)} \\
   (X, T, \sim) & \text{ if $D = 1$ (post-availability)}
\end{cases}
\end{equation*}
where $\sim$ indicates that $Y$ is not observed.

\section{ANALYSIS OF INEQUITY}

\subsection{Equity in Treatment Allocation}
We first define a notion of effective allocation.
\begin{definition}[Treatment Rate Among the Needy]\label{def:treatment_rate_among_the_needy}
\begin{equation}\label{eq:treatment_rate_among_the_needy}
P(T = 1 \mid Y(0) = 1, D = 1)
\end{equation}
\end{definition}
\Cref{eq:treatment_rate_among_the_needy} captures the proportion of individuals who receive treatment when it is available ($D = 1$), among those who would experience an adverse event $Y(0) = 1$ if they do not receive treatment. This is similar to the notion of ``opportunity rate" in the work of \citet{mishler2021fairness}.
\Cref{def:treatment_rate_among_the_needy} suggests a measure of inequity in treatment allocation, when applied to specific subgroups.

\begin{definition}\label{def:inequity}
We define \textit{inequity in treatment rate among the needy} for a pair of subpopulations $g \neq g'$ as 
\begin{align}
    &|P(T = 1 | Y(0) = 1, D = 1, G = g) \nonumber \\
    &\qquad - P(T = 1 | Y(0) = 1, D = 1, G = g')\label{eq:target}|
\end{align}
\end{definition}

When $Y$ corresponds to mortality, \cref{def:inequity} captures the notion that patients who would die if treatment were withheld should receive a potentially life-saving intervention at equal rates across subgroups $G$.\footnote{We note that one could define other metrics based on potential outcomes, such as seeking equal allocation across individuals who would not only die if treatment were withheld, but who would also survive if given treatment, e.g., $P(T = 1 \mid Y(0) = 1, Y(1) = 0)$.  However, for novel treatments (like Paxlovid in our example) treatment guidelines often focus on treating high-risk patients in the absence of definitive evidence that some patients have substantially different responses to treatment. Moreover, estimating or bounding such quantities would require substantially stronger assumptions than those presented here.}

\subsection{Identification under Strong Assumptions}

We note that it is not possible to directly estimate~\cref{eq:treatment_rate_among_the_needy} and~\cref{eq:target}. In the post-availability setting, we never observe the outcome $Y$. Further, we can never simultaneously observe $T = 1$ and $Y(0)$, which necessitates the use of strong additional assumptions to re-write this quantity in terms of quantities that we do observe.
For instance, one could estimate~\cref{eq:treatment_rate_among_the_needy} directly if one were willing to assume that $X$ captures all variables that influence both treatment assignment and $Y(0)$, often referred to as the assumption of no unmeasured confounding.
\begin{assumption}[No Unmeasured Confounding]\label{asmp:no_unmeasured_confounding}
The untreated outcome is independent of treatment in the post-availability period, given observed covariates, i.e., 
\begin{equation*}
Y(0) \perp T \mid X, D = 1
\end{equation*}
\end{assumption}
A main thesis of this work is that~\cref{asmp:no_unmeasured_confounding} may not be realistic in most real-world settings. \Cref{asmp:no_unmeasured_confounding} is violated if treatment is allocated on the basis of variables other than $X$ (e.g., unobserved financial means), which in turn provide more information on how likely a patient is to experience an adverse outcome without treatment. Given that this assumption is often violated in practice, we will later discuss bounding~\cref{eq:treatment_rate_among_the_needy} under a relaxed version of~\cref{asmp:no_unmeasured_confounding} (\cref{sec:sensitivity}), and even in the case where we drop~\cref{asmp:no_unmeasured_confounding} entirely (\cref{sec:no_assumption_main}).

For now, we state assumptions relating the pre- and post-availability periods, which we maintain throughout.

\begin{assumption}[Consistency]\label{asmp:consistency_pretreatment}
In pre-availability data, we directly observe the untreated potential outcome, i.e., $D = 0 \implies Y = Y(0)$.
\end{assumption}
\Cref{asmp:consistency_pretreatment} is analogous to the assumption of consistency in causal inference and captures the fact that no treatment is available in the pre-availability period, so all outcomes are untreated outcomes by definition.

\begin{assumption}[Covariate Stability]\label{asmp:covariate_stability}
Within each subgroup, the distribution of covariates of needy patients is the same across pre- and post-availability periods, i.e., 
\begin{equation*}
    X \perp D \mid Y(0) = 1, G
\end{equation*}
\end{assumption}
% \Cref{asmp:covariate_stability} is a relatively weak assumption, where 
In other words, the observable characteristics $X$ of needy patients (where $Y(0) = 1$) are distributed the same across the pre- and post-availability periods (e.g., needy patients are young/old at same rates across the two periods).

Given these assumptions, one can directly identify the treatment rate among the needy from~\cref{eq:treatment_rate_among_the_needy}.

\begin{restatable}{proposition}{Identification}\label{prop:identification}
Under~\cref{asmp:no_unmeasured_confounding,asmp:consistency_pretreatment,asmp:covariate_stability}, \cref{eq:treatment_rate_among_the_needy} (conditioned on $G$) can be written as the following functional of the observed distribution $P$
\begin{align}\label{eq:identification}
&P(T = 1 \mid Y(0) = 1, D = 1, G = g) \\
&\quad = E[P(T = 1 \mid X, D = 1) \mid Y = 1, D = 0, G = g] \nonumber 
\end{align}
\end{restatable}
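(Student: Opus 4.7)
The plan is to start from the target quantity $P(T=1 \mid Y(0)=1, D=1, G=g)$ and successively apply the three maintained assumptions to rewrite it in terms of observable distributions. The core idea is a chain of three moves: (i) introduce $X$ by the tower property so that the no-unmeasured-confounding assumption can be applied inside the expectation, (ii) swap $D=1$ for $D=0$ in the outer conditioning distribution via the covariate-stability assumption, and (iii) replace $Y(0)=1$ by $Y=1$ in the pre-availability period via consistency.

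First I would write
\begin{align*}
P(T=1 \mid Y(0)=1, D=1, G=g)
= E\bigl[P(T=1 \mid X, Y(0)=1, D=1) \,\bigm|\, Y(0)=1, D=1, G=g\bigr],
\end{align*}
using the tower property together with the fact that $G$ is a (known) function of $X$. Now \cref{asmp:no_unmeasured_confounding} gives $T \perp Y(0) \mid X, D=1$, so the inner conditional probability simplifies to $P(T=1 \mid X, D=1)$. This leaves
\begin{align*}
P(T=1 \mid Y(0)=1, D=1, G=g)
= E\bigl[P(T=1 \mid X, D=1) \,\bigm|\, Y(0)=1, D=1, G=g\bigr].
\end{align*}

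Next I would replace the conditioning event $D=1$ by $D=0$ in the outer expectation. The integrand $P(T=1 \mid X, D=1)$ is a function of $X$ only, and \cref{asmp:covariate_stability} states $X \perp D \mid Y(0)=1, G$, which means the conditional distribution of $X$ given $Y(0)=1, G=g$ is identical in the pre- and post-availability periods. Hence the expectation of any function of $X$ against that conditional distribution is unchanged when $D=1$ is replaced by $D=0$. Finally, \cref{asmp:consistency_pretreatment} gives $Y = Y(0)$ whenever $D=0$, so the event $\{Y(0)=1, D=0\}$ is the same as $\{Y=1, D=0\}$, and we may replace $Y(0)=1$ by $Y=1$ in the conditioning. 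Chaining these three substitutions yields exactly the right-hand side of \cref{eq:identification}.

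I do not expect a major obstacle here: each step is a direct application of one assumption, and no measure-theoretic subtleties arise because $T$ and $Y$ are binary and $G$ is a deterministic function of $X$. The only point that requires a little care is the justification for swapping $Y(0)$ with $Y$ under the outer conditioning: this relies on consistency being applied \emph{after} the covariate-stability step has moved us into the $D=0$ stratum, since $Y = Y(0)$ only holds when $D=0$.
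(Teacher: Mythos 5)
Your proposal is correct and matches the paper's own proof, which likewise decomposes the target via the tower property over $X$, applies no-unmeasured-confounding to drop $Y(0)$ from the inner conditional, and then uses covariate stability plus consistency to move the outer distribution to $P(X \mid Y=1, D=0, G=g)$. Your version simply spells out the three substitutions (and their ordering) more explicitly than the paper's two-line derivation.
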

This result (with some notational differences) is a known fact in the literature \citep{coston2020counterfactual, mishler2021fairness}. For completeness, we provide the proof in~\cref{sec:identification}.
Notably, as discussed above, this result requires the hard-to-justify assumption that there are no unmeasured confounding variables (\cref{asmp:no_unmeasured_confounding}). In the following sections, we develop bounds under different relaxations of this assumption.

\subsection{Partial Identification under Arbitrary Unmeasured Confounding}\label{sec:no_assumption_main}
To begin, we consider the case where we consider~\cref{asmp:no_unmeasured_confounding} to be unrealistic and drop it entirely.  We demonstrate that it is still possible to obtain informative bounds on the treatment rate in~\cref{eq:treatment_rate_among_the_needy} that can be estimated from data, and provide intuition as to why informative bounds are possible to obtain.  We proceed under one additional assumption linking the pre- and post-availability periods. 
\begin{assumption}[Stable Baseline Risk]\label{asmp:stable_basline_risk}
Across the pre- and post-availability periods, the conditional baseline risk (i.e., the risk of an adverse outcome without treatment) does not change, i.e.,
\begin{equation*}
    Y(0) \perp D \mid X
\end{equation*}
\end{assumption}
This is the key assumption relating the pre- and post-availability periods, which allows us to estimate the baseline risk in the post-availability period, by leveraging data from the pre-availability period.
We expect it to be satisfied when the underlying mechanistic or biological determinants of risk are unchanged around the time period a treatment was introduced. To build further intuition, we depict a causal graph (\cref{fig:example_causal_graph}) where~\cref{asmp:covariate_stability,asmp:stable_basline_risk} hold, but no unmeasured confounding (\cref{asmp:no_unmeasured_confounding}) fails to hold.

To build intuition for our first main result, consider an extreme case where \textit{all} individuals will die without treatment. Here, the treatment rate among the needy is simply given by the observed treatment rate. Our result gives informative bounds in less extreme scenarios: For instance, suppose we knew that out of 100 patients, 90 would die if untreated, and that we have treated 50 patients. In this case, the worst-case scenario is that we have treated all 10 patients who would not die if untreated, but we must have treated at least 40 of the patients who would die if untreated. 

To begin to formalize this idea, consider the simplified scenario where $Y(0) \perp D$, a stronger version of~\cref{asmp:stable_basline_risk}. Further, observe that $P(T = 1, Y(0) = 1 \mid D = 1) \leq P(T = 1 \mid D = 1)$, which yields  
\begin{align*}
    P(T = 1|Y(0) = 1, D = 1) \leq \frac{P(T = 1 \mid D = 1)}{P(Y(0) = 1 \mid D = 0)}
\end{align*}
%\normalsize
where we switch $D = 1$ for $D = 0$ in the denominator due to the assumed independence.\footnote{See Appendix \ref{sec:no_assumption} for the complete derivation of the upper bound, as well as the derivation of the lower bound.} Note that $P(T = 1 \mid D = 1)$ and $P(Y(0) = 1 \mid D = 0)$ are both observable (in the post- and pre- periods respectively). Unfortunately, this bound may be vacuous on its own (e.g. if $P(Y(0) = 1)$ is small). We can sharpen it by noting that the same inequality holds at every value of the covariates $X$, under~\cref{asmp:stable_basline_risk}, such that
{
% \scriptsize
\begin{align*}
    P(T = 1|Y(0) = 1, D = 1, X) \leq \frac{P(T = 1 \mid X, D = 1)}{P(Y(0) = 1 \mid X, D = 0)}
\end{align*}
}

Now, given calibrated classifiers for the treatment and outcome (to estimate the numerator and denominator), the bound will become tighter. Averaging over the appropriate distribution for $X$ then yields a tighter overall bound. To further build intuition, see~\cref{fig:intuition}.  This idea is formalized in the following theorem.\footnote{We also provide an alternative expression of the bounds in Appendix \ref{sec:alt_no_assumption}, clarifying how these bounds can be viewed as a weighted average of the conditional bounds over $X$, to build further intuition.}

\begin{restatable}[Bounds under arbitrary unmeasured confounding]{theorem}{Bounds}
\label{theorem:no_assumption}
Consider the setting described in~\cref{sec:preliminaries}. Under Assumptions \ref{asmp:consistency_pretreatment}, \ref{asmp:covariate_stability} and \ref{asmp:stable_basline_risk}, and if $P(Y(0) = 1 \mid D = 1, X = x) > 0$, then
\begin{equation*}
 \psi^l \leq P(T = 1 | Y(0) = 1 , D = 1) \leq \psi^u
\end{equation*}
where
\begin{align*}
    &\psi^l \coloneqq E[\max\{\theta_1^l(X), \theta_2^l(X)\}] \quad \psi^u \coloneqq E[\min\{\theta_1^u(X), \theta_2^u(X)\}]\\
    & \theta_1^l(X) \coloneqq \frac{P(D=0 | X)}{P(Y=1, D=0)}\Big( P(T = 1 | D=1, X) \\
  & \qquad \qquad \qquad + P(Y=1 | D=0, X) - 1 \Big) \\  
  &\theta_2^l(X) \coloneqq 0 \\
  &\theta_1^u(X) \coloneqq \frac{P(D=0 | X)P(T = 1| D=1, X)}{P(Y=1, D=0)} \\
  & \theta_2^u(X) \coloneqq \frac{P(D=0 | X) P(Y =1 | D=0, X)}{P(Y = 1, D=0)}
\end{align*}
\end{restatable}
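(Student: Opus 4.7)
The plan is to rewrite $P(T=1\mid Y(0)=1, D=1)$ as an expectation against the \emph{marginal} distribution of $X$ whose integrand is the unidentifiable joint $P(T=1, Y(0)=1\mid X, D=1)$ times a fully identifiable weight, and then to apply the Fréchet--Hoeffding inequality to that joint pointwise in $X$.

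First I would write $P(T=1\mid Y(0)=1, D=1) = E\bigl[P(T=1\mid X, Y(0)=1, D=1)\bigm| Y(0)=1, D=1\bigr]$. Covariate stability (\cref{asmp:covariate_stability}) swaps the conditioning distribution for $P(X\mid Y(0)=1, D=0)$, and consistency together with $D=0\Rightarrow T=0$ (\cref{asmp:consistency_pretreatment}) turns this into $P(X\mid Y=1, D=0)$. A single Bayes-rule rewrite gives $P(X=x\mid Y=1, D=0) = P(Y=1\mid X=x, D=0)\,P(D=0\mid X=x)\,P(X=x)/P(Y=1, D=0)$, so the target becomes an expectation over the marginal $P(X)$:
\[
E_X\!\left[P(T=1\mid X, Y(0)=1, D=1)\cdot\frac{P(Y=1\mid X, D=0)\,P(D=0\mid X)}{P(Y=1, D=0)}\right].
\]

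Next, stable baseline risk (\cref{asmp:stable_basline_risk}) combined with consistency gives $P(Y=1\mid X, D=0) = P(Y(0)=1\mid X, D=1)$, so inside the expectation the two factors involving $T$ and $Y$ collapse to the joint $P(T=1, Y(0)=1\mid X, D=1)$. With $a(X)\coloneqq P(T=1\mid X, D=1)$ and $b(X)\coloneqq P(Y=1\mid X, D=0)$, Fréchet--Hoeffding yields the pointwise sandwich
\[
\max\{0,\, a(X)+b(X)-1\}\ \le\ P(T=1, Y(0)=1\mid X, D=1)\ \le\ \min\{a(X),\, b(X)\},
\]
while the positivity hypothesis $P(Y(0)=1\mid D=1, X)>0$ ensures the conditional probabilities are well-defined.

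Finally, the weight $P(D=0\mid X)/P(Y=1, D=0)$ is non-negative, so it can be distributed inside both the $\min$ and the $\max$, producing precisely $\min\{\theta_1^u(X), \theta_2^u(X)\}$ as an upper bound and $\max\{\theta_1^l(X), \theta_2^l(X)\}$ as a lower bound on the integrand (with $\theta_2^l=0$ inherited from the $0$ in the Fréchet--Hoeffding lower bound). Taking $E_X$ then delivers $\psi^l \le P(T=1\mid Y(0)=1, D=1) \le \psi^u$. The main subtlety is the first step: the three assumptions must be threaded in exactly the right order to land on the marginal $P(X)$ as the reference measure, since that is the unique choice that makes the factor $P(D=0\mid X)/P(Y=1, D=0)$ appear cleanly in the $\theta$'s; everything after that is pointwise Fréchet--Hoeffding plus routine algebra.
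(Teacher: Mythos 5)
Your proposal is correct and follows essentially the same route as the paper: Fréchet--Hoeffding applied pointwise in $X$ to the joint $P(T=1, Y(0)=1\mid X, D=1)$, the three assumptions used to pass between pre- and post-availability periods, and a change of measure to the marginal of $X$ via the weight $P(Y=1, D=0\mid X)/P(Y=1, D=0)$. The only difference is cosmetic --- you keep the joint and multiply by the weight rather than first dividing by $P(Y(0)=1\mid D=1, X)$ and re-weighting as the paper's main proof does, which matches the weighted-average reformulation the paper itself gives in its appendix.
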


The proof is given in Appendix \ref{sec:no_assumption}. At a high level, 
the max/min structure in each bound ($\psi^l, \psi^u$) arises from the complementary fact that the probabilities are lower- and upper-bounded by both a quantity we develop and by 0 and 1 (respectively).
The max/min takes the tighter of these two bounds at every level of the covariates. 
The underlying intuition for our result is that we incorporate information about $X$ in our bound, and estimate these quantities (e.g., $\theta_1^l(X), \theta_2^l(X), \theta_1^u(X), \theta_2^u(X)$) with machine learning (ML) models. With these estimates, we compute the expectation over the conditional distribution over $X$ to produce our bounds $\psi^l$ and $\psi^u$. This results in a tighter bound, when compared to using population-level bounds, e.g., only looking at the marginals over $T$ and $Y$. These bounds are likely to be meaningful (non-vacuous) when treatment rates are relatively lower than rates of adverse untreated outcomes (for the upper bound) and when the rates of treatment and adverse untreated outcomes are not too low (for the lower bound).

We remark that Theorem \ref{theorem:no_assumption} expresses our upper and lower bounds only in terms of functions of the observed data, i.e., potential outcomes do not appear in the expression. This establishes that the bounds are identified from the observed data (and without any assumption on the presence of confounders).

\begin{figure}[t]
\begin{center}
\begin{tikzpicture}[
  obs/.style={circle, draw=gray!90, fill=gray!30, very thick, minimum size=5mm}, 
  uobs/.style={circle, draw=gray!90, fill=gray!10, dotted, minimum size=5mm}, 
  bend angle=30]
  \node[obs] (X) {$X$} ;
  \node[obs] (Y) [right=of X] {$Y$} ;
  \node[obs] (T) [above=of Y]  {$T$};
  \node[uobs] (U) [above=of X] {$C$} ;
  \node[obs] (D) [left=of U]  {$D$};
  \draw[-latex, thick] (X) -- (Y);
  \draw[-latex, thick] (X) -- (T);
  \draw[-latex, thick] (D) to[bend left] (T);
  \draw[-latex, thick] (T) -- (Y);
  \draw[-latex, thick] (U) -- (Y);
  \draw[-latex, thick] (U) -- (X);
  \draw[-latex, thick] (U) -- (T);
\end{tikzpicture}
\end{center}
\caption{A causal graph consistent with Assumptions~\ref{asmp:covariate_stability} and~\ref{asmp:stable_basline_risk}, even given unobserved (light gray) confounders $C$. Dark gray variables are observed. This causal structure is sufficient, but not necessary, for our assumptions to hold: See~\cref{app:causal_swig} for more details.}%
\label{fig:example_causal_graph}
\end{figure}
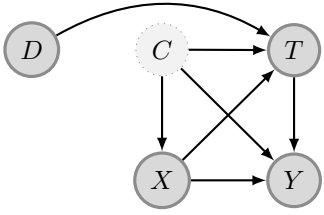

\paragraph{Estimation of Partial Identification Bounds}

Given the identification results in \cref{theorem:no_assumption}, we are ready to construct estimators of the upper and lower bounds $\psi^u, \psi^l$.  We define the following short-hand for the relevant conditional distributions
\begin{align}
\mu(X) &\coloneqq E[Y \mid D = 0, X ] \\
\pi(X) &\coloneqq E[T \mid D = 1, X ]\\
g(X) &\coloneqq E[D = 0 \mid X ]
\end{align}
These conditional expectations can be estimated by training classifiers $\muh, \pih$ on the pre- and post-availability data respectively, and $\hat{g}$ to distinguish the two. These are referred to as ``nuisance functions'', quantities that we have to estimate as part of estimating our bounds, but which are not of intrinsic interest. The simplest strategy would be to ``plug-in'' such estimators wherever the corresponding conditional expectation appears in the expression for $\psi^l$ or $\psi^u$. However, it is difficult to provide guarantees for this plug-in estimator, as ML models generally converge slower than $O(n^{-\frac{1}{2}})$, creating substantial bias in our estimate of the bound. 

Our proposed method is instead based on influence functions and semiparametric estimation to find and subtract a first-order approximation to the bias of the plug-in estimator. 
The corresponding estimators will then converge at $O(n^{-\frac{1}{2}})$ rates even if the ML models converge more slowly (as nonparametric methods typically will) and enable us to give valid confidence intervals based on asymptotic normality. 

To present our estimator for the upper bound, define $d^u(x) \in \arg \min\{\theta_1^u(x), \theta_2^u(x)\}$ to be the identity of a bound achieving the minimum value at $x$, with $\hat{d}^u(x)$ being the same quantity estimated from the plugin estimate of $\hat{\theta}$. Our proposed estimator is
\begin{align}
    &\varphi^u(P, d) \coloneqq \theta_{d(X)}^u(X) + \lambda^u_{d(X)}(X, Y, D, T) \label{eq:bias_corrected_upper} \\
    &\hat{\psi}^u(\hat{P}) \coloneqq E_{\hat{P}}\left[\varphi(\hat{P}, \hat{d}^u)\right] \nonumber
\end{align}
where we will employ the common strategy of estimating the expectations and the nuisance functions in $\hat{\theta}$ and $\hat{\lambda}$ on independent samples (averaging over $K$-fold cross-validation). We now present the detailed construction and analysis of this estimator, including the bias-correction term $\lambda$ and asymptotic guarantees. Our estimator for the lower bound is defined analogously, using an $\arg\max$.

\paragraph{Bias-corrected estimators}
The influence function can be used to provide a first-order approximation to the bias of the estimator; correcting for this bias will (hopefully) leave a remainder that depends in second order on error of the nuisance functions. Let $\theta_1^u = E[\theta_1^u(X)]$ and $\theta_2^u = E[\theta_2^u(X)]$. We now derive the influence functions corresponding to $\theta_1^u$ and $\theta_2^u$.

\begin{restatable}{lemma}{InfluenceUpper}\label{lemma:influence_upper}
    The influence functions for $\theta_1^u$ and $\theta_2^u$ are given by
    \small
    \begin{align*}
        & IF(\theta_1^u) = \\
        & \frac{1}{P(Y=1, D=0)} \Big(-\frac{1[Y=1, D=0]}{P(Y=1, D=0)} E_P[g(X)\pi(X)] + \\
        & \qquad + g(X)\pi(X) + 1[D=1](T - \pi(X)) \frac{g(X)}{1 - g(X)} + \\
        & \qquad + \pi(X)(1[D=0] - g(X))\Big) \\
        & IF(\theta_2^u) = \frac{1[D=0]}{P(Y=1, D=0)}\Big( \mu(X) \\
        & \qquad - E[\mu(X) | D=0] \left( \frac{1[Y=1]}{P(Y=1|D=0)}\right) + (Y - \mu(X)) \Big).
    \end{align*}
    \normalsize
\end{restatable}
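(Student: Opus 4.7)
The plan is to compute each influence function via the Gateaux derivative along the parametric submodel $P_t = (1-t)P + t\delta_O$, decomposing $\theta_j^u = N_j/\Delta$ with $\Delta = P(Y=1, D=0)$, $N_1 = E_P[g(X)\pi(X)]$, and $N_2 = E_P[g(X)\mu(X)]$. The quotient rule yields
\begin{equation*}
IF(\theta_j^u) = \frac{1}{\Delta}\!\left(IF(N_j) - \tfrac{N_j}{\Delta}\,IF(\Delta)\right),
\end{equation*}
and since $\Delta$ is a simple expectation, $IF(\Delta)(O) = 1[Y=1, D=0] - \Delta$; the work therefore reduces to computing $IF(N_1)$ and $IF(N_2)$.

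For a functional of the form $E_P[\alpha_P(X)\beta_P(X)]$ with both $\alpha, \beta$ $P$-dependent conditional expectations, the pathwise derivative splits into three contributions: the outer-distribution piece $\alpha(X)\beta(X) - E[\alpha\beta]$, and two nuisance contributions obtained by integrating the pointwise IF of each conditional expectation against the outer measure weighted by the other factor. For $N_1$, the IF of $\pi(x)=E[T\mid X=x, D=1]$ integrated against $g(x)\,dP(x)$ collapses to the standard residualized form $g(X)\frac{1[D=1]}{1-g(X)}(T-\pi(X))$, and the IF of $g(x)$ integrated against $\pi(x)\,dP(x)$ gives $\pi(X)(1[D=0]-g(X))$. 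Substituting into the quotient rule, the $\pm N_1$ constant terms cancel and the stated expression for $IF(\theta_1^u)$ results.

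For $\theta_2^u$ the same template applies with $\mu$ in place of $\pi$ and with $g(X)$ in place of $1-g(X)$ as the conditioning denominator: the $\mu$-contribution simplifies to $1[D=0](Y-\mu(X))$ and the $g$-contribution is $\mu(X)(1[D=0]-g(X))$. Combining these via the quotient rule, and then using the model identities $E[g(X)\mu(X)] = \Delta$ (so $\theta_2^u \equiv 1$ and several terms collapse) and $E[\mu(X)\mid D=0] = P(Y=1\mid D=0)$, reorganizes the expression into the stated form; this representation evaluates to zero at the true $P$, consistent with the IF of a constant functional, but is nonzero once plug-in estimators $\muh, \hat{g}$ are substituted, thereby providing the intended bias correction. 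The main obstacle is the careful bookkeeping of the singular nuisance-IFs — in particular matching each weight to the correct conditioning event (so that $g(X)$ or $1-g(X)$ cancels as designed after integration against the outer measure) — since a miscount of these denominators is the typical failure mode; presenting the derivation via the pointwise formula $IF(\pi(x))(O) = \tfrac{1[X_O=x, D_O=1]}{P(X=x, D=1)}(T_O - \pi(x))$ (with the analogue for $g$ and $\mu$) makes the cancellations transparent.
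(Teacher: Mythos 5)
Your proposal is correct, and for $\theta_1^u$ it is essentially the paper's own derivation: the paper applies the product rule to $\frac{1}{P(Y=1,D=0)}\cdot E_P[g(X)\pi(X)]$ (algebraically your quotient rule), uses $IF(P(Y=1,D=0)) = 1[Y=1,D=0]-P(Y=1,D=0)$, and expands $IF(E_P[g(X)\pi(X)])$ into exactly your three contributions (outer-measure piece plus the two residualized nuisance terms, with the $\pm E_P[g\pi]/P(Y=1,D=0)$ constants cancelling). For $\theta_2^u$ you take a genuinely different decomposition: you write $\theta_2^u = E_P[g(X)\mu(X)]/P(Y=1,D=0)$ and observe that $IF(N_2)$ collapses to $1[D=0]Y - P(Y=1,D=0)$, so the whole influence function is identically zero; the paper instead parametrizes $\theta_2^u$ as $E[\mu(X)\mid D=0]/P(Y=1\mid D=0)$ and grinds through a four-term expansion to land on the unsimplified representation stated in the lemma (which, as you note, is also pointwise zero at the true $P$ since $E[\mu(X)\mid D=0]=P(Y=1\mid D=0)$ and $Y$ is binary). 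Your route is shorter and makes the degeneracy of this term transparent; the paper's route keeps the residual structure $\mu(X)$, $Y-\mu(X)$, $E[\mu(X)\mid D=0]$ explicit, which is what feeds its Proposition 2 showing the one-step estimator of this component is the constant $1$. The two representations agree as functions of the data, so your derivation verifies the lemma; the only thing you gloss over is that "reorganizing" your exact zero into the paper's particular non-simplified form is a choice of representation rather than a forced outcome of the calculation.
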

To obtain first-order bias-corrected estimators, we set $\lambda^u$ in~\cref{eq:bias_corrected_upper} to the values
\begin{align*}
    &\lambda_1^u = IF(\theta_1^u) & \text{and} &  &\lambda_2^u = IF(\theta_2^u).
\end{align*}

Similarly, let $\theta_1^l = E[\theta_1^l(X)]$ and  $\theta_2^l = E[\theta_2^l(X)]$. Then, we can derive their influence functions as follows

\begin{restatable}{lemma}{InfluenceLower}\label{lemma:influence_lower}
    The influence functions for $\theta_1^l$ and $\theta_2^l$ are given by
    \small
    \begin{align*}
        & IF(\theta_1^l) = IF(\theta_1^u) +  \frac{1}{P(Y=1,D=0)} \Big( \frac{-1[Y=1,D=0]}{P(Y=1,D=0)} \\
        & \qquad \qquad \qquad \cdot E_{P}[g(X)(\mu(X) - 1)] + 1[D=0](Y-1) \Big)\\
        & IF(\theta_2^l) = 0.
    \end{align*}
    \normalsize
\end{restatable}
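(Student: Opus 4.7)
The plan is to reduce the computation to \cref{lemma:influence_upper} via the algebraic identity
\[
\theta_1^l(X) = \theta_1^u(X) + \frac{g(X)(\mu(X)-1)}{P(Y=1,D=0)},
\]
which follows immediately from the definitions in \cref{theorem:no_assumption}. Writing $B \coloneqq P(Y=1,D=0)$ and $A \coloneqq E[g(X)(\mu(X)-1)]$, linearity of the pathwise derivative gives $IF(\theta_1^l) = IF(\theta_1^u) + IF(A/B)$, and the first term is already supplied by \cref{lemma:influence_upper}. So the only remaining work is to compute $IF(A/B)$.

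The simplification that makes this clean is the tower property: $E[g(X)\mu(X)] = E[P(D=0\mid X)\,E[Y\mid D=0,X]] = E[1[D=0]Y] = B$. Hence $A = B - P(D=0)$, a smooth functional of the marginal distribution of $(Y,D)$, which avoids having to differentiate the nuisance functions $g$ and $\mu$ along a submodel. The mean-functional rule then yields $IF(A) = 1[D=0](Y-1) - A$, using the identity $1[Y=1,D=0] - 1[D=0] = 1[D=0](Y-1)$ for binary $Y$, while $IF(B) = 1[Y=1,D=0] - B$. Substituting into the quotient rule $IF(A/B) = IF(A)/B - (A/B^2)\,IF(B)$ and simplifying, the $\pm A/B$ pieces cancel, leaving precisely $\tfrac{1[D=0](Y-1)}{B} - \tfrac{A\cdot 1[Y=1,D=0]}{B^2}$, which is exactly the extra contribution stated in the lemma.

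The second claim is immediate: $\theta_2^l(X)\equiv 0$ by definition, so $\theta_2^l$ is identically zero as a functional of $P$ and its influence function is $0$. The main ``obstacle'' is purely bookkeeping---verifying that the constant terms from the quotient rule cancel exactly. The real conceptual content is the tower-property collapse of $A$ above; without it, one would be forced to compute pathwise derivatives of $g$ and $\mu$ along parametric submodels, which reproduces the same answer via a substantially longer calculation.
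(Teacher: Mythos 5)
Your proposal is correct and reaches exactly the stated expression, but it takes a genuinely shorter route than the paper for the key computation. The paper uses the same top-level decomposition $IF(\theta_1^l) = IF(\theta_1^u) + IF\bigl(E_P[g(X)(\mu(X)-1)]/P(Y=1,D=0)\bigr)$, but then evaluates the second influence function by explicitly differentiating along a parametric submodel through the marginal of $X$ and the nuisance functions $g$ and $\mu$ (producing four terms that must then be simplified and cancelled). Your observation that $E_P[g(X)\mu(X)] = E_P[1[D=0]Y] = P(Y=1,D=0)$ by the tower property, so that $E_P[g(X)(\mu(X)-1)] = P(Y=1,D=0) - P(D=0)$ identically in $P$, legitimately sidesteps all of that: since influence functions depend only on the functional and not on how it is written, you may differentiate the collapsed form, and the elementary mean-functional and quotient rules then give the answer in a few lines, with the $\pm A/B$ cancellation you describe. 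The algebra checks out, including the identity $1[Y=1,D=0] - 1[D=0] = 1[D=0](Y-1)$ for binary $Y$, and the $\theta_2^l$ claim is trivial in both treatments. What the paper's longer computation buys is that the same machinery carries over to the $\gamma$-dependent terms $\theta_3^{u,\gamma}$ and $\theta_3^{l,\gamma}$ in Appendix~\ref{appx:gamma_estimation}, where no such tower-property collapse is available; what your route buys is brevity and an independent consistency check on the paper's derivation.
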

To obtain first-order bias-corrected estimators, we set
\begin{align*}
    &\lambda_1^l= IF(\theta_1^l) & \text{and} &  &\lambda_2^l = 0.
\end{align*}
We leave the proofs of the influence functions and bias-corrected estimators of our upper and lower bounds to Appendix~\ref{appx:influence_upper} and~\ref{appx:influence_lower}, respectively.

\paragraph{Asymptotics for nonsmooth bounds}

Providing inferential guarantees for our final bounds is complicated by the fact that each is the expectation of a nonsmooth function, i.e., averaging over a max or min operator. As we have derived influence functions for the smooth functions $\theta_1^l$ and $\theta_2^l$, simple asymptotic normality results (albeit for weaker bounds) can be obtained by dropping the min and max, averaging only over one or the other separately. These can be obtained via standard techniques, and are given in Appendix~\ref{sec:plugin}.

For the stronger nonsmooth bounds, we will need an additional assumption to guarantee asymptotic normality and provide valid confidence intervals. Our results build on a framework introduced by \citet{levis2023covariate} in the context of estimating bounds in instrumental variable models. They show that bounds with a similar expectation-of-max structure can be estimated under a margin condition which requires that the terms appearing in the max (or min) are separated with sufficiently high probability. We generalize their framework beyond the instrumental variable setting to provide conditions for estimation of any expectation-of-max structure where the terms inside the max admit first-order bias-corrected estimators. Suppose we want to estimate a bound of the form $E[\max_{j = 1...J} \theta_j(X)]$ (in general, we can allow more than two components, although this is all we use so far). We adopt a margin assumption similar to \citet{levis2023covariate}, itself inspired by similar assumptions used in a variety of other statistical settings \citep{audibert2007fast,luedtke2016statistical,kennedy2020sharp}. 
\begin{assumption} \label{assumption:margin}
  For some fixed $\alpha > 0$, $P\left[\min_{j \neq d(X)} \theta_{d(X)}(X) - \theta_j(X) \leq t\right] \lesssim t^{\alpha}$ 
\end{assumption}

This condition will be satisfied when the distribution of $\theta_{d(X)}(X) - \theta_j(X)$ has bounded density near 0. With this condition, we obtain the following result for the stronger estimator of the upper bound.
\begin{restatable}[Asymptotic Normality of Estimators]{theorem}{Normality}
\label{prop:normality-max}
    Let $\hat{\theta}$ denote the plugin estimate of any of the individual components of each bound. Under the conditions that Assumption \ref{assumption:margin} is satisfied, $\mu$ and $g$ are lower bounded, and each $\hat{\theta}$ is consistent (i.e., $||\hat{\theta} - \theta|| = o_P(1)$), the error of the estimator satisfies. 
    \begin{align*}
        \hat{\psi}^u - \psi^u  = O_P&\Big(||\hat{\theta}^u_j - \theta^u_j||_\infty^{1 + \alpha} \\ &+ \max_{j = 1, \ldots, J}E_P[\hat{\theta}^u_j + \hat{\lambda}^u_j - \theta^u_j]\Big) + O_P(n^{-\frac{1}{2}}) 
    \end{align*}
    Provided that $\hat{\pi}$, $\hat{\mu}$, and $\hat{g}$ converge at a $o_P(n^{-\frac{1}{4}})$ rate, and the plugin estimators satisfy $||\hat{\theta}^u_j - \theta^u_j||_\infty^{1 + \alpha} = o_P(n^{-\frac{1}{2}})$, then $\hat{\psi}_u$ is asymptotically normal with 
    \begin{align*}
        \sqrt{n}(\hat{\psi}^u - \psi^u ) \to N(0, Var(\varphi(P, d))).
    \end{align*}
\end{restatable}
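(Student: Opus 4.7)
The plan is to follow the framework of \citet{levis2023covariate}, which handles expectations of a max/min via a ``margin'' argument, and combine it with the standard semiparametric bias-correction analysis enabled by the influence functions from \cref{lemma:influence_upper}. The crucial decomposition is
\begin{align*}
    \hat{\psi}^u - \psi^u
    &= \underbrace{E_{\hat{P}}[\varphi^u(\hat{P}, \hat{d}^u)] - E_P[\varphi^u(P, \hat{d}^u)]}_{\text{(A) estimation error given a fixed index rule}} \\
    &\quad + \underbrace{E_P\bigl[\varphi^u(P, \hat{d}^u) - \varphi^u(P, d^u)\bigr]}_{\text{(B) penalty for misidentifying the argmin}}.
\end{align*}
Term (A) is the error of estimating a smooth functional when the selector $\hat{d}^u$ is treated as fixed. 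Term (B) captures the cost of having chosen the wrong branch of the min, which is the genuinely non-smooth part and where the margin condition (\cref{assumption:margin}) does the work.

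For (A), I would use $K$-fold cross-fitting so $\hat{d}^u$ and the nuisances $(\hat{\mu}, \hat{\pi}, \hat{g})$ are independent of the fold over which the empirical mean is taken. Conditional on $\hat{d}^u$, the integrand $\varphi^u(P, \hat{d}^u) = \theta^u_{\hat{d}^u(X)}(X) + \lambda^u_{\hat{d}^u(X)}(\cdot)$ is a pathwise-differentiable functional whose influence function (restricted to the selected branch) is exactly the $\lambda^u_j$ derived in \cref{lemma:influence_upper}. The standard one-step/von Mises expansion then gives
\begin{equation*}
    (A) = (E_{\hat{P}} - E_P)[\varphi^u(P, \hat{d}^u)] + R_n,
\end{equation*}
where the first piece is an i.i.d.\ average with variance $\mathrm{Var}(\varphi^u(P, d^u)) + o(1)$ (here the consistency $\hat{d}^u \to d^u$ in probability, combined with bounded integrands from the lower-bound assumptions on $\mu$ and $g$, lets me replace $\hat{d}^u$ by $d^u$ asymptotically). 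The remainder $R_n$ is a sum of products of nuisance errors; controlling it via Cauchy--Schwarz and the $o_P(n^{-1/4})$ rate assumption on $\hat{\mu}, \hat{\pi}, \hat{g}$ gives $R_n = \max_j E_P[\hat{\theta}^u_j + \hat{\lambda}^u_j - \theta^u_j] = o_P(n^{-1/2})$.

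For (B), the main idea is pointwise: on the event $\{\hat{d}^u(X) \neq d^u(X)\}$ the selector can only be wrong when the true gap is small, specifically $\min_{j \neq d^u(X)} \theta^u_{d^u(X)}(X) - \theta^u_j(X) \leq 2\|\hat{\theta}^u - \theta^u\|_\infty$, because otherwise the ordering is preserved. On that event the incurred cost is itself at most $2\|\hat{\theta}^u - \theta^u\|_\infty$. Combining with \cref{assumption:margin},
\begin{align*}
    |(B)|
    &\leq 2\|\hat{\theta}^u - \theta^u\|_\infty \cdot P\bigl(\text{gap} \leq 2\|\hat{\theta}^u - \theta^u\|_\infty\bigr) \\
    &\lesssim \|\hat{\theta}^u - \theta^u\|_\infty^{1+\alpha},
\end{align*}
which yields the first term in the stated rate. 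Under the assumed $\|\hat{\theta}^u_j - \theta^u_j\|_\infty^{1+\alpha} = o_P(n^{-1/2})$, this vanishes faster than $n^{-1/2}$.

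Putting the two pieces together gives the rate statement. Asymptotic normality then follows from a standard CLT applied to the empirical average $(E_{\hat{P}} - E_P)[\varphi^u(P, d^u)]$, with all remaining terms being $o_P(n^{-1/2})$; Slutsky and the fact that $\varphi^u(P, \hat{d}^u) \to \varphi^u(P, d^u)$ in $L^2$ (again using the margin condition to control discrepancies on the measure-zero-in-the-limit event $\{\hat{d}^u \neq d^u\}$) deliver the stated variance. The main obstacle I anticipate is the careful handling of term (B): one has to ensure that the $L^\infty$ error of the plug-in $\hat{\theta}^u$ genuinely controls pointwise misclassification uniformly, and that the margin exponent $\alpha$ combines cleanly with the nuisance rates without introducing hidden dependence through the shared use of $(\hat{\mu}, \hat{\pi}, \hat{g})$ in both $\hat{\theta}^u$ and $\hat{d}^u$ — cross-fitting is essential for decoupling this dependence.
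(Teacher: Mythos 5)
Your proposal is correct and follows essentially the same route as the paper's proof in \cref{sec:margin}: the same margin-based bound on the argmin-misidentification penalty (on the event $\{\hat d \neq d\}$ the true gap is at most $2\|\hat\theta - \theta\|_\infty$, the incurred cost is bounded by the same quantity, and \cref{assumption:margin} converts this into $\|\hat\theta-\theta\|_\infty^{1+\alpha}$), the same product-of-nuisance-errors control of the one-step remainder under the $o_P(n^{-1/4})$ rates, and the same cross-fitting plus $L^2$-consistency argument (again via the margin condition) that reduces the leading term to a CLT for $\varphi(W;P,d)$. The only substantive difference is bookkeeping: the paper splits the error into an empirical-process term $R_1$, a bias term $R_2$, and a centered average, and in doing so keeps the correction terms $\lambda_{\hat d(W)}$ grouped with the second-order bias rather than with your term (B) --- your pointwise bound $2\|\hat\theta^u-\theta^u\|_\infty$ controls only the $\theta$ part of $\varphi^u(P,\hat d^u)-\varphi^u(P,d^u)$, so to avoid paying an $O(P(\hat d \neq d))$ penalty for the $\lambda$ difference you need the paper's regrouping, which exploits $E_P[\lambda_j(W;P)]=0$ and bounds $E_P[\lambda_j(W;\hat P)+\theta_j(W;\hat P)-\theta_j(W;P)]$ branch by branch.
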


The full proof is contained in Appendix \ref{sec:margin}. The error in our bias-corrected estimator of $\theta_1^u$ reduces to a sum of (i) a term on the order of $P(Y=1, D=0) - \hat{P}(Y=1, D=0)$, which converges at a parametric rate and (ii) a product of differences in $(\hat{\pi} - \pi)$ and $(\hat{g} - \hat{g})$. As such, we only require each of these estimators to converge at the slower rate of $o_P(n^{-\frac{1}{4}})$ to achieve our fast rate. The plugin estimators $\hat{\theta}$ may also converge at slower rates depending on $\alpha$ in the margin condition, e.g., if $\alpha \geq 1$ then $o_P(n^{-\frac{1}{4}})$ suffices for them as well. 

As asymptotic normality holds, we can construct standard confidence intervals for the relevant estimator $\hat{\psi}$. First, we can obtain a consistent estimator of the variance of $\hat{\psi}$ as
\begin{align*}
    \hat{\sigma}^2 & = \frac{1}{n} \sum_{i=1}^n \left( \hat{\psi}(X_i) - \frac{1}{n}\sum_{j=1}^n \hat{\psi}(X_j) \right)^2 ,
\end{align*}
which results in a confidence interval
\begin{align*}
    \widehat{CI}_{\psi^{l}, \psi^{u}} = \left[ \hat{\psi^{l}} - z_{1 - \alpha / 2} \frac{\hat{\sigma}^l}{\sqrt{n}}, \hat{\psi^{u}} + z_{1 - \alpha / 2} \frac{\hat{\sigma}^u}{\sqrt{n}}\right],
\end{align*}

where $\hat{\sigma}^l$ and $\hat{\sigma}^u$ are the estimated standard deviations of $\psi^l$ and $\psi^u$.
Therefore, we can use our estimators, and their corresponding confidence intervals to provide confidence bounds on the treatment allocation rates of interest. When these intervals are disjoint and non-overlapping across groups, our results suggest the presence of inequity (i.e., when our assumptions hold).

\subsection{Sensitivity Analysis under Bounded Confounding}\label{sec:sensitivity}

If it is plausible to impose an assumption that confounding in treatment assignment is bounded, we can in turn obtain tighter bounds on our estimand of interest. We introduce a sensitivity parameter $\gamma$ that captures the extent of the impact of the potential outcome $Y(0)$ on treatment assignment $T$, similar in spirit to the sensitivity model used by \citet{tan2006distributional}, adapted to our problem setting. This model allows us to, under the assumption that confounding is limited, assess whether there are verifiable discrepancies in allocation rates across subgroups. With this framework, we can vary $\gamma$ over a range of values to determine to how much confounding our finding is robust. 

\begin{definition}\label{assumption:gamma}
    We define a \textit{sensitivity parameter} $\gamma$ as
\begin{equation*}
    \frac{1}{\gamma} \leq \frac{P(T = 1 | Y(0)=0, D=1, X)}{P(T=1| Y(0) = 1, D=1, X)} \leq \gamma.
\end{equation*}
\end{definition}

We note that $\gamma = \infty$ is equivalent to arbitrary unmeasured confounding. In this scenario, we can recover the result in Theorem \ref{theorem:no_assumption}. 
Assuming a finite value of $\gamma$, we obtain the following stronger upper and lower bounds:
\begin{restatable}[Bounds with $\gamma$]{theorem}{IdentificationGamma}\label{theorem:identification_gamma}
    Using Definition \ref{assumption:gamma}, we achieve the following set of bounds
    \footnotesize
    $$\psi^{l, \gamma} \leq P(T = 1|Y(0) = 1, D = 1) \leq \psi^{u, \gamma}$$ 
    \normalsize
    where
\footnotesize
\begin{align*}
    & \psi^{l, \gamma} \coloneqq E[\max\{ \theta_1^{l},  \theta_2^{l}, \theta_3^{l, \gamma}\}]\\
    & \psi^{u, \gamma} \coloneqq E[\min\{ \theta_1^{u}, \theta_2^{u}, \theta_3^{u, \gamma}\}]\\
    & \theta_3^{l, \gamma} \coloneqq \frac{\scriptstyle P(T = 1 | D=1, X)}{\scriptstyle P(Y(0) = 1 |D=1, X) + \gamma(1 -  P(Y(0)= 1 | D=1, X))} \\
    & \theta_3^{u, \gamma} \coloneqq \frac{\scriptstyle P(T = 1|D=1, X)}{\scriptstyle P(Y(0) = 1|D=1, X) + \frac{1}{\gamma} (1 - P(Y(0) = 1|D=1, X))} \\
\end{align*}
\normalsize
\normalsize
\end{restatable}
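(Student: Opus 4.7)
The plan is to leverage Theorem \ref{theorem:no_assumption} directly and derive only the new conditional bounds $\theta_3^{l,\gamma}(X)$ and $\theta_3^{u,\gamma}(X)$ coming from Definition \ref{assumption:gamma}; the pairs $\theta_{1,2}^{l/u}$ appearing in the statement are inherited verbatim from Theorem \ref{theorem:no_assumption}. Once we have additional conditional lower and upper bounds on $P(T=1 \mid Y(0)=1, D=1, X)$, intersecting pointwise with the existing conditional bounds can only sharpen the resulting unconditional bound, so combining via pointwise $\max$/$\min$ and averaging yields the claim.

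To derive the new conditional bounds, I fix $X$ and write $p_1 := P(T=1 \mid Y(0)=1, D=1, X)$ and $p_0 := P(T=1 \mid Y(0)=0, D=1, X)$. Assumptions \ref{asmp:consistency_pretreatment} and \ref{asmp:stable_basline_risk} identify $P(Y(0)=1 \mid D=1, X) = \mu(X)$, so the law of total probability yields
\begin{equation*}
\pi(X) \;=\; \mu(X)\, p_1 \;+\; (1 - \mu(X))\, p_0.
\end{equation*}
The sensitivity restriction $p_0 / p_1 \in [1/\gamma, \gamma]$ pins the pair down to a one-parameter family. To maximize $p_1$ we drive $p_0$ to its smallest admissible value $p_0 = p_1/\gamma$, giving $p_1 \leq \pi(X) / (\mu(X) + (1-\mu(X))/\gamma) = \theta_3^{u,\gamma}(X)$; to minimize $p_1$ we set $p_0 = \gamma p_1$, giving $p_1 \geq \pi(X) / (\mu(X) + \gamma(1-\mu(X))) = \theta_3^{l,\gamma}(X)$.

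To finish, I combine these with the conditional bounds underlying Theorem \ref{theorem:no_assumption} (the trivial range $p_1 \in [0,1]$ together with the Fr\'echet--Hoeffding-type inequalities $(\pi(X)+\mu(X)-1)/\mu(X) \leq p_1 \leq \pi(X)/\mu(X)$ obtained by dividing the Fr\'echet bounds on $P(T=1, Y(0)=1 \mid D=1, X)$ by $\mu(X)$), take the pointwise $\max$ and $\min$, and average over $X \mid Y(0)=1, D=1$. Under Assumption \ref{asmp:covariate_stability} this conditional distribution equals $X \mid Y=1, D=0$, which via Bayes can be rewritten as an unconditional expectation over $X$ with reweighting factor $w(X) = \mu(X)\, P(D=0 \mid X)/P(Y=1, D=0)$. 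Since $w(X) \geq 0$, the pointwise $\min$/$\max$ commutes with multiplication by $w$, producing the expectation-of-$\min$/$\max$ form.

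The main obstacle is the bookkeeping of the reweighting factor $w(X)$ across all three terms: the $\theta_{1,2}^{l/u}$ as stated already absorb $w(X)$ into their definitions, so the same reweighting must be arranged for $\theta_3^{l/u,\gamma}$ before the $\min$/$\max$ is taken. A useful sanity check is that sending $\gamma \to \infty$ gives $\theta_3^{u,\gamma}(X) \to \pi(X)/\mu(X)$ and $\theta_3^{l,\gamma}(X) \to 0$, so the new terms reduce to $\theta_1^u$ and $\theta_2^l$ respectively and we recover Theorem \ref{theorem:no_assumption}; sending $\gamma \to 1$ collapses both to $\pi(X)$, matching the point identification of Proposition \ref{prop:identification}.
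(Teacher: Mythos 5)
Your proposal is correct and follows essentially the same route as the paper: expand $\pi(X)$ by total probability over $Y(0)$, substitute the sensitivity constraint on $P(T=1\mid Y(0)=0,D=1,X)$ to obtain $\theta_3^{l,\gamma}$ and $\theta_3^{u,\gamma}$, and intersect pointwise with the Fr\'echet-based bounds before reweighting to an unconditional expectation. Your explicit remark that the $w(X)$ reweighting must also be applied to the new $\theta_3$ terms (which the paper's theorem statement leaves implicit and only clarifies in its appendix on the alternative expression) is a correct and worthwhile observation, as are the $\gamma\to 1$ and $\gamma\to\infty$ sanity checks.
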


We defer the proof to Appendix \ref{appendix:gamma_identification}. This is again similar in style to Theorem \ref{theorem:no_assumption}, where we incorporate covariate information to achieve tighter bounds $\psi^{l, \gamma}$ and $\psi^{u, \gamma}$. We note that these bounds converge to our earlier ones as $\gamma \to \infty$, and at $\gamma = 1$ (i.e., no confounding with respect to $Y(0)$), which implies point identification at $P(T = 1|Y(0) = 1, D=1, X) = P(T = 1|D=1, X)$. Notably, the upper and lower bound takes a min/max (respectively) over the two terms appearing in the bound in~\cref{theorem:no_assumption} (valid without any assumption on confounding) and a new quantity (that is dependent on $\gamma$).

We present a high-level overview of methods and results for the construction of estimators for the bounds in~\cref{theorem:identification_gamma}, with details in Appendix \ref{appx:gamma_estimation}.\footnote{We also offer an alternative expression of the bounds in Appendix \ref{sec:alt_gamma}, analagous to Appendix \ref{sec:alt_no_assumption} for~\cref{theorem:no_assumption}.} The picture is very similar to before: we can construct first order bias-corrected estimators for each term appearing inside the max and min, by adding the expectation (over $\hat{P})$ of the influence functions we have derived. Again, one option, requiring fewer assumptions, is to take expectations over each term individually and then choose the stronger of the bounds after averaging (applying a multiple-comparisons correction such as union bound to the level of the CIs). Under Assumption \ref{assumption:margin}, we further obtain that the expectation-of-max estimator is also asymptotically normal with sufficiently fast convergence rates for the estimators of the nuisance functions, which we defer to Appendix \ref{sec:margin}.

\subsection{Benchmarking Sensitivity Analysis}

The sensitivity parameter $\gamma$ is an assumption, not something we can estimate from data. To assess the plausibility of different $\gamma$ values, we can compute an analogous $\gamma'$ for an \textit{observed} random variable (e.g., diabetes) which is held out of the covariate set $X$. This quantity \textit{can} be estimated from data to perform benchmarking.

We determine this value of $\gamma'$ by training a discriminative model to compute the following inequality, where $X'$ denotes all covariates $X$ except $Z$, 
$$\frac{1}{\gamma'} \leq \frac{P(T = 1 | Z=0, D=1, X')}{P(T=1 | Z=1, D=1, X')} \leq \gamma',$$ where $Z$ is the random variable that represents if the patient has the covariate of interest.

\section{RESULTS}

We apply our analysis framework to understand treatment allocation inequity in the real-world setting of Paxlovid allocation for high-risk COVID-19 outpatients.
Further, we provide comparisons on semi-synthetic and synthetic settings, to demonstrate that our framework more accurately and more efficiently captures ground truth treatment rates, improving upon alternative methods.\footnote{We release our code at \\  \href{https://github.com/lasilab/inequity-bounds.git}{\texttt{https://github.com/lasilab/inequity-bounds.git}}.}
In our experiments, we use a logistic regression model for our estimators. For each estimator, we perform cross-fitting/sample-splitting over 5 disjoint folds. In all of our reported bounds, we use a 95\% confidence interval; in the case where our bounds use two quantities, we use a 97.5\% confidence interval for each, so that the resulting confidence interval is 95\% (via an application of the union bound).

\subsection{Dataset and Cohort Definition}

We use the NCATS NC3 cohort \citep{10.1093/jamia/ocaa196}, consisting of national line-level data of $18,438,581$ total patients, including $7,149,421$ confirmed COVID-19 positive patients, pooled from $76$ different data sharing centers across the United States.  
We focus our analysis on outpatients
with a positive SARS-CoV-2 test result, satisfying eligibility requirements (see Appendix \ref{cohort}).

\subsection{Real-world Study Results}
\begin{figure}[t]
% \vskip 0.2in
\begin{center}
\centerline{\includegraphics[height=2.05in]{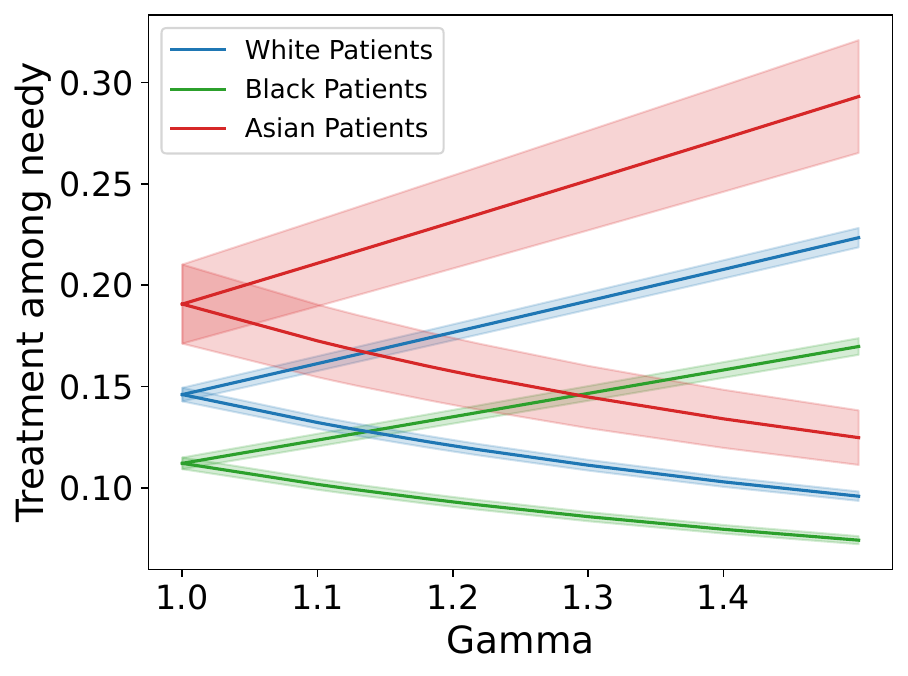}}
\vspace{-3mm}
\caption{Upper and lower bounds for treatment rate among the needy $P(T = 1 |Y(0) = 1, D=1, G=g)$ are computed for each racial group $g$, with varying values of $\gamma \in [1,1.5]$. The shaded area represents a 95\% confidence interval.}
\label{icml-historical}
\end{center}
\vskip -0.3in
\end{figure}

Under arbitrary unobserved confounding, we obtain vacuous bounds as expected, as this setting violates the conditions under which~\cref{theorem:no_assumption} provides meaningful results. Specifically, the treatment probability exceeds the probability of the adverse untreated outcome (i.e., mortality), leading to a vacuous upper bound, while the very low probability of the adverse untreated outcome results in a vacuous lower bound (see Table \ref{tab:rates}).
Under bounded unobserved confounding as in~\cref{assumption:gamma}, with parameter $\gamma$, we are able to identify non-overlapping bounds for our quantity of interest $P(T = 1 | Y(0) = 1, D=1)$ for particular subgroups. We identify non-overlapping bounds between Black and White patients ($\gamma \leq 1.11$) as well as Black and Asian patients ($\gamma \leq 1.21$). Hence, treatment rates for Black patients that would die without treatment are strictly lower than treatment rates for White and Asian patients, up to $\gamma = 1.11$, $\gamma = 1.21$ respectively,
\textbf{highlighting substantial inequity}. For a better interpretation of $\gamma$, we perform the following benchmarking analysis. 

\begin{figure}[t]
\begin{center}
\centerline{\includegraphics[height=2.05in]{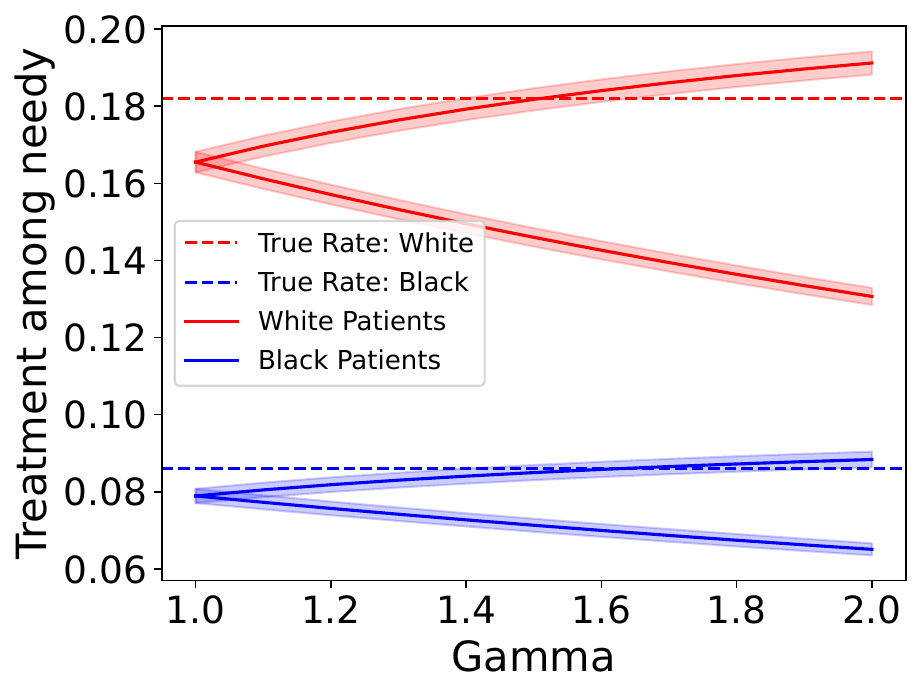}}
\caption{(Semi-Synthetic Data) Upper and lower bounds for treatment rate among the needy, with 95\% confidence intervals, for each racial group $g$, with varying values of $\gamma \in [1,2]$ (true value of $\gamma = 1.5$).}
\label{fig:semi_syn}
\end{center}
\vspace{-6mm}
\end{figure}

\subsection{Benchmarking Sensitivity Analysis}\label{sec5.2.3}

In our benchmarking, we select diabetes as our covariate of interest, based on its well-documented association with high risk of severe COVID-19 \citep{cdc_web}. 
We again proceed by training a classifier to predict treatment, letting $Z$ be diabetes and $X'$ be all other covariates. Then, we compute the following ratio on post-availability test data, using counterfactual features of having diabetes ($Z = 1$) or not having diabetes ($Z = 0$) for each patient:
\begin{align*}
    \frac{1}{\gamma'} \leq \frac{P(T = 1 |Z = 0, D=1, X')}{P(T = 1 | Z = 1, D=1, X')} \leq \gamma'.
\end{align*}

We observe that the smallest value of $\gamma'$ that satisfies the above equation for all post-availability test data is $1.09$. Therefore, our result in identifying disparities in allocation (e.g., non-overlapping bounds for (1) Black and White $\gamma \approx 1.11$ and (2) Black and Asian $\gamma \approx 1.21$ ) is \textbf{robust to unobserved confounding variables} that exhibit an influence on COVID-19 treatment allocation up to the impact of a patient's diabetes, which is evidenced to be associated with high risk of severe COVID-19 \citep{cdc_web}. In other words, unless there exists an unobserved confounder more influential than diabetes, there exists a disparity in treatment between both (1) Black and White patients and (2) Black and Asian patients.

\subsection{Semi-synthetic and Synthetic Settings}

\begin{figure}[t]
\begin{center}

\centerline{\includegraphics[height=2.05in]{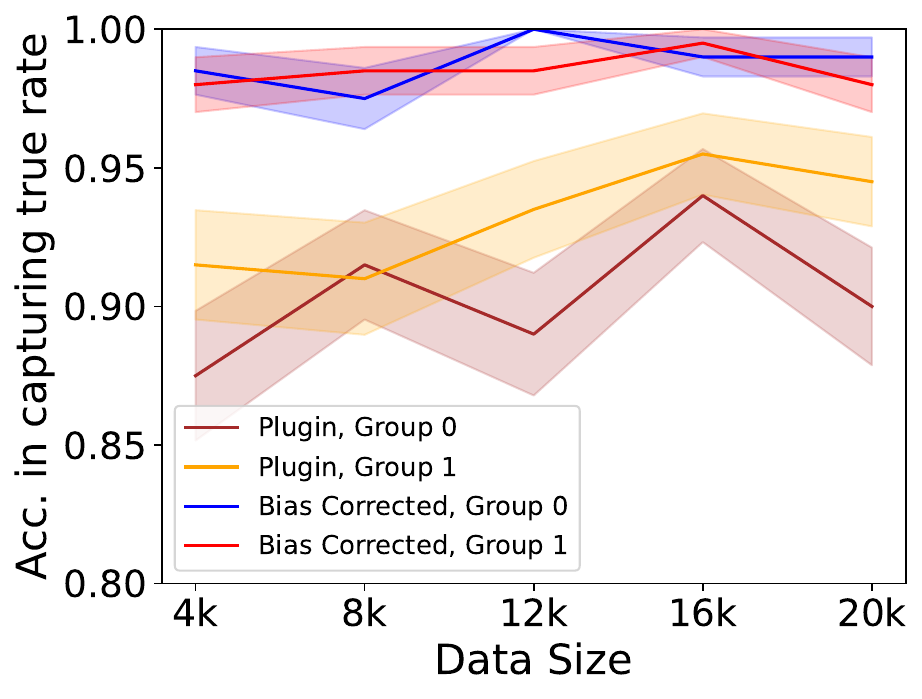}} 
\caption{(Synthetic Data) Accuracy of our bias-corrected bounds compared to their plugin counterparts in capturing the true treatment rates. We use the derived 95\% confidence interval from our bias-corrected estimates for both methods.}
\label{fig:syn_acc}
\end{center}
\vspace{-5mm}
\end{figure}
We generate both semi-synthetic and synthetic tasks from the Folktables dataset comprised of US Census data \citep{ding2021retiring}. In these tasks, we know the ground truth rates of treatment among the needy, so we can study whether our bounds are indeed valid and empirically compare them to alternative approaches. In the semi-synthetic setting, we use two racial groups of White and Black patients, and we simulate both $Y$ and $T$. We define $Y = T * Y(1) + (1 - T) * Y(0)$ and sample $Y(0) \sim \text{Bernoulli}(\sigma(|x|_1 + 2))$ and $Y(1) \sim \text{Bernoulli}(\sigma(|x|_1 + 2) / 2)$, where $\sigma$ represents the sigmoid function.
To produce a known value of $\gamma,$ we use $Y(0)$ to confound the generation of $T$. We sample $T \sim \text{Bernoulli}(p)$. For White patients, $p = \sigma(|x|_1 - 1)$ and for Black patients, $p = \sigma(|x|_1 - 2)$. If $Y(0) = 1$, we divide $p$ by 1.5, making $\gamma = 1.5$.

We generate our fully synthetic task in a similar fashion, where our covariates are sampled from a 2D Gaussian of $\mathcal{N}(0, 0.2) \times \mathcal{N}(0, 0.1)$. In this task, we control $\gamma = 1.5$, similar to the semi-synthetic task.

In our semi-synthetic experiments, we observe that our estimates of our bounds \textbf{successfully capture the true treatment rates} among the needy, given the true amount of unobserved confounding (i.e., $\gamma = 1.5$) (Figure \ref{fig:semi_syn}). 
To the best of our knowledge, no other approach provides valid bounds in this setting. To illustrate the benefit of our approach over alternative approaches (e.g., plug-in estimates),
we run synthetic experiments (Figure \ref{fig:syn_acc}) over 100 different trials given limited data (4000 $\sim$ 20000 samples). We capture the rates at which our bounds and the plug-ins capture the true rates given the actual value of $\gamma = 1.5$. 
We observe that our bounds capture the true rates at a \textbf{significantly higher rate given limited data} compared to alternative plug-in-based approaches.

\section{DISCUSSION}

In this work, we introduce a principled approach that allows practitioners to audit need-based inequity in existing decision-making systems. We consider a causal notion of equity whereby allocation rates should be equalized across groups when conditioning on the population who would suffer an adverse outcome without resource allocation.
We demonstrate that one can robustly quantify need-based inequity when relaxing the assumption on no unmeasured confounding, and surprisingly, can still obtain informative bounds when entirely eliminating this assumption.
We show that our bounds can be estimated with flexible machine learning models (e.g., nonparametric models, random forests, etc.), while providing valid confidence intervals.
Furthermore, we apply our method to analyze a real-world case study of Paxlovid allocation to high-risk COVID-19 patients,
and we find that observed inequity between racial groups cannot be explained by unobserved confounders at the same influence of important observable covariates.
More broadly, we remark that our setting and design are quite general and can easily be applied to different settings such as the creation of new services, government programs, and so on. Equivalently, this can be applied to policies, benefits, or treatments that roll out in one location and not the other.

\subsubsection*{Acknowledgements}
We would like to thank Angel Desai for the helpful discussion during the early phase of this project. We would also like to thank the anonymous reviewers for their valuable feedback. YB was supported in part by the AI2050 program at Schmidt Sciences (Grant G-22-64474) and also gratefully acknowledges the NSF (IIS2211955), UPMC, Highmark Health, Abridge, Ford Research, Mozilla, the PwC Center, Amazon AI, JP Morgan Chase, the Block Center, the Center for Machine Learning and Health, and the CMU Software Engineering Institute (SEI) via Department of Defense contract FA8702-15-D-0002, for their generous support of ACMI Lab's research. 
DS was supported by the Bosch Center for Artificial Intelligence, the ARCS Foundation, and the National Science Foundation Graduate Research Fellowship under Grant No. DGE2140739.

The analyses described in this publication were conducted with data or tools accessed through the NCATS N3C Data Enclave https://covid.cd2h.org and N3C Attribution \& Publication Policy v 1.2-2020-08-25b supported by NCATS U24 TR002306, Axle Informatics Subcontract: NCATS-P00438-B. This research was possible because of the patients whose information is included within the data and the organizations (https://ncats.nih.gov/n3c/resources/data-contribution/data-transfer-agreement-signatories) and scientists who have contributed to the on-going development of this community resource [https://doi.org/10.1093/jamia/ocaa196]. 

Disclaimer: The N3C Publication committee confirmed that this manuscript is in accordance with N3C data use and attribution policies; however, this content is solely the responsibility of the authors and does not necessarily represent the official views of the National Institutes of Health or the N3C program.

The N3C data transfer to NCATS is performed under a Johns Hopkins University Reliance Protocol IRB00249128 or individual site agreements with NIH. The N3C Data Enclave is managed under the authority of the NIH; information can be found at https://ncats.nih.gov/n3c/resources.

\bibliography{ref}

\begin{thebibliography}{38}
\providecommand{\natexlab}[1]{#1}
\providecommand{\url}[1]{\texttt{#1}}
\expandafter\ifx\csname urlstyle\endcsname\relax
  \providecommand{\doi}[1]{doi: #1}\else
  \providecommand{\doi}{doi: \begingroup \urlstyle{rm}\Url}\fi

\bibitem[Artiga et~al.(2020)Artiga, Orgera, and Pham]{artiga2020disparities}
Samantha Artiga, Kendal Orgera, and Olivia Pham.
\newblock Disparities in health and health care: Five key questions and answers.
\newblock \emph{Kaiser Family Foundation}, 2020.

\bibitem[Audibert and Tsybakov(2007)]{audibert2007fast}
Jean-Yves Audibert and Alexandre~B Tsybakov.
\newblock Fast learning rates for plug-in classifiers.
\newblock \emph{The Annals of Statistics}, pages 608--633, 2007.

\bibitem[Buchmueller and Levy(2020)]{buchmueller2020aca}
Thomas~C Buchmueller and Helen~G Levy.
\newblock The aca’s impact on racial and ethnic disparities in health insurance coverage and access to care: an examination of how the insurance coverage expansions of the affordable care act have affected disparities related to race and ethnicity.
\newblock \emph{Health Affairs}, 39\penalty0 (3):\penalty0 395--402, 2020.

\bibitem[{Centers for Disease Control and Prevention}(2023)]{cdc_web}
{Centers for Disease Control and Prevention}.
\newblock People with certain medical conditions, 2023.
\newblock URL \url{https://www.cdc.gov/coronavirus/2019-ncov/need-extra-precautions/people-with-medical-conditions.html}.

\bibitem[Cornfield et~al.(1959)Cornfield, Haenszel, Hammond, Lilienfeld, Shimkin, and Wynder]{Cornfield1959-4f}
Jerome Cornfield, William Haenszel, E.~Cuyler Hammond, Abraham~M. Lilienfeld, Michael~B. Shimkin, and Ernst~L. Wynder.
\newblock Smoking and lung cancer: Recent evidence and a discussion of some questions.
\newblock \emph{JNCI: Journal of the National Cancer Institute}, 22\penalty0 (1):\penalty0 173--203, 01 1959.
\newblock ISSN 0027-8874.
\newblock \doi{10.1093/jnci/22.1.173}.

\bibitem[Coston et~al.(2020)Coston, Mishler, Kennedy, and Chouldechova]{coston2020counterfactual}
Amanda Coston, Alan Mishler, Edward~H Kennedy, and Alexandra Chouldechova.
\newblock Counterfactual risk assessments, evaluation, and fairness.
\newblock In \emph{Proceedings of the 2020 conference on fairness, accountability, and transparency}, pages 582--593, 2020.

\bibitem[Ding et~al.(2021)Ding, Hardt, Miller, and Schmidt]{ding2021retiring}
Frances Ding, Moritz Hardt, John Miller, and Ludwig Schmidt.
\newblock Retiring adult: New datasets for fair machine learning.
\newblock \emph{Advances in Neural Information Processing Systems}, 34, 2021.

\bibitem[Frank(2000)]{frank2000benchmarking}
Kenneth~A. Frank.
\newblock Impact of a confounding variable on a regression coefficient.
\newblock \emph{Sociological Methods \textbackslash{}\& Research}, 29\penalty0 (2):\penalty0 147--194, 2000.

\bibitem[Haendel et~al.(2020)Haendel, Chute, Bennett, Eichmann, Guinney, Kibbe, Payne, Pfaff, Robinson, Saltz, Spratt, Suver, Wilbanks, Wilcox, Williams, Wu, Blacketer, Bradford, Cimino, Clark, Colmenares, Francis, Gabriel, Graves, Hemadri, Hong, Hripscak, Jiao, Klann, Kostka, Lee, Lehmann, Lingrey, Miller, Morris, Murphy, Natarajan, Palchuk, Sheikh, Solbrig, Visweswaran, Walden, Walters, Weber, Zhang, Zhu, Amor, Girvin, Manna, Qureshi, Kurilla, Michael, Portilla, Rutter, Austin, Gersing, and the N3C~Consortium]{10.1093/jamia/ocaa196}
Melissa~A Haendel, Christopher~G Chute, Tellen~D Bennett, David~A Eichmann, Justin Guinney, Warren~A Kibbe, Philip R~O Payne, Emily~R Pfaff, Peter~N Robinson, Joel~H Saltz, Heidi Spratt, Christine Suver, John Wilbanks, Adam~B Wilcox, Andrew~E Williams, Chunlei Wu, Clair Blacketer, Robert~L Bradford, James~J Cimino, Marshall Clark, Evan~W Colmenares, Patricia~A Francis, Davera Gabriel, Alexis Graves, Raju Hemadri, Stephanie~S Hong, George Hripscak, Dazhi Jiao, Jeffrey~G Klann, Kristin Kostka, Adam~M Lee, Harold~P Lehmann, Lora Lingrey, Robert~T Miller, Michele Morris, Shawn~N Murphy, Karthik Natarajan, Matvey~B Palchuk, Usman Sheikh, Harold Solbrig, Shyam Visweswaran, Anita Walden, Kellie~M Walters, Griffin~M Weber, Xiaohan~Tanner Zhang, Richard~L Zhu, Benjamin Amor, Andrew~T Girvin, Amin Manna, Nabeel Qureshi, Michael~G Kurilla, Sam~G Michael, Lili~M Portilla, Joni~L Rutter, Christopher~P Austin, Ken~R Gersing, and the N3C~Consortium.
\newblock {The National COVID Cohort Collaborative (N3C): Rationale, design, infrastructure, and deployment}.
\newblock \emph{Journal of the American Medical Informatics Association}, 28\penalty0 (3):\penalty0 427--443, 08 2020.
\newblock ISSN 1527-974X.
\newblock \doi{10.1093/jamia/ocaa196}.
\newblock URL \url{https://doi.org/10.1093/jamia/ocaa196}.

\bibitem[Hsu and Small(2013)]{Hsu2013-16}
Jesse~Y. Hsu and Dylan~S. Small.
\newblock Calibrating sensitivity analyses to observed covariates in observational studies.
\newblock \emph{Biometrics}, 69:\penalty0 803--811, 12 2013.
\newblock \doi{10.1111/biom.12101}.

\bibitem[Kennedy(2022)]{kennedy2022semiparametric}
Edward~H Kennedy.
\newblock Semiparametric doubly robust targeted double machine learning: a review.
\newblock \emph{arXiv preprint arXiv:2203.06469}, 2022.

\bibitem[Kennedy et~al.(2020)Kennedy, Balakrishnan, and G’sell]{kennedy2020sharp}
Edward~H Kennedy, Sivaraman Balakrishnan, and Max G’sell.
\newblock Sharp instruments for classifying compliers and generalizing causal effects.
\newblock \emph{The Annals of Statistics}, 48\penalty0 (4):\penalty0 2008--2030, 2020.

\bibitem[Kennedy et~al.(2021)Kennedy, Balakrishnan, and Wasserman]{kennedy2021semiparametric}
Edward~H Kennedy, Sivaraman Balakrishnan, and Larry Wasserman.
\newblock Semiparametric counterfactual density estimation.
\newblock \emph{arXiv preprint arXiv:2102.12034}, 2021.

\bibitem[Khademi et~al.(2019)Khademi, Lee, Foley, and Honavar]{khademi2019fairness}
Aria Khademi, Sanghack Lee, David Foley, and Vasant Honavar.
\newblock Fairness in algorithmic decision making: An excursion through the lens of causality.
\newblock In \emph{The World Wide Web Conference}, pages 2907--2914, 2019.

\bibitem[Kilbertus et~al.(2017)Kilbertus, Rojas~Carulla, Parascandolo, Hardt, Janzing, and Sch{\"o}lkopf]{kilbertus2017avoiding}
Niki Kilbertus, Mateo Rojas~Carulla, Giambattista Parascandolo, Moritz Hardt, Dominik Janzing, and Bernhard Sch{\"o}lkopf.
\newblock Avoiding discrimination through causal reasoning.
\newblock \emph{Advances in neural information processing systems}, 30, 2017.

\bibitem[Kilbertus et~al.(2020)Kilbertus, Ball, Kusner, Weller, and Silva]{kilbertus2020sensitivity}
Niki Kilbertus, Philip~J Ball, Matt~J Kusner, Adrian Weller, and Ricardo Silva.
\newblock The sensitivity of counterfactual fairness to unmeasured confounding.
\newblock In \emph{Uncertainty in artificial intelligence}, pages 616--626. PMLR, 2020.

\bibitem[Kusner et~al.(2017)Kusner, Loftus, Russell, and Silva]{kusner2017counterfactual}
Matt~J Kusner, Joshua Loftus, Chris Russell, and Ricardo Silva.
\newblock Counterfactual fairness.
\newblock \emph{Advances in neural information processing systems}, 30, 2017.

\bibitem[Larkin(2022)]{larkin2022paxlovid}
Howard~D Larkin.
\newblock Paxlovid drug interaction screening checklist updated.
\newblock \emph{JAMA}, 328\penalty0 (13):\penalty0 1290--1290, 2022.

\bibitem[Levis et~al.(2023)Levis, Bonvini, Zeng, Keele, and Kennedy]{levis2023covariate}
Alexander~W Levis, Matteo Bonvini, Zhenghao Zeng, Luke Keele, and Edward~H Kennedy.
\newblock Covariate-assisted bounds on causal effects with instrumental variables.
\newblock \emph{arXiv preprint arXiv:2301.12106}, 2023.

\bibitem[Loftus et~al.(2018)Loftus, Russell, Kusner, and Silva]{loftus2018causal}
Joshua~R Loftus, Chris Russell, Matt~J Kusner, and Ricardo Silva.
\newblock Causal reasoning for algorithmic fairness.
\newblock \emph{arXiv preprint arXiv:1805.05859}, 2018.

\bibitem[Luedtke and Van Der~Laan(2016)]{luedtke2016statistical}
Alexander~R Luedtke and Mark~J Van Der~Laan.
\newblock Statistical inference for the mean outcome under a possibly non-unique optimal treatment strategy.
\newblock \emph{Annals of statistics}, 44\penalty0 (2):\penalty0 713, 2016.

\bibitem[Manski(2003)]{manski2003partial}
Charles~F Manski.
\newblock \emph{Partial identification of probability distributions}.
\newblock Springer, 2003.

\bibitem[Marzolini et~al.(2022)Marzolini, Kuritzkes, Marra, Boyle, Gibbons, Flexner, Pozniak, Boffito, Waters, Burger, et~al.]{marzolini2022recommendations}
Catia Marzolini, Daniel~R Kuritzkes, Fiona Marra, Alison Boyle, Sara Gibbons, Charles Flexner, Anton Pozniak, Marta Boffito, Laura Waters, David Burger, et~al.
\newblock Recommendations for the management of drug--drug interactions between the covid-19 antiviral nirmatrelvir/ritonavir (paxlovid) and comedications.
\newblock \emph{Clinical Pharmacology \& Therapeutics}, 112\penalty0 (6):\penalty0 1191--1200, 2022.

\bibitem[Mishler et~al.(2021)Mishler, Kennedy, and Chouldechova]{mishler2021fairness}
Alan Mishler, Edward~H Kennedy, and Alexandra Chouldechova.
\newblock Fairness in risk assessment instruments: Post-processing to achieve counterfactual equalized odds.
\newblock In \emph{Proceedings of the 2021 ACM Conference on Fairness, Accountability, and Transparency}, pages 386--400, 2021.

\bibitem[Nabi and Shpitser(2018)]{nabi2018fair}
Razieh Nabi and Ilya Shpitser.
\newblock Fair inference on outcomes.
\newblock In \emph{Proceedings of the AAAI Conference on Artificial Intelligence}, volume~32, 2018.

\bibitem[Nelson(2002)]{nelson2002unequal}
Alan Nelson.
\newblock Unequal treatment: Confronting racial and ethnic disparities in health care.
\newblock \emph{J Natl Med Assoc}, 94\penalty0 (8):\penalty0 666--668, August 2002.

\bibitem[Nilforoshan et~al.(2022)Nilforoshan, Gaebler, Shroff, and Goel]{nilforoshan2022causal}
Hamed Nilforoshan, Johann~D Gaebler, Ravi Shroff, and Sharad Goel.
\newblock Causal conceptions of fairness and their consequences.
\newblock In \emph{International Conference on Machine Learning}, pages 16848--16887. PMLR, 2022.

\bibitem[Plecko and Bareinboim(2022)]{plecko2022causal}
Drago Plecko and Elias Bareinboim.
\newblock Causal fairness analysis.
\newblock \emph{arXiv preprint arXiv:2207.11385}, 2022.

\bibitem[Rambachan and Coston(2022)]{rambachan2022counterfactual}
Ashesh Rambachan and Amanda Coston.
\newblock Counterfactual risk assessments under unmeasured confounding.
\newblock 2022.

\bibitem[Richardson and Robins(2013)]{Richardson2013-18}
Thomas Richardson and James~M. Robins.
\newblock Single world intervention graphs (swigs): A unification of the counterfactual and graphical approaches to causality.
\newblock 2013.

\bibitem[Schr{\"o}der et~al.(2023)Schr{\"o}der, Frauen, and Feuerriegel]{schroder2023causal}
Maresa Schr{\"o}der, Dennis Frauen, and Stefan Feuerriegel.
\newblock Causal fairness under unobserved confounding: a neural sensitivity framework.
\newblock \emph{arXiv preprint arXiv:2311.18460}, 2023.

\bibitem[Shinn and Richard(2022)]{Shinn2022AllocatingHS}
Marybeth Shinn and Molly~K. Richard.
\newblock Allocating homeless services after the withdrawal of the vulnerability index-service prioritization decision assistance tool.
\newblock \emph{American journal of public health}, 112 3:\penalty0 378--382, 2022.

\bibitem[Tan(2006)]{tan2006distributional}
Zhiqiang Tan.
\newblock A distributional approach for causal inference using propensity scores.
\newblock \emph{Journal of the American Statistical Association}, 101\penalty0 (476):\penalty0 1619--1637, 2006.

\bibitem[Tonry(2010)]{tonry2010social}
Michael Tonry.
\newblock The social, psychological, and political causes of racial disparities in the american criminal justice system.
\newblock \emph{Crime and justice}, 39\penalty0 (1):\penalty0 273--312, 2010.

\bibitem[{U.S. Food and Drug Administration}(2023)]{fdafile}
{U.S. Food and Drug Administration}.
\newblock {Emergency Use Authorization (EUA) of the Pfizer-BioNTech COVID-19 Vaccine for the Prevention of Coronavirus Disease 2019 (COVID-19) for Individuals 12 Years of Age and Older}.
\newblock \url{https://www.fda.gov/media/158165/download}, 2023.
\newblock Accessed: June 30, 2023.

\bibitem[Wu et~al.(2019)Wu, Zhang, and Wu]{wu2019counterfactual}
Yongkai Wu, Lu~Zhang, and Xintao Wu.
\newblock Counterfactual fairness: Unidentification, bound and algorithm.
\newblock In \emph{Proceedings of the twenty-eighth international joint conference on Artificial Intelligence}, 2019.

\bibitem[Yadlowsky et~al.(2018)Yadlowsky, Namkoong, Basu, Duchi, and Tian]{yadlowsky2018bounds}
Steve Yadlowsky, Hongseok Namkoong, Sanjay Basu, John Duchi, and Lu~Tian.
\newblock Bounds on the conditional and average treatment effect with unobserved confounding factors.
\newblock \emph{arXiv preprint arXiv:1808.09521}, 2018.

\bibitem[Zhang and Bareinboim(2018)]{zhang2018fairness}
Junzhe Zhang and Elias Bareinboim.
\newblock Fairness in decision-making—the causal explanation formula.
\newblock In \emph{Proceedings of the AAAI Conference on Artificial Intelligence}, volume~32, 2018.

\end{thebibliography}
\bibliographystyle{plainnat}

\onecolumn

\appendix
\section{Additional Proofs \& Statements}\label{appx:proof}

In this section, we provide additional remarks as well as the omitted proofs for the Propositions, Lemmas, and Theorems in the main paper, except for \cref{prop:normality-max}. The proof of \cref{prop:normality-max} is separately located in \cref{sec:margin}.

\subsection{Representing Assumptions in a Causal Graph}\label{app:causal_swig}

In~\cref{fig:example_causal_graph}, we gave an illustrative causal graph and claimed that this causal structure is sufficient, but not necessary, for our assumptions to hold.  A more precise characterization is given here, using the framework of single-world intervention graphs (SWIGs), developed by \citet{Richardson2013-18}.  Single-world intervention graphs are a useful tool for relating assumptions that use potential outcome notation to those that use the framework of causal directed acyclic graphs.

\begin{figure}[t]
\begin{center}
\begin{tikzpicture}[
  obs/.style={circle, draw=gray!90, fill=gray!30, very thick, minimum size=8mm}, 
  uobs/.style={circle, draw=gray!90, fill=gray!10, dotted, minimum size=8mm}, 
  bend angle=30]
  \tikzset{line width=1.5pt, outer sep=0pt,
  ell/.style={draw,fill=white, inner sep=2pt,
  line width=1.5pt},
  swig vsplit={gap=5pt,
  inner line width right=0.5pt}};
  \node[obs] (X) {$X$} ;
  \node[obs] (Y) [right=of X] {$Y_0$} ;
  \node[name=T,above=of Y,shape=swig vsplit,fill=gray!30,draw=gray!90,minimum size=8mm]{
    \nodepart{left}{$T$}
    \nodepart{right}{$T = 0$}
  };
  \node[uobs] (U) [above=of X] {$C$} ;
  \node[obs] (D) [left=of U]  {$D$};
  \draw[-latex,thick,>=stealth]
    (X) edge (Y)
    (X) edge (T)
    (D) edge[bend left] (T)
    (T) edge[out=-30,in=30] (Y)
    (U) edge (Y)
    (U) edge (X)
    (U) edge (T);
\end{tikzpicture}
\end{center}
\caption{Example of a single-world intervention graph (SWIG) \citep{Richardson2013-18}, mirroring~\cref{fig:example_causal_graph}, that satisfies Assumptions~\ref{asmp:covariate_stability} and~\ref{asmp:stable_basline_risk}, but where unobserved confounding is present. Note that we use $Y_0$ in lieu of $Y(0)$ for consistency with typical SWIG notation, but these notations equivalently represent the potential outcome under $T = 0$. We use node-splitting notation, where all outgoing edges from $T$ propagate the chosen value $T = 0$, all incoming edges go to $T$, and there is no connection between $T$ and $T = 0$.  This graph illustrates causal relationships in the `single world' where we intervene upon $T$ and set it to the chosen value.}%
\label{fig:example_causal_graph_swig}
\end{figure}
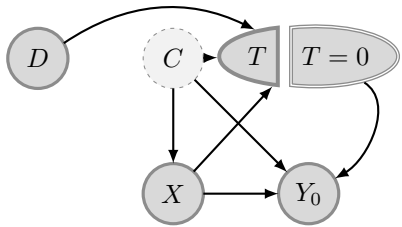

\Cref{fig:example_causal_graph_swig} applies the intervention $T = 0$ to the causal graph given in~\cref{fig:example_causal_graph}, via a `node splitting' operation (see \citet{Richardson2013-18} for more details), where the node $T$ is split, all incoming edges go to $T$, and all outgoing edges propagate the value $T = 0$, yielding $Y_0$ instead of $Y$ in this example. This graph allows us to characterize the causal relationships between the potential outcome $Y_0$ and other variables, in the `single world' where we intervene upon $T$ and set it to the desired value $T = 0$.  Note that in the resulting graph, the nodes $T$ and $T = 0$ are not connected. From d-separation in the graph given in~\cref{fig:example_causal_graph_swig}, we can observe that both~\cref{asmp:covariate_stability} and~\cref{asmp:stable_basline_risk} hold, namely that
\begin{align*}
   X &\perp D \mid Y_0 & \text{and} && Y_0 &\perp D \mid X,
\end{align*}
where the former implies~\cref{asmp:covariate_stability} and the latter is equivalent to~\cref{asmp:stable_basline_risk}. However, our assumptions are only a subset of the implications of this causal structure.  For instance, this causal structure would imply similar relationships for $Y_1$, which does not appear in our assumptions.  Hence our claim that this causal structure is sufficient, but not necessary, for our assumptions to hold.

\subsection{Proof of~\cref{prop:identification}}
\label{sec:identification}
\Identification*
\begin{proof}
\begin{align*}
    & P(T = 1 \mid Y(0) = 1, D = 1, G = g) \\
    &\quad =  \int_{x} P(T = 1 \mid X = x, Y(0) = 1, D = 1, G = g) \\
    &\qquad \qquad \cdot P(X = x \mid Y(0) = 1, D = 1, G = g) dx \\
    &\quad =  \int_{x} P(T = 1 \mid X = x, D = 1) \\
    &\qquad \qquad \cdot P(X = x \mid Y = 1, D = 0, G = g) dx 
\end{align*}
where the first equality follows from standard rules of probability, and the second equality invokes our three assumptions given above.
\end{proof}

\subsection{Proof of Theorem~\ref{theorem:no_assumption}}
\label{sec:no_assumption}
\Bounds*

\begin{proof}
First, we note that by Fréchet inequalities, for events $T=1, Y(0)=1$, we have that
\begin{align*}
    P(T=1,Y(0)=1|D=1, X) \leq \min\{P(T=1|D=1, X),P(Y(0)=1|D=1, X)\} \\
    P(T=1,Y(0)=1|D=1, X) \geq \max\{0, P(T=1|D=1, X)+P(Y(0)=1|D=1, X)-1\}
\end{align*}

To derive bounds on treatment rate among the needy, we divide by $P(Y(0)=1|D=1, X)$, giving us
\begin{align*}
    P(T=1 | Y(0)=1, D=1, X) \leq \min\left\{\frac{P(T=1|D=1, X)}{P(Y(0)=1|D=1, X)}, 1 \right\} \\
    P(T=1 | Y(0)=1, D=1, X) \geq \max\left\{0, \frac{P(T=1|D=1, X)+P(Y(0)=1|D=1, X)-1}{P(Y(0)=1|D=1, X)}\right\}
\end{align*}

Next, we remark that our quantity of interest is given by
\begin{align*}
    P(T = 1 | Y(0) = 1, D=1) = E_X[ P(T = 1 | Y(0) = 1, D=1, X) | Y(0) = 1, D=0],
\end{align*}
where we can switch from $D=1$ to $D=0$ in our conditional expectation due to \cref{asmp:covariate_stability}. 
Therefore, we get that our target is now given by
\begin{align*}
    & E\left[ \max\left\{0, \frac{P(T = 1 | D=1, X) - P(Y(0) = 0 | D=1, X)}{P(Y(0) = 1 | D=1, X)} \right\} \mid Y=1, D=0 \right] \\
    & \qquad \leq P(T = 1 | Y(0) = 1, D=1) \leq E \left[ \min\left\{1, \frac{P(T = 1 | D=1, X)}{P(Y(0) = 1 | D=1, X)}\right\} \mid Y=1, D=0 \right]
\end{align*}

Finally, we can convert this to be computed over the \textit{unconditional} expectation as follows. The upper bound is given by a min over two terms. The term involving 1 simplifies to
\begin{align*}
    E[1 | Y=1, D=0] & = \int_x 1 \cdot P(X = x | Y=1, D=0)  \\
    & = \int_x P(Y=1, D=0 | X=x) \frac{P(X=x)}{P(Y=1, D=0)} \\
    & = \frac{1}{P(Y=1, D=0)} E\left[ P(Y=1, D=0 | X) \right] \\
    & = \frac{1}{P(Y=1, D=0)} E\left[ P(Y=1 | D=0, X) \cdot P(D=0 | X) \right]
\end{align*}

The other term is given by
\begin{align*}
    E\left[ \frac{P(T = 1 | D=1, X)}{P(Y(0) = 1 | D=1, X)} \mid Y=1, D=0) \right]
\end{align*}
Note that with \cref{asmp:stable_basline_risk}, we can replace the denominator with $P(Y(0) = 1 | D=0, X)$ as $Y(0)$ and $D$ are independent conditioning on covariates $X$. Then, we have that
\begin{align*}
        E\left[ \frac{P(T = 1 | D=1, X)}{P(Y(0) = 1 | D=1, X)} \mid Y=1, D=0 \right] & = E\left[  \frac{P(T = 1 | D=1, X)}{P(Y(0) = 1 | D=0, X)} \mid Y=1, D=0 \right] \\
        & = \int_x \frac{P(T = 1 | D=1, X)}{P(Y(0) = 1 | D=0, X)} P(Y=1, D=0 | X=x) \frac{P(X=x)}{P(Y=1, D=0)} \\
        & = \frac{1}{P(Y=1, D=0)} E\left[  \frac{P(T = 1 | D=1, X)}{P(Y(0) = 1 | D=0, X)} P(Y=1, D=0 | X) \right] \\
        & = \frac{1}{P(Y=1, D=0)} E\left[P(D=0 | X) P(T = 1 | D=1, X) \right]
\end{align*}

Next, we can consider the lower bound. The lower bound is given by a max of two terms. The zero term is trivially 0. The other term is given by

\begin{align*}
    E\left[ \frac{P(T = 1 | D=1, X) - P(Y(0) = 0 | D=1, X)}{P(Y(0)=1 | D=1, X)} \mid Y=1, D=0 \right]
\end{align*}

We can again switch $D=1$ to $D=0$ in both the $Y(0)$ term in the numerator and in the term in the denominator using \cref{asmp:stable_basline_risk}. Then, we get that
% \footnotesize
\begin{align*}
    & E\left[ \frac{P(T = 1 | D=1, X) - P(Y(0) = 0 | D=0, X)}{P(Y=1 | D=0, X)} \mid Y=1, D=0 \right] \\
    & = \int_x \frac{P(T = 1 | D=1, X) - P(Y(0) = 0 | D=0, X)}{P(Y=1 | D=0, X)} \cdot  P(Y=1, D=0 | X=x) \frac{P(X=x)}{P(Y=1, D=0)} \\
    & = \frac{1}{P(Y=1, D=0)} E\Big[ P(D=0 | X) P(Y=1 | D=0, X)  \cdot \frac{P(T = 1 | D=1, X) - P(Y(0) = 0 | D=0, X)}{P(Y=1 | D=0, X)} \Big] \\
    & =  \frac{1}{P(Y=1, D=0)} E\Big[ P(D=0 | X)  \cdot \left(P(T = 1 | D=1, X) + P(Y(0) = 1 | D=0, X) - 1 \right) \Big]
\end{align*}
% \normalsize
\end{proof}

as desired.

\subsection{Alternative Expression of Theorem~\ref{theorem:no_assumption}}
\label{sec:alt_no_assumption}

We provide an alternative formulation of the bounds presented in Theorem~\ref{theorem:no_assumption} to offer additional intuition.

By leveraging the Fréchet inequalities, we had the following bounds on the treatment rate among the needy:
\begin{align*}
    P(T=1 \mid Y(0)=1, D=1, X) &\leq \min\left\{\frac{P(T=1 \mid D=1, X)}{P(Y(0)=1 \mid D=1, X)}, 1 \right\}, \\
    P(T=1 \mid Y(0)=1, D=1, X) &\geq \max\left\{0, \frac{P(T=1 \mid D=1, X) + P(Y(0)=1 \mid D=1, X) - 1}{P(Y(0)=1 \mid D=1, X)}\right\}.
\end{align*}

To align the distribution of \(X\) with that of patients experiencing adverse events during the pre-availability period, we re-weight this quantity using 
\[
w(X) := \frac{P(Y=1, D=0 \mid X)}{P(Y=1, D=0)}
\]

Our bounds from Theorem~\ref{theorem:no_assumption} can be simply expressed as the bounds from Fréchet inequalities, each weighted by $w(X)$, which results in this alternative expression:
\begin{gather*}
    \psi^l \coloneqq E[w(X) \cdot \max\{0, \theta_1^{lw}(X)\}], \quad
    \psi^u \coloneqq E[w(X) \cdot \min\{\theta_1^{uw}(X), 1\}], \\
    \text{where} \quad \theta_1^{lw}(X) = \frac{P(T=1 \mid D=1, X) + P(Y=1 \mid D=0, X) - 1}{P(Y=1 \mid D=0, X)}, \quad \theta_1^{uw}(X) = \frac{P(T=1 \mid D=1, X)}{P(Y(0)=1 \mid D=1, X)}.
\end{gather*}

\begin{proof} By simple expansion, we can be show that this reformulation is equivalent to the original bounds in Theorem~\ref{theorem:no_assumption}.
    First, for the upper bound term $\theta_1^{uw}(X)$, we have by \cref{asmp:covariate_stability,asmp:stable_basline_risk}
    \begin{align*}
        \theta_1^{uw}(X) \cdot w(X) & =  \frac{P(T=1|D=1,X)}{P(Y(0)=1 | D=1,X)} \cdot \frac{P(Y=1,D=0|X)}{P(Y=1,D=0)} \\
        & = \frac{P(T=1|D=1,X)P(D=0|X)}{P(Y=1,D=0)}
    \end{align*}

Second, for the lower bound term $\theta_1^{lw}(X)$, we have by \cref{asmp:covariate_stability,asmp:stable_basline_risk}
\begin{align*}
    \theta_1^{lw}(X) \cdot w(X) & = \frac{P(T=1|D=1,X) + P(Y=1|D=0,X) - 1}{P(Y=1|D=0,X)} \cdot \frac{P(Y=1,D=0|X)}{P(Y=1,D=0)} \\
    & = (P(T=1|D=1,X) + P(Y=1|D=0,X) - 1) \cdot \frac{P(D=0|X)}{P(Y=1,D=0)}
\end{align*}

Therefore, our bounds can intuitively be viewed as a weighted average of the conditional bounds over $X$. 
\end{proof}

\subsection{Alternative Expression of Theorem~\ref{theorem:identification_gamma}}
\label{sec:alt_gamma}

Similarly, we provide an alternative formulation of the bounds presented in Theorem~\ref{theorem:identification_gamma}, explicitly illustrating how these bounds can be interpreted as a weighted average of the conditional bounds over $X$. 

Our bounds presented in Theorem~\ref{theorem:identification_gamma} can be cast in a similar way as the above. For simplicity, we use the simplified notation $\mu(X) := P(Y=1|D=0,X)$, $\pi(X) := P(T=1|D=1,X)$, and $g(X):= P(D=0|X)$. We have the same re-weighting term $w(X)$ (in simplified notation):
\begin{align*}
    w(X) := \frac{\mu(X)g(x)}{P(Y=1,D=0)}
\end{align*}

With this, our bounds from Theorem~\ref{theorem:identification_gamma} can be expressed as follows:
\begin{gather*}
    \psi^{l,\gamma} \coloneqq E[w(X) \cdot \max\{0, \theta_1^{lw}(X), \theta_2^{l\gamma w}(X)\}], \quad
    \psi^{u,\gamma} \coloneqq E[w(X) \cdot \min\{1, \theta_1^{uw}(X), \theta_2^{u\gamma w}(X)\}], \\
     \text{where} \quad \theta_2^{l\gamma w}(X) = \frac{\gamma^{-1}\cdot \pi(X)}{(\gamma^{-1}-1)\mu(X) + 1}, \quad \theta_2^{u \gamma w}(X) = \frac{\gamma \cdot \pi(X)}{(\gamma -1)\mu(X) + 1}.
\end{gather*}

\begin{proof} By simple expansion, we can see that the upper and lower bound terms $\theta_2^{u\gamma w}(X)$ and $\theta_2^{l\gamma w}(X)$ is equivalent to the corresponding terms $\theta_3^{u,\gamma}(X)$ and $\theta_3^{l,\gamma}(X)$ in the original formulation (i.e., \eqref{eq:theta_3_u_gamma} and ~\eqref{eq:theta_3_l_gamma}).
\end{proof}

\subsection{Analysis of Plugin Estimators}\label{sec:plugin}

We now present some analysis of a standard plugin estimator, which will be useful in proofs in the error analysis of our bias-corrected estimators. At a high level, this section demonstrates that a simple plugin estimator for the (ratio) estimand of $E\left[\frac{\pi(X)}{\mu(X)} \right]$ achieves a rate that is a combination of the rates of our estimators of $\pi$ and $\mu$, plus an additional term that is the variance of our plugin estimator.

First, we will prove a technical lemma that bounds the expected error of a ratio estimator that directly takes a ratio of plugins.

\begin{lemma}\label{lemma:ratio_lemma}
    Let $R = \frac{\pi}{\mu}$, and $\hat{R} = \frac{\hat{\pi}}{\hat{\mu}}$. Then, we have that 
    \begin{align*}
    | E_{x\sim P} [R] - E_{x\sim P}[\hat{R}] | \leq \frac{2}{\delta^2} \left( E_{x \sim P} [|\pi - \hat{\pi}|] + E_{x\sim P}[|\hat{\mu} - \mu|] \right),
\end{align*}
    for some $0 \leq \delta \leq \mu, \hat{\mu}$.
\end{lemma}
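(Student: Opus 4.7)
The plan is to reduce the ratio difference $\hat{R} - R$ to a linear combination of $\hat{\pi} - \pi$ and $\hat{\mu} - \mu$ via the classical ``add and subtract'' trick, and then apply the uniform lower bound $\delta$ to control the resulting denominators. Since $\pi$ here is a conditional probability (so $\pi \in [0,1]$) and $\mu, \hat{\mu}$ are bounded below by $\delta$, all of the needed bounds are pointwise and elementary.

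Concretely, I would first combine the two ratios over the common denominator $\mu \hat{\mu}$ and insert $\pm \pi \mu$ in the numerator, giving
\begin{align*}
\hat{R} - R \;=\; \frac{\hat{\pi} \mu - \pi \hat{\mu}}{\mu \hat{\mu}}
\;=\; \frac{(\hat{\pi} - \pi)\mu + \pi(\mu - \hat{\mu})}{\mu \hat{\mu}}
\;=\; \frac{\hat{\pi} - \pi}{\hat{\mu}} \;+\; \frac{\pi(\mu - \hat{\mu})}{\mu \hat{\mu}}.
\end{align*}
Taking absolute values, bounding $\pi \leq 1$, and applying $\mu, \hat{\mu} \geq \delta$ (so $\mu \hat{\mu} \geq \delta^2$) yields the pointwise inequality
\begin{align*}
\lvert \hat{R} - R \rvert \;\leq\; \frac{\lvert \hat{\pi} - \pi \rvert}{\delta} + \frac{\lvert \mu - \hat{\mu} \rvert}{\delta^2}
\;\leq\; \frac{2}{\delta^2}\bigl(\lvert \pi - \hat{\pi}\rvert + \lvert \hat{\mu} - \mu \rvert\bigr),
\end{align*}
where the final step uses $\delta \leq \mu \leq 1$ to upgrade $1/\delta$ to $1/\delta^2$. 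The factor of $2$ in the statement is loose slack absorbing this upgrade.

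Finally, I would take expectations on both sides and apply $\lvert E[\hat{R}] - E[R] \rvert = \lvert E[\hat{R} - R]\rvert \leq E\lvert \hat{R} - R\rvert$ together with linearity of expectation to obtain the claimed bound. There is no real obstacle here; the only mild subtleties are choosing the correct term to add and subtract so that the denominators become $\hat{\mu}$ and $\mu \hat{\mu}$ (both admitting the $\delta$ lower bound), and noting that $\delta \leq 1$ follows automatically because $\mu$ is a probability.
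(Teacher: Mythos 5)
Your proof is correct, but it takes a different route from the paper's. The paper also puts the difference over the common denominator $\mu\hat{\mu}$ and pulls out $1/\delta^2$, but it then controls the numerator $|\pi\hat{\mu}-\hat{\pi}\mu|$ by a min/max sandwich: adding either $\pi\mu-\hat{\pi}\hat{\mu}$ or its negative (one of which is nonpositive and one nonnegative) produces the factorizations $(\mu+\hat{\mu})(\pi-\hat{\pi})$ and $(\hat{\mu}-\mu)(\pi+\hat{\pi})$, each bounded in absolute value by $2|\pi-\hat{\pi}|$ and $2|\hat{\mu}-\mu|$ respectively, giving $|\pi\hat{\mu}-\hat{\pi}\mu|\le 2\max\{|\pi-\hat{\pi}|,|\hat{\mu}-\mu|\}$. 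You instead use the standard add-and-subtract decomposition $\hat{R}-R=\frac{\hat{\pi}-\pi}{\hat{\mu}}+\frac{\pi(\mu-\hat{\mu})}{\mu\hat{\mu}}$, bound $\pi\le 1$ and the denominators by $\delta$ and $\delta^2$, and then loosen $1/\delta$ to $2/\delta^2$ using $\delta\le\mu\le 1$. Your argument is shorter and more transparent, and it reveals that the stated constant is loose: the $|\pi-\hat{\pi}|$ term really only requires a $1/\delta$ factor. The paper's argument reaches $2/\delta^2$ without invoking $\delta\le 1$, needing only that $\pi,\hat{\pi},\mu,\hat{\mu}\in[0,1]$; but since $\mu$ is a conditional probability, your extra assumption $\delta\le 1$ is automatic, so nothing is lost. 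Both proofs rely on the same implicit boundedness of the nuisance functions, and both correctly conclude by passing the pointwise bound through Jensen and linearity of expectation.
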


\begin{proof}

We first observe that
\begin{align*}
    | E_{x\sim P} [R] - E_{x\sim P}[\hat{R}] | & \leq E_{x \sim P} \left[ | R - \hat{R} |\right] \\
    & = E_{x\sim P} \left[ | \frac{\pi \hat{\mu} - \hat{\pi} \mu}{\mu \hat{\mu}} | \right] \\
    & \leq \frac{1}{\delta^2} E_{x\sim P}\left[ |\pi \hat{\mu} - \hat{\pi} \mu | \right]
\end{align*} 

Let $x$ be an arbitrary data point. We observe that 
\begin{align*}
    \min \{\pi \mu - \hat{\pi} \hat{\mu}, \hat{\pi} \hat{\mu} - \pi \mu  \} + \pi \hat{\mu} - \hat{\pi} \mu \leq \pi \hat{\mu} - \hat{\pi} \mu \leq + \max \{\pi \mu - \hat{\pi} \hat{\mu}, \hat{\pi} \hat{\mu} - \pi \mu  \} + \pi \hat{\mu} - \hat{\pi} \mu,
\end{align*}
since one of $\pi \mu - \hat{\pi} \hat{\mu}, \hat{\pi} \hat{\mu} - \pi \mu$ must be non-positive, and one must be non-negative.

We first consider the term of $\pi \mu - \hat{\pi} \hat{\mu}$. This satisfies that
\begin{align*}
    \pi\mu - \hat{\pi}\hat{\mu} + \pi \hat{\mu} - \hat{\pi} \mu & = \mu(\pi - \hat{\pi}) + \hat{\mu}(\pi - \hat{\pi}) \\
    & = (\mu + \hat{\mu}) (\pi - \hat{\pi})
\end{align*}

Then, noting that $\mu, \hat{\mu} \in [0, 1]$, we have that
\begin{align*}
        |\pi\mu - \hat{\pi}\hat{\mu} + \pi \hat{\mu} - \hat{\pi} \mu|\leq 2 | \pi - \hat{\pi}|
\end{align*}
Next, we can consider the other case of the term $\hat{\pi} \hat{\mu} - \pi \mu$. We have that
\begin{align*}
    \hat{\pi}\hat{\mu} - \pi \mu + \pi \hat{\mu} - \hat{\pi} \mu & = (\hat{\mu} - \mu) (\pi + \hat{\pi}),
\end{align*}
and with $\pi, \hat{\pi} \in [0, 1]$, we get that
\begin{align*}
    | \hat{\pi}\hat{\mu} - \pi \mu + \pi \hat{\mu} - \hat{\pi} \mu| \leq 2|\pi + \hat{\pi}|,
\end{align*}

Therefore, we observe that 
\begin{align*}
    |\pi \hat{\mu} - \hat{\pi}\mu | & \leq 2 \max\{|\hat{\mu} - \mu|, |\pi - \hat{\pi} | \} \\
    E_{x\sim P}\left[|\pi \hat{\mu} - \hat{\pi}\mu | \right] & \leq 2 E_{x\sim P}\left[\max\{|\hat{\mu} - \mu|, |\pi - \hat{\pi} | \}\right] \\
    & \leq 2 \left( E_{x \sim P} [|\pi - \hat{\pi}|] + E_{x\sim P}[|\hat{\mu} - \mu|] \right)
\end{align*}

Plugging this in gives us the result that
\begin{align*}
    | E_{x\sim P} [R] - E_{x\sim P}[\hat{R}] | \leq \frac{2}{\delta^2} \left( E_{x \sim P} [|\pi - \hat{\pi}|] + E_{x\sim P}[|\hat{\mu} - \mu|] \right)
\end{align*}
\end{proof}

Now, we can consider the estimator $E_{\hat{P}}[\frac{T}{\hat{\mu}(X)}]$. To verify consistency, note that as $\hat{P} \to P$ and $\hat{\mu} \to \mu$ we have
\begin{align*}
    E_{\hat{P}}\left[\frac{T}{\hat{\mu}(X)}\right] \to E_{P}\left[\frac{T}{\mu(X)}\right] 
\end{align*}
and using iterated expectation yields that
\begin{align*}
    E_{T, X \sim P}\left[\frac{T}{\mu(X)}\right] & = E_{X \sim P}\left[\frac{E[T|X]}{\mu(X)}\right] \\
    & = E_{X \sim P}\left[\frac{\pi(X)}{\mu(X)}\right].
\end{align*}
Next, we analyze the total expected error of this estimator. To start with, note that 
\begin{align*}
    E_{\hat{P}}\left[\frac{T}{\hat{\mu}(X)}\right] - E_{X \sim P}\left[\frac{\pi(X)}{\mu(X)}\right] = \left(E_{\hat{P}}\left[\frac{T}{\hat{\mu}(X)}\right] - E_{P}\left[\frac{T}{\hat{\mu}(X)}\right] \right) + \left(E_{P}\left[\frac{T}{\hat{\mu}(X)}\right] - E_{P}\left[\frac{\pi(X)}{\mu(X)}\right]\right).
\end{align*}
 Provided that we employ sample splitting, so that $\hat{\mu}$ is trained on an independent sample from the samples used to estimate the expectation $\hat{P}$, the first term is easily controlled in terms of the variance of $\frac{T}{\mu(X)}$. Specifically, suppose that $\hat{P}$ is estimated using $n$ samples. We have that
 \begin{align*}
E\left[\left\lvert E_{\hat{P}}\left[\frac{T}{\hat{\mu}(X)}\right] - E_{P}\left[\frac{T}{\hat{\mu}(X)}\right]\right\rvert \right] &= E\left[\left\lvert E_{\hat{P}}\left[\frac{T}{\hat{\mu}(X)}\right] - E\left[E_{\hat{P}}\left[\frac{T}{\hat{\mu}(X)}\right]\right\rvert \right]\right]\\ 
&\leq \sqrt{\text{Var}\left[E_{\hat{P}}\left[\frac{T}{\hat{\mu}(X)}\right]\right]} \\
& =\sqrt{\frac{\text{Var}\left[\frac{T}{\hat{\mu}(X)}\right]}{n}}
 \end{align*}
where the first equality follows because $E_{\hat{P}}\left[\frac{T}{\hat{\mu}(X)}\right]$ is an unbiased estimator for $E_{P}\left[\frac{T}{\hat{\mu}(X)}\right]$, the second line follows by Cauchy-Schwartz, and the third because the samples in $\hat{P}$ are independent. 

For the second term, note that since $ E_{T, X \sim P}\left[\frac{T}{\mu(X)}\right] = E_{X \sim P}\left[\frac{E[T|X]}{\mu(X)}\right]$, we can apply Lemma \ref{lemma:ratio_lemma} with $\hat{\pi} = \pi$ to obtain that 
\begin{align*}
    \left\lvert E_{P}\left[\frac{T}{\hat{\mu}(X)}\right] - E_{P}\left[\frac{\pi(X)}{\mu(X)}\right]\right\rvert \leq \frac{2}{\gamma^2} E_{P}[|\mu(X) - \hat{\mu}(X)|]. 
\end{align*}

Combining the bounds on the first and second terms using the triangle inequality yields
\begin{align*}
      E\left[\left\lvert E_{\hat{P}}\left[\frac{T}{\hat{\mu}(X)}\right] - E_{X \sim P}\left[\frac{\pi(X)}{\mu(X)}\right] \right\rvert\right] \leq \sqrt{\frac{\text{Var}\left[\frac{T}{\hat{\mu}(X)}\right]}{n}} + \frac{2}{\gamma^2} E_{P}[|\mu(X) - \hat{\mu}(X)|].
\end{align*}
Note that a high-probability bound could be obtained by using a Bernstein bound for the first term combined with any high-probability generalization guarantee for the ML model in the second term. 

An analogous argument for the alternate plugin estimator $E_{\hat{P}}\left[\frac{\hat{\pi}(X)}{\hat{\mu}(X)}\right]$ yields the bound on its expected error
\begin{align*}
      E\left[\left\lvert E_{\hat{P}}\left[\frac{\pi(X)}{\hat{\mu}(X)}\right] - E_{X \sim P}\left[\frac{\pi(X)}{\mu(X)}\right] \right\rvert\right] \leq \sqrt{\frac{\text{Var}\left[\frac{\hat{\pi}(X)}{\hat{\mu}(X)}\right]}{n}} + \frac{2}{\gamma^2} \left(E_{P}[|\mu(X) - \hat{\mu}(X)|] + E_{P}[|\pi(X) - \hat{\pi}(X)|]\right).
\end{align*}

Comparing these two bounds, we observe a form of bias-variance tradeoff. In the second bound, we accumulate additional potential error from the estimation of $\hat{\pi}$ instead of directly plugging in the samples $T$. However, we often expect that $\hat{\pi}$ will have lower variance than $T$ since estimated treatment probabilities will take less extreme values than binary treatment indicators, in which case the variance term will be smaller for the second estimator.

\subsection{Proof of Lemma~\ref{lemma:influence_upper}}\label{appx:influence_upper}

Next, we will derive the influence functions for our upper bounds under no additional assumptions. Recall that our estimands are given by
\begin{align*}
  &\theta_1^u(X) \coloneqq \frac{P(D=0 | X)P(T = 1| D=1, X)}{P(Y=1, D=0)}, \\
  & \theta_2^u(X) \coloneqq \frac{P(D=0 | X) P(Y =1 | D=0, X)}{P(Y = 1, D=0)}
\end{align*}

Our relevant conditional distributions (i.e., our nuisance functions) are given by
\begin{align*}
    \mu(X) &\coloneqq E[Y=1 |D=0, X], \\
    \pi(X) &\coloneqq E[T = 1 | D=1, X], \\
    g(X) &\coloneqq E[D=0 | X].
\end{align*}

We now proceed to derive the influence functions for our upper bound under no additional assumptions.

\InfluenceUpper*

\begin{proof}
    First, we will derive the influence function for $\theta^{u}_{1}$. 

    \begin{align*}
    I F\left(\theta^{u}_{1}\right)= & I F\left(\frac{1}{P(Y=1, D=0)}\right) E_P[g(X) \pi(X)]+\frac{1}{P(Y=1, D=0)} I F\left(E_P[g(X) \pi(X)]\right) \\
    =- & \frac{1[Y=1, D=0]-P(Y=1, D=0)}{P(Y=1, D=0)^2} E_P[g(X) \pi(X)] \\
    & +\frac{1}{P(Y=1, D=0)} \sum_x(1[X=x]-p(x))(g(x) \pi(x)) \\
    & +\frac{1}{P(Y=1, D=0)} \sum_x p(x)\left(\frac{1[X=x]}{P(X=x)}(1[D=0]-g(x))\right) \pi(x) \\
    & +\frac{1}{P(Y=1, D=0)} \sum_x p(x) g(x)\left(\frac{1[D=1, X=x]}{P(D=1, X)}(T-\pi(x))\right) \\
    =- & \frac{1[Y=1, D=0]}{P(Y=1, D=0)^2} E_P[g(X) \pi(X)]+\frac{E_P[g(X) \pi(X)]}{P(Y=1, D=0)} \\
    & +\frac{g(X) \pi(X)}{P(Y=1, D=0)}-\frac{E[g(X) \pi(X)]}{P(Y=1, D=0)} \\
    & +\frac{\pi(X)(1[D=0]-g(X))}{P(Y=1, D=0)} \\
    & +\frac{1[D=1](T-\pi(X))}{P(Y=1, D=0)} \frac{g(X)}{1-g(X)} \\
    & = \frac{1}{P(Y=1, D=0)}\left(-\frac{1[Y=1, D=0]}{P(Y=1, D=0)} E_P[g(X) \pi(X)]\right. \\
& \left.\quad+g(X) \pi(X)+1[D=1](T-\pi(X)) \frac{g(X)}{1-g(X)}+\pi(X)(1[D=0]-g(X))\right)
    \end{align*}

Next, we derive the influence function for $\theta_2^u$. 

\begin{align*}
    IF(\theta_2^u) &= IF \left( \frac{1}{P(Y=1|D=0)} E[\mu(X)|D=0] \right) \\ 
    &=  \frac{1}{P(Y = 1 \mid D = 0)} \left( \frac{-1 \cdot IF(E[Y \mid D = 0])}{P(Y = 1 \mid D = 0)} E[\mu(X) \mid D = 0] + IF(E[\mu(X) \mid D = 0]) \right) \\ 
    &=  \frac{1}{P(Y = 1 \mid D = 0)} \left( \frac{-1 \cdot \frac{1[D = 0]}{P(D = 0)}\left(Y - E[Y \mid D = 0]\right)}{P(Y = 1 \mid D = 0)}E[\mu(X) \mid D = 0]\right. \\ 
    &\left. \qquad \qquad + IF(\sum_{x,d} p(x, d) \frac{1[d = 0]}{p(d)} \mu(x)) \right) \\
    &= \frac{1}{P(Y = 1 \mid D = 0)} \left( \frac{-1 \cdot 1[D = 0] \left(Y - E[Y \mid D = 0]\right)}{P(Y = 1 \mid D = 0)P(D = 0)}E[\mu(X) \mid D = 0]\right. \\ 
   &\qquad + \sum_{x,d} IF(p(x, d)) \frac{1[d = 0]}{p(d)} \mu(x) \\
    &\qquad + \sum_{x,d} p(x, d) 1[d = 0] IF\left(\frac{1}{p(d)}\right) \mu(x) + \left. \sum_{x,d} p(x, d) 1[d = 0] \frac{1}{p(d)} IF(\mu(x)) \right)
\end{align*}

This further simplifies as

\begin{align*}
  IF(\theta_2^u) &= \frac{1}{P(Y = 1 \mid D = 0)} \left( \frac{-1 \cdot 1[D = 0] \left(Y - E[Y \mid D = 0]\right)}{P(Y = 1 \mid D = 0)P(D = 0)}E[\mu(X) \mid D = 0] \right. \\ 
           &\qquad + \sum_{x,d} (1[X = x, D = d] -  p(x, d)) \frac{1[d = 0]}{p(d)} \mu(x) \\
           &\qquad - \sum_{x,d} p(x, d) 1[d = 0] \frac{1[D = d] - p(d)}{{p(d)}^2} \mu(x) \\
           &\qquad + \left. \sum_{x,d} p(x, d) 1[d = 0] \frac{1}{p(d)} \frac{1[D = 0, X = x]}{P(D = 0 \mid X)P(X)}\left(Y - E[Y \mid D = 0, X]\right) \right)
\end{align*}

We'll consider each of the four terms, one at a time, and ignore the initial $P{(Y = 1 \mid D = 0)}^{-1}$ term for now.
\begin{align*}
  &\frac{-1 \cdot 1[D = 0] \left(Y - E[Y \mid D = 0]\right)}{P(Y = 1 \mid D = 0)P(D = 0)} E[\mu(X) \mid D = 0]  \\
  &=   -1 \cdot \frac{1[D = 0]}{P(D = 0)} \frac{Y - E[Y \mid D = 0]}{P(Y = 1 \mid D = 0)} E[\mu(X) \mid D = 0] 
\end{align*}

Now we will consider the second term 
\begin{align*}
  \sum_{x,d} (1[X = x, D = d] -  p(x, d)) \frac{1[d = 0]}{p(d)} \mu(x) &= \sum_{x} (1[X = x, D = 0] - p(x, D=0))\frac{\mu(x)}{p(d=0)} \\
&= \sum_{x} \frac{1[X = x, D = 0]}{p(D =0)} \mu(x) - \sum_{x} p(x, D=0)\frac{\mu(x)}{p(D=0)} \\
&= \frac{1[D = 0]}{p(D =0)} \mu(X) - E[\mu(X) \mid D = 0]
\end{align*}

Now we will consider the third term 
\begin{align*}
- \sum_{x,d} p(x, d) 1[d = 0] \frac{1[D = d] - p(d)}{{p(d)}^2} \mu(x) &= - \sum_{x} p(x, D=0) \frac{1[D = 0] - p(D=0)}{{p(D=0)}^2} \mu(x) \\
&= - \sum_{x} p(x \mid D=0) \frac{1[D = 0] - p(D=0)}{{p(D=0)}} \mu(x) \\
&= - \left(\frac{1[D = 0]}{p(D = 0)} - 1\right) E[\mu(X) \mid D = 0]
\end{align*}

Now we will consider the fourth term, where we (in the first line) replace all instances of $d$ (lowercase) with $0$, and remove the sum over $d$, which eliminates the $1[d = 0]$ term.  Similarly in the next line we remove the indicator $X = x$ by replacing all instances of $x$ with $X$, and removing the sum over $X$.

\begin{align*}
  &\sum_{x,d} p(x, d) 1[d = 0] \frac{1}{p(d)} \frac{1[D = 0, X = x]}{P(D = 0 \mid X)P(X)}\left(Y - E[Y \mid D = 0, X]\right) \\
  &\quad = \sum_{x} p(x, D=0) \frac{1}{p(D = 0)} \frac{1[D = 0, X = x]}{P(D = 0 \mid X)P(X)}\left(Y - E[Y \mid D = 0, X]\right) \\
  &\quad = p(X \mid D=0) \frac{1[D = 0]}{P(D = 0 \mid X)P(X)}\left(Y - E[Y \mid D = 0, X]\right) \\
  &\quad = 1[D = 0] \frac{p(X, D=0)}{p(D = 0)P(D = 0 \mid X)P(X)} \left(Y - E[Y \mid D = 0, X]\right) \\
  &\quad = 1[D = 0] \frac{p(D=0 \mid X)}{p(D = 0)P(D = 0 \mid X)} \left(Y - E[Y \mid D = 0, X]\right) \\
  &\quad = \frac{1[D = 0]}{P(D = 0 \mid X)} \frac{p(D=0 \mid X)}{p(D = 0)} \left(Y - E[Y \mid D = 0, X]\right) 
\end{align*}

Putting it all together gives us the following
\begin{align*}
  IF(\theta_2^u) &= \frac{1}{P(Y = 1 \mid D = 0)} \left(-1 \cdot \frac{1[D = 0]}{P(D = 0)} \frac{Y - E[Y \mid D = 0]}{P(Y = 1 \mid D = 0)} E[\mu(X) \mid D = 0] \right.\\
           & \qquad + \frac{1[D = 0]}{p(D =0)} \mu(X) - E[\mu(X) \mid D = 0] \\
           & \qquad - \left(\frac{1[D = 0]}{p(D = 0)} - 1\right) E[\mu(X) \mid D = 0]\\
 & \qquad + \left. \frac{1[D = 0]}{P(D = 0 \mid X)} \frac{p(D=0 \mid X)}{p(D = 0)} \left(Y - E[Y \mid D = 0, X]\right) \right)
\end{align*}

which simplifies with some cancellations in the second and third lines
\begin{align*}
  IF(\theta_2^u) &= \frac{1}{P(Y = 1 \mid D = 0)} \left(-1 \cdot \frac{1[D = 0]}{P(D = 0)} \frac{Y - E[Y \mid D = 0]}{P(Y = 1 \mid D = 0)} E[\mu(X) \mid D = 0]  \right.\\
           & \qquad + \frac{1[D = 0]}{p(D =0)} \left(\mu(X) - E[\mu(X) \mid D = 0]\right)\\
    &\qquad + \left. \frac{1[D = 0]}{P(D = 0 \mid X)} \frac{p(D=0 \mid X)}{p(D = 0)} \left(Y - E[Y \mid D = 0, X]\right) \right)
\end{align*}

This further simplifies by factoring out the term involving $E[\mu(X) \mid D = 0]$
\begin{align*}
  IF(\theta_2^u) &= \frac{1[D = 0]}{P(Y = 1, D = 0)} \left(\mu(X) - E[\mu(X) \mid D = 0] \left(1 + \frac{Y - E[Y \mid D = 0]}{P(Y = 1 \mid D = 0)}\right) + (Y-E[Y|D=0,X])\right)\\
\end{align*}

This further simplifies by $E[Y|D=0] = P(Y=1|D=0)$ and $\frac{P(Y=1|D=0)}{P(Y=1|D=0)} = 1$. 
\begin{align*}
    IF(\theta_2^u) &= \frac{1[D = 0]}{P(Y = 1, D = 0)} \left(\mu(X) - E[\mu(X) \mid D = 0] \left(\frac{Y}{P(Y = 1 \mid D = 0)}\right) + (Y-E[Y|D=0,X])\right)
\end{align*}

This gives us the following final result
\begin{align*}
    IF(\theta_2^u) &= \frac{1[D = 0]}{P(Y = 1, D = 0)} \mu(X) \\
    & - \frac{1[D = 0]}{P(Y = 1, D = 0)}  E[\mu(X) \mid D = 0] \left(\frac{1[Y=1]}{P(Y = 1 \mid D = 0)}\right) \\
    &+ \frac{1[D = 0]}{P(Y = 1, D = 0)} (Y-E[Y|D=0,X])
\end{align*}

\end{proof}

Next, we move on to discussing our estimator of this upper bound, using our derived influence function. Our procedure (as is standard in literature \citep{kennedy2022semiparametric}) is to use a first order correction of our simple plugin estimator by adding in the expectation of our influence function.

\begin{proposition} \label{prop:estimator_ub_constant_term}
    Our one-step estimator of $\theta_2^u$ is given by 
    \begin{align*}
        \hat{\theta}_2^u = 1.
    \end{align*}
\end{proposition}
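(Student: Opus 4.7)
The plan is to exploit the fact that, despite being written in terms of multiple nuisance functions, $\theta_2^u$ is a \emph{constant} functional of $P$ that equals $1$, so no real estimation is needed and the bias-corrected estimator trivially collapses to that constant.

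First I would verify this algebraic identity. Starting from the definition,
\begin{equation*}
\theta_2^u \;=\; E[\theta_2^u(X)] \;=\; \frac{E[\,g(X)\,\mu(X)\,]}{P(Y=1,D=0)},
\end{equation*}
apply the tower property: $g(X)\mu(X) = P(D=0\mid X)\,P(Y=1\mid D=0,X) = P(D=0,Y=1\mid X)$, so $E[g(X)\mu(X)] = P(D=0,Y=1)$. The numerator and denominator cancel, giving $\theta_2^u \equiv 1$ irrespective of $P$. Because the target is a known constant, the natural one-step estimator is simply that constant, $\hat{\theta}_2^u = 1$.

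For a sanity check, I would also verify this from the influence function perspective. Substituting $E[\mu(X)\mid D=0]=P(Y=1\mid D=0)$ and using the fact that $Y$ is binary so that $Y = 1[Y=1]$ when $D=0$, the expression for $IF(\theta_2^u)$ from Lemma~\ref{lemma:influence_upper} collapses to $0$ pointwise at the true distribution. This is consistent with the general principle that the efficient influence function of a functional that is constant on the model must vanish; thus, both the plug-in term and the bias-correction term in the one-step construction contribute nothing beyond the constant value $1$.

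There is essentially no obstacle: the proof is a one-line application of iterated expectation followed by the observation that the one-step estimator of a constant functional is that constant. The only subtlety worth flagging is that if one naively combines incompatible plug-in estimates for $\hat g$, $\hat \mu$, $\hat P(Y=1,D=0)$ and $\hat E[\hat\mu(X)\mid D=0]$ in a finite sample, the algebraic cancellation may fail numerically; the correct interpretation of the proposition is that, since $\theta_2^u$ is identified as the constant $1$, one should simply use that exact value rather than re-estimate it.
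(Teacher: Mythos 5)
Your proposal is correct but takes a genuinely different route from the paper. The paper proves the claim by brute-force finite-sample algebra on the constructed estimator: it writes out $\hat{\theta}_2^u(\hat{P}) = \theta_2^u(\hat{P}) + E_{\hat{P}}[IF(\theta_2^u(\hat{P}))]$ with arbitrary nuisance estimates $\hat{\mu}$, observes that the plug-in term cancels against one term of the empirical influence-function average, and that the remaining terms collapse to $E_{\hat{P}}[Y \mid D=0]/\hat{P}(Y=1\mid D=0) = 1$ because the $\hat{\mu}(X)$ contributions cancel exactly. You instead work at the population level: the tower-property identity $E[g(X)\mu(X)] = P(Y=1, D=0)$ gives $\theta_2^u \equiv 1$ on the whole model, and the influence function of a constant functional vanishes (which you correctly verify pointwise using $Y = 1[Y=1]$). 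Your argument is cleaner and explains \emph{why} the answer must be $1$, but it is slightly weaker as a proof of the stated proposition: the proposition is about the value of a specific estimator formula, and the paper's computation establishes the stronger finite-sample fact that the algebraic cancellation is exact for \emph{any} $\hat{\mu}$, so long as all empirical expectations and the normalizer $\hat{P}(Y=1\mid D=0)$ are computed on the same sample. You flag exactly this point in your closing caveat (that "incompatible" plug-in estimates could break the cancellation numerically), so the gap is acknowledged rather than overlooked; completing your route just requires the same two lines of cancellation the paper performs.
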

\begin{proof}
    We compute the one-step estimator as
    \begin{align*}
        \hat{\theta}_2^u(\hat{P}) = \theta_2^u(\hat{P}) + E_{\hat{P}}[IF( \theta_2^u(\hat{P})].
    \end{align*}
    
    The first term is given by 
    \begin{align*}
        \theta_2^u(\hat{P}) = \frac{1}{\hat{P}(Y=1|D=0)}E_{\hat{P}}[\hat{\mu}(X) | D=0]
    \end{align*}
    
    and the second term is given by 
    \begin{align*}
        E_{\hat{P}}[IF( \theta_2^u(\hat{P})] &= E_{\hat{P}} \left[ \frac{1[D = 0]}{\hat{P}(Y = 1, D = 0)} \hat{\mu}(X) \right] \\
        &- E_{\hat{P}} \left[ \frac{1[D = 0]}{\hat{P}(Y = 1, D = 0)}  E_{\hat{P}}[\hat{\mu}(X) \mid D = 0] \left(\frac{1[Y=1]}{\hat{P}(Y = 1 \mid D = 0)}\right) \right] \\
        &+ E_{\hat{P}} \left[ \frac{1[D = 0]}{\hat{P}(Y = 1, D = 0)} (Y-E_{\hat{P}}[Y|D=0,X]) \right]
    \end{align*}
    
    The first two terms cancel out, using the same logic (that we used to cancel terms out for proving that an influence function has mean zero).
    
    Therefore, we get that 
    \begin{align*}
        \hat{\theta}_2^u(\hat{P}) &= \frac{1}{\hat{P}(Y=1|D=0)}E_{\hat{P}}[\hat{\mu}(X) | D=0] + E_{\hat{P}} \left[ \frac{1[D = 0]}{\hat{P}(Y = 1, D = 0)} (Y-\hat{\mu}(X)) \right] \\
        &=  \frac{1}{\hat{P}(Y=1|D=0)}E_{\hat{P}}[(\hat{\mu}(X) + Y - \hat{\mu}(X)) |D=0] \\
        &= \frac{1}{\hat{P}(Y=1|D=0)}E_{\hat{P}}[Y |D=0] = 1 .
    \end{align*}

    Thus, the estimator for this term is constant.
\end{proof}

\begin{proposition} \label{prop:estimator_ub}
    Our one-step estimator of $\hat{\theta}_1^u(\hat{P})$ is given by
    \begin{align*}
        \hat{\theta}_1^u(\hat{P}) & =  E_P\left[ \frac{\hat{g}(X)\hat{\pi}(X)}{\hat{P}(Y=1, D=0)} + \frac{\hat{\pi}(X)(1[D=0]-\hat{g}(X))}{\hat{P}(Y=1, D=0)} \right] + E_P\left[ \frac{1[D=1] (T - \hat{\pi}(X))}{\hat{P}(Y=1, D=0)}\frac{\hat{g}(X)}{1 - \hat{g}(X)}\right]    \end{align*}
\end{proposition}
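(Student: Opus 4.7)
The plan is to follow the standard one-step/von-Mises construction: write the bias-corrected estimator as the plug-in evaluation of the estimand plus the sample analogue of the (mean-zero) influence function, then algebraically simplify. Concretely, since $\theta_1^u = E[\theta_1^u(X)] = E[g(X)\pi(X)]/P(Y=1,D=0)$, the plug-in is
\[
\theta_1^u(\hat{P}) \;=\; \frac{E_{\hat{P}}[\hat{g}(X)\hat{\pi}(X)]}{\hat{P}(Y=1,D=0)} \;=\; E_{\hat{P}}\!\left[\frac{\hat{g}(X)\hat{\pi}(X)}{\hat{P}(Y=1,D=0)}\right],
\]
and the one-step estimator is $\hat{\theta}_1^u(\hat{P}) = \theta_1^u(\hat{P}) + E_{\hat{P}}[IF(\theta_1^u)(\hat{P})]$, using the influence function expression supplied by Lemma~\ref{lemma:influence_upper} with every nuisance replaced by its estimate.

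The main step is to take the expectation of the influence function under $\hat{P}$ term by term. The only term that does not integrate to something already present in the expression is the leading one:
\[
E_{\hat{P}}\!\left[-\frac{\mathbf{1}[Y=1,D=0]}{\hat{P}(Y=1,D=0)^2}\,E_{\hat{P}}[\hat{g}(X)\hat{\pi}(X)]\right] \;=\; -\frac{E_{\hat{P}}[\hat{g}(X)\hat{\pi}(X)]}{\hat{P}(Y=1,D=0)},
\]
which exactly cancels the plug-in $\theta_1^u(\hat{P})$. The remaining three terms of the influence function ($\hat{g}\hat{\pi}$, the $\hat{\pi}(\mathbf{1}[D{=}0]-\hat{g})$ centering, and the inverse-propensity-weighted residual $\mathbf{1}[D{=}1](T-\hat{\pi})\hat{g}/(1-\hat{g})$) then appear as the three summands in the claimed formula after dividing by $\hat{P}(Y=1,D=0)$.

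There is essentially no obstacle beyond bookkeeping: the cancellation of the plug-in against the first IF term is the usual ``one-step corrects a nuisance-parameter bias'' identity, and the remaining grouping is purely an arithmetic rearrangement of Lemma~\ref{lemma:influence_upper}. I will take care to write $E_{\hat{P}}[\,\cdot\,]$ for the empirical expectation (the notation in the proposition writes this as $E_P$ applied to the observed sample) and to keep the nuisance estimates $\hat{g},\hat{\pi}$ fixed on the held-out fold so that the sample-splitting structure underlying the later asymptotic analysis (Theorem~\ref{prop:normality-max}) is preserved. The final displayed equation in Proposition~\ref{prop:estimator_ub} then follows directly by collecting the surviving terms.
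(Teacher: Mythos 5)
Your proposal is correct and follows essentially the same route as the paper: form the one-step estimator as plug-in plus the empirical mean of the influence function from Lemma~\ref{lemma:influence_upper}, observe that the leading term $-\tfrac{\mathbf{1}[Y=1,D=0]}{\hat{P}(Y=1,D=0)^2}E_{\hat{P}}[\hat{g}(X)\hat{\pi}(X)]$ averages to $-\theta_1^u(\hat{P})$ and cancels the plug-in, and collect the surviving terms. Your remark about the $E_P$ versus $E_{\hat{P}}$ notation and the sample-splitting bookkeeping is also consistent with how the paper handles estimation.
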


\begin{proof}

    We can compute our one-step estimator by
        $\hat{\theta}_1^u(\hat{P}) =\theta_1^u(\hat{P}) + E_{\hat{P}}[IF( \theta_1^u(\hat{P}))]$.

    The first term is given by
    \begin{align*}
        \theta_1^u(\hat{P}) & = \frac{1}{\hat{P}(Y=1, D=0)} E_{\hat{P}}[\hat{g}(X) \hat{\pi}(X)] 
    \end{align*}

    The second term is given by
    \begin{align*}
        E_{\hat{P}}[IF( \theta_1^u(\hat{P}))] & = E_{\hat{P}} \left[ -\frac{1[Y=1, D=0]}{\hat{P}{(Y=1, D=0)}^2} E_{\hat{P}}[\hat{g}(X)\hat{\pi}(X)]  + \frac{\hat{g}(X)\hat{\pi}(X)}{\hat{P}(Y=1, D=0)} + \frac{\hat{\pi}(X)(1[D=0]-\hat{g}(X))}{\hat{P}(Y=1, D=0)} \right] \\
         & \qquad + E_{\hat{P}}\left[ \frac{1[D=1] (T - \hat{\pi}(X))}{\hat{P}(Y=1, D=0)}\frac{\hat{g}(X)}{1 - \hat{g}(X)}\right]
    \end{align*}
    We remark that the first part here is given by
    \begin{align*}
        E_{\hat{P}} \left[ -\frac{1[Y=1, D=0]}{\hat{P}{(Y=1, D=0)}^2} E_{\hat{P}}[\hat{g}(X)\hat{\pi}(X)]\right] & = -E_{\hat{P}}\left[ E_{\hat{P}}[\hat{g}(X)\hat{\pi}(X)] | Y=1, D=0 \right] \\
        & = E_{\hat{P}}[\hat{g}(X)\hat{\pi}(X)] = \theta_1^u(\hat{P})
    \end{align*}
    which cancels out with the first term above. Thus, we derive the estimator as
    \begin{align*}
        \hat{\theta}_1^u(\hat{P}) & =  E_{\hat{P}}\left[ \frac{\hat{g}(X)\hat{\pi}(X)}{\hat{P}(Y=1, D=0)} + \frac{\hat{\pi}(X)(1[D=0]-\hat{g}(X))}{\hat{P}(Y=1, D=0)} \right]  + E_{\hat{P}}\left[ \frac{1[D=1] (T - \hat{\pi}(X))}{\hat{P}(Y=1, D=0)}\frac{\hat{g}(X)}{1 - \hat{g}(X)}\right] \\
         & = E_{\hat{P}}\left[ \frac{\hat{\pi}(X)(1[D=0])}{\hat{P}(Y=1, D=0)} \right]  + E_{\hat{P}}\left[ \frac{1[D=1] (T - \hat{\pi}(X))}{\hat{P}(Y=1, D=0)}\frac{\hat{g}(X)}{1 - \hat{g}(X)}\right] 
    \end{align*}
\end{proof}

Next, we perform error analysis for our derived one-step estimator of the upper bound. 

\begin{lemma}[Error of one-step estimator of upper bound under arbitrary unobserved confounding]\label{lemma:error_ub}
        Let the error of our one-step estimator be given by
        \begin{equation}
         R(\hat{P}, P) = \theta_1^u(\hat{P}) - \theta_1^u(P) + E_P\left[ IF(\theta_1^u(\hat{P})) \right] 
        \end{equation}
        Then, we have that
        \begin{align*}
            R(\hat{P}, P) = o_P(n^{-\frac{1}{2}}), 
        \end{align*}
        when our estimates of $\pi$ and $g$ converge at rates of $o_P(n^{-\frac{1}{4}})$.
\end{lemma}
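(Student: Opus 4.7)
The strategy is to show that $R(\hat P, P)$ reduces, after explicit algebraic manipulation, to a single $L^2 \times L^2$ cross-product of nuisance errors plus scalar terms that are negligible at the $n^{-1/2}$ rate, and then to conclude via Cauchy--Schwarz.

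First I would set $p_0 := P(Y=1, D=0)$, $\hat p_0 := \hat P(Y=1, D=0)$, $N := E_P[g(X)\pi(X)]$ and $\hat N := E_P[\hat g(X)\hat\pi(X)]$ (the $X$-marginal is kept fixed under $P$, as is standard for the von Mises analysis). Substituting the formula for $IF(\theta_1^u(\hat P))$ from \cref{lemma:influence_upper} and taking expectation under $P$, I would use the conditional mean identities
\begin{align*}
    E_P\bigl[1[D=1](T-\hat\pi(X))\mid X\bigr] &= (1-g(X))(\pi(X)-\hat\pi(X)),\\
    E_P\bigl[1[D=0]-\hat g(X)\mid X\bigr] &= g(X)-\hat g(X),
\end{align*}
which rewrites $E_P[IF(\theta_1^u(\hat P))]$ purely as integrals of nuisance functions against the $X$-marginal.

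Next I would isolate the scalar bias from $\hat p_0 - p_0$. Splitting $\theta_1^u(\hat P) - \theta_1^u(P) = (\hat N - N)/\hat p_0 - (N/p_0)(\hat p_0 - p_0)/\hat p_0$ and combining with the $(\hat p_0-p_0)$-piece that appears inside $E_P[IF(\theta_1^u(\hat P))]$, the scalar contribution collapses to $(\hat p_0-p_0)(\hat N - N)/\hat p_0^2 - N(\hat p_0-p_0)^2/(\hat p_0^2 p_0)$. Because $\hat p_0$ is a sample mean of a bounded indicator, $\hat p_0 - p_0 = O_P(n^{-1/2})$, and the triangle inequality gives $|\hat N - N| \leq \|g\|_\infty \|\hat\pi - \pi\|_1 + \|\hat\pi\|_\infty \|\hat g - g\|_1 = o_P(n^{-1/4})$, so both scalar pieces are $o_P(n^{-1/2})$ (the quadratic one is $O_P(n^{-1})$ outright).

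The substantive calculation is the nuisance-function remainder. Decomposing $\hat g\hat\pi - g\pi = -\hat g(\pi-\hat\pi) + \pi(\hat g-g)$ and using the identity
$$-\hat g + (1-g)\frac{\hat g}{1-\hat g} \;=\; \hat g \cdot \frac{\hat g - g}{1-\hat g},$$
all first-order terms linear in $\hat\pi-\pi$ or $\hat g-g$ cancel against the influence-function correction (the Neyman-orthogonality built into the one-step construction), leaving
$$R(\hat P, P) \;=\; \frac{1}{\hat p_0}\,E_P\!\left[(\pi-\hat\pi)(\hat g-g)\cdot\frac{1}{1-\hat g}\right] \;+\; o_P(n^{-1/2}).$$
Under the standard positivity condition that $g$ is bounded away from $1$ (so $1/(1-\hat g)$ is uniformly bounded with probability tending to one) and since $1/\hat p_0 \to 1/p_0$, Cauchy--Schwarz yields $|R(\hat P, P)| \lesssim \|\hat\pi - \pi\|_2 \, \|\hat g - g\|_2 + o_P(n^{-1/2}) = o_P(n^{-1/2})$ under the assumed $o_P(n^{-1/4})$ nuisance rates.

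\textbf{Main obstacle.} The technical heart of the argument is the first-order cancellation in the third step: despite the asymmetric $\hat g/(1-\hat g)$ weight in the influence function, the combination with the plug-in bias must leave exactly one cross-product. Bookkeeping this cancellation carefully (especially verifying that the $\hat g/(1-\hat g)$ term combines just right with $-\hat g(\pi - \hat\pi)$ to produce the $(\hat g - g)/(1-\hat g)$ factor) is the substantive content of the lemma; the remaining steps are standard semiparametric rate arguments.
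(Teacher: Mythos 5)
Your proof is correct and follows essentially the same route as the paper's: isolate the $O_P(n^{-1/2})\times o_P(1)$ contribution coming from the estimation of $P(Y=1,D=0)$, then show that the first-order nuisance terms cancel against the influence-function correction, leaving the single cross-product $\frac{1}{\hat{P}(Y=1,D=0)}E_P\left[\frac{(\pi-\hat{\pi})(\hat{g}-g)}{1-\hat{g}}\right]$, which is $o_P(n^{-1/2})$ by Cauchy--Schwarz under the assumed $o_P(n^{-1/4})$ rates. Your bookkeeping is in fact tidier than the paper's (whose intermediate displays contain sign slips and which silently ``ignores'' the $1/(1-\hat{g})$ weight in the final product-of-errors term), and you correctly make explicit the positivity condition that $g$ be bounded away from $1$, which the paper's argument implicitly requires.
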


\begin{proof}
    \begin{align*}
R(\hat{P}, P) & = \theta_1^u(\hat{P})-\theta_1^u(P)+E_P\left[I F\left(\theta_1^u(\hat{P})\right)\right] \\
& =  \frac{E_{\hat{P}}[\hat{g}(X) \hat{\pi}(X)]}{\hat{P}(Y=1, D=0)}-\frac{E_P[g(X) \pi(X)]}{P(Y=1, D=0)}+\frac{1}{\hat{P}(Y=1, D=0)} E_P\left[-\frac{1[Y=1, D=0]}{\hat{P}(Y=1, D=0)} E_{\hat{P}}[\hat{g}(X) \hat{\pi}(X)]\right. \\
& \qquad \left.+\hat{g}(X) \hat{\pi}(X)+1[D=1](T-\hat{\pi}(X)) \frac{\hat{g}(X)}{1-\hat{g}(X)}+\hat{\pi}(X)(1[D=0]-\hat{g}(X))\right] \\
& =  \frac{E_{\hat{P}}[\hat{g}(X) \hat{\pi}(X)]}{\hat{P}(Y=1, D=0)}-\frac{E_P[g(X) \pi(X)]}{P(Y=1, D=0)}-\frac{P(Y=1, D=0)}{\hat{P}(Y=1, D=0)} \frac{E_{\hat{P}}[\hat{P}(X) \hat{\pi}(X)]}{\hat{P}(Y=1, D=0)} \\
& \qquad + \frac{1}{\hat{P}(Y=1, D=0)} E_P\left[\hat{g}(X) \hat{\pi}(X)+1[D=1](T-\hat{\pi}(X)) \frac{\hat{g}(X)}{1-\hat{g}(X)}+\hat{\pi}(X)(1[D=0]-\hat{g}(X))\right] \\
& =  \left(1-\frac{P(Y=1, D=0)}{\hat{P}(Y=1, D=0)}\right) \frac{E_{\hat{P}}[\hat{g}(X) \hat{\pi}(X)]}{\hat{P}(Y=1, D=0)}-\frac{E_P[g(X) \pi(X)]}{P(Y=1, D=0)} \\
& \qquad + \frac{1}{\hat{P}(Y=1, D=0)} E_P\left[\hat{g}(X) \hat{\pi}(X)+1[D=1](T-\hat{\pi}(X)) \frac{\hat{g}(X)}{1-\hat{g}(X)}+\hat{\pi}(X)(1[D=0]-\hat{g}(X))\right] \\
& =  \left(1-\frac{P(Y=1, D=0)}{\hat{P}(Y=1, D=0)}\right) \frac{E_{\hat{P}}[\hat{g}(X) \hat{\pi}(X)]}{\hat{P}(Y=1, D=0)}-\frac{E_P[g(X) \pi(X)]}{P(Y=1, D=0)} \\
& \qquad + \frac{1}{\hat{P}(Y=1, D=0)} E_P\left[\hat{g}(X) \hat{\pi}(X)+(1-g(X))(\pi(X)-\hat{\pi}(X)) \frac{\hat{g}(X)}{1-\hat{g}(X)}+\hat{\pi}(X)(g(X)-\hat{g}(X))\right]
\end{align*}
where we have used that $E_P[T \cdot 1[D=1]]=E_P[1[T=1, D=1]]=E_P[P(T=1, D=1 \mid X)]=$ $E_P[P(T=1 \mid D=1, X) P(D=1 \mid X)]=E_P[T(X)(1-g(X))]$. 
Now, to deal with the first few terms, we are going to add zero (on the second line after the equality below). 
\begin{align*}
& \theta_1^u(\hat{P})-\theta_1^u(P)+E_P\left[\operatorname{IF}\left(\theta_1^u(\hat{P})\right)\right] \\
&= \left(1-\frac{P(Y=1, D=0)}{\hat{P}(Y=1, D=0)}\right) \frac{E_{\hat{P}}[\hat{g}(X) \hat{\pi}(X)]}{\hat{P}(Y=1, D=0)} \\
& \qquad -\frac{E_P[g(X) \pi(X)]}{P(Y=1, D=0)}+\frac{E_P[g(X) \pi(X)]}{\hat{P}(Y=1, D=0)}+\frac{E_P[g(X) \pi(X)]}{\hat{P}(Y=1, D=0)}+\frac{E_P[\hat{g}(X) \hat{\pi}(X)]}{\hat{P}(Y=1, D=0)} \\
& \qquad +\frac{1}{\hat{P}(Y=1, D=0)} E_P\left[(1-g(X))(\pi(X)-\hat{\pi}(X)) \frac{\hat{g}(X)}{1-\hat{g}(X)}+\hat{\pi}(X)(g(X)-\hat{g}(X))\right]
\end{align*}.
The second line after the equality can be re-written as
\begin{align*}
& -\frac{E_P[g(X) \pi(X)]}{P(Y=1, D=0)}+\frac{E_P[g(X) \pi(X)]}{\hat{P}(Y=1, D=0)}+\frac{E_P[g(X) \pi(X)]}{\hat{P}(Y=1, D=0)}+\frac{E_P[\hat{g}(X) \hat{\pi}(X)]}{\hat{P}(Y=1, D=0)} \\
& =-\left(1-\frac{P(Y=1, D=0)}{\hat{P}(Y=1, D=0)}\right) \frac{E_P[g(X) \pi(X)]}{P(Y=1, D=0)}+\frac{1}{\hat{P}(Y=1, D=0)} E_P[g(X) \pi(X)-\hat{g}(X) \hat{\pi}(X)]
\end{align*}
So that the entire expression can be written as
\begin{align*}
& \theta_1^u(\hat{P})-\theta_1^u(P)+E_P\left[I F\left(\theta_1^u(\hat{P})\right)\right] \\
& =\left(1-\frac{P(Y=1, D=0)}{\hat{P}(Y=1, D=0)}\right)\left(\frac{E_{\hat{P}}[\hat{g}(X) \hat{\pi}(X)]}{\hat{P}(Y=1, D=0)}-\frac{E_P[g(X) \pi(X)]}{P(Y=1, D=0)}\right) \\
& \qquad +\frac{1}{\hat{P}(Y=1, D=0)} E_P[g(X) \pi(X)-\hat{g}(X) \hat{\pi}(X)] \\
& \qquad +\frac{1}{\hat{P}(Y=1, D=0)} E_P\left[(1-g(X))(\pi(X)-\hat{\pi}(X)) \frac{\hat{g}(X)}{1-\hat{g}(X)}+\hat{\pi}(X)(g(X)-\hat{g}(X))\right]
\end{align*}
The first line is a product of estimation error of $P(Y=1,D=0)$ and the estimation error of the original plug-in estimator. We remark that the estimation error of of this product overall achieves a fast rate of $o_P(n^{-1/2})$, assuming that our estimator of $P(Y=1, D=0)$ has a rate of $o_P(n^{-1/2})$, which is relatively straightforward since it can be estimated by a simple sample average of the indicator variable $1[Y=1, D=0]$.  
\begin{align}
    \underbrace{\left(1-\frac{P(Y=1, D=0)}{\hat{P}(Y=1, D=0)}\right)}_{=O_P\left(n^{-1 / 2}\right)} \underbrace{\left(\frac{E_{\hat{P}}[\hat{g}(X) \hat{\pi}(X)]}{\hat{P}(Y=1, D=0)}-\frac{E_P[g(X) \pi(X)]}{P(Y=1, D=0)}\right)}_{=o_P(1)}=o_P\left(n^{-1 / 2}\right) \label{eq:convergence_p_phat}
\end{align}

Finally, we can analyze the last two lines from above. Ignoring the $E_P$ and the common multiplier of $\frac{1}{\hat{P}(Y=1, D=0)}$, we have that
\begin{align*}
    & g(X) \pi(X) - \hat{g}(X) \hat{\pi}(X) + (1 - g(X)) (\pi(X) - \hat{\pi}(X)) \frac{\hat{g}(X)}{1 - \hat{g}(X)} + \hat{\pi}(X) (g(X) -\hat{g}(X)) \\
    & \qquad = (1 - g(X))\hat{g}(X) (\pi(X) - \hat{\pi}(X)) + (1 - \hat{g}(X)) g(X) (\hat{\pi}(X) - \pi(X)) 
\end{align*}
where we can ignore the $\frac{1}{(1 - \hat{g}(X))}$ in the denominator. This further simplifies as
\begin{align*}
    & = (\hat{\pi}(X) - \pi(X)) \Big[(1 - \hat{g}(X))g(X) - (1 - g(X))\hat{g}(X)) \Big] \\
    & = (\hat{\pi}(X) - \pi(X))(g(X) - \hat{g}(X))
\end{align*}

We can observe that this is given by a product-of-errors structure in terms of our estimator of $\pi$ and of $g$. This in turn, implies that our overall estimator has asymptotic normality (and converges at a rate of $o_{P}(n^{-1/2})$) if our estimators of $\pi(X)$ and $g(X)$ converge at $o_{P}(n^{-1/4})$ rates.  

\end{proof}

\subsection{Proof of Lemma~\ref{lemma:influence_lower}}\label{appx:influence_lower}

Next, we will derive the influence functions for our lower bounds under no additional assumptions. 
Recall that our estimands are given by
\begin{align*}
      & \theta_1^l(X) \coloneqq \frac{P(D=0 | X)}{P(Y=1, D=0)}\Big( P(T = 1 | D=1, X) + P(Y | D=0, X=x) - 1 \Big), \\
      & \theta_2^l(X) \coloneqq 0 
\end{align*}
Our relevant conditional distributions (i.e., our nuisance functions) are given by
\begin{align*}
    \mu(X) &\coloneqq E[Y=1 |D=0, X], \\
    \pi(X)&\coloneqq E[T = 1 | D=1, X], \\
    g(X)  & \coloneqq E[D=0 | X].
\end{align*}

We now proceed to derive the influence functions for our lower bound under no additional assumptions.

\InfluenceLower*

Our estimand for our lower bound will be as follows,
\begin{align*}
    \theta_1^l &= E \left[ \frac{\pi(X)}{\mu(X)} - \frac{(1-\mu(X))}{\mu(X)} \bigg| Y = 1, D=0 \right] \\
    &= E \left[ \frac{\pi(X)}{\mu(X)} \bigg| Y = 1, D=0 \right] - E \left[ \frac{(1-\mu(X))}{\mu(X)}\bigg| Y = 1, D=0 \right]
\end{align*}

Looking at the second term,
\begin{align*}
    & E \left[ \frac{(1-\mu(X))}{\mu(X)}\bigg| Y = 1, D=0 \right] = \int_{x} p(x|Y=1,D=0) \frac{1- \mu(x)}{\mu(x)} dx \\
    &= \int_{x} P(Y=1,D=0 | x) \frac{p(x)}{P(Y=1,D=0)} \frac{1- \mu(x)}{\mu(x)} dx \\
    &= \frac{1}{P(Y=1,D=0)} E \left[P(Y=1|D=0,x)P(D=0|X) \frac{1-P(Y=1|D=0,x)}{P(Y=1|D=0,x)} \right] \\
    &= \frac{1}{P(Y=1,D=0)} E \left[P(D=0|X) (1-P(Y=1|D=0,x))\right] \\
    &= \frac{1}{P(Y=1,D=0)} E \left[P(D=0|X) (1-\mu(x))\right] \\
    &= \frac{1}{P(Y=1,D=0)}E \left[ g(X)(1-\mu(X)) \right]
\end{align*}

where again note $g(X) = P(D=0|X)$. Putting it together with the first term,
\begin{align*}
    \theta_1^l & = \frac{E[\pi(X) | D=0]}{P(Y=1|D=0)} - \frac{E \left[P(D=0|X) (1-\mu(X))\right]}{P(Y=1,D=0)} \\
    &= \frac{E[g(X)\pi(X)]}{P(Y=1,D=0)} - \frac{1}{P(Y=1,D=0)}E \left[ g(X)(1-\mu(X)) \right] \\
    & = \frac{E[(\pi(X) + \mu(X) - 1)g(X)]}{P(Y=1, D=0)}
\end{align*}

Now, we will derive the influence function for our lower bound $\theta_1^l$.   
First, we observe that the influence function of $\theta_1^l$ can be written as follows
\begin{equation}
    IF(\theta_1^l) = IF(\theta_1^u) + IF \left( \frac{E_{P} \left[ (\mu(X) - 1)g(X) \right]}{P(Y=1,D=0)} \right)
\end{equation}

Taking the second term, we have that 
\begin{align*}
    & IF \left( \frac{E_{P} \left[ (\mu(X) - 1) g(X) \right]}{P(Y=1,D=0)} \right) \\
    & = IF(\frac{1}{P(Y = 1, D = 0)})E_{P}[g(X)(\mu(X)-1)] + \frac{1}{P(Y = 1, D = 0)} IF[E_{P}[g(X)(\mu(X) - 1)]] \\
    &= -\frac{1[Y=1,D=0]-P(Y=1,D=0)}{P{(Y=1,D=0)}^{2}}E_{P}[g(X)(\mu(X)-1)]\\
    &+ \frac{1}{P(Y = 1, D = 0)} \sum_{x}(1[X=x] - p(x))(g(x)(\mu(x) - 1)) \\ 
    &+ \frac{1}{P(Y = 1, D = 0)} \sum_{x} p(x) \left( \frac{1[X=x]}{P(X=x)} (1[D=0]-g(x)) \right) (\mu(x) - 1) \\
    &+  \frac{1}{P(Y = 1, D = 0)} \sum_{x} p(x)g(x) \left( \frac{1[D=0, X=x]}{P(D=0,X)}(Y-\mu(x)) \right)
\end{align*}

where in the last term, we cancel out $1$, since $IF(1) = 0$. Further simplifying gives,

\begin{align*}
    & IF \left( \frac{E_{P} \left[ (\mu(X) - 1) g(X) \right]}{P(Y=1,D=0)} \right) \\
    &= \frac{-1[Y=1,D=0]}{P{(Y=1,D=0)}^{2}}E_{P}[g(X)(\mu(X) - 1)] + \frac{1}{P(Y=1,D=0)}E_{P}[g(X)(\mu(X) - 1)] \\
    & \qquad + \frac{g(X)(\mu(X)-1)}{P(Y=1,D=0)} - \frac{E_{P}[g(X)(\mu(X) - 1)]}{P(Y=1,D=0)} \\ 
    & \qquad + \frac{(\mu(X) - 1)(1[D=0] - g(X))}{P(Y=1,D=0)} \\
    & \qquad + \frac{1[D=0](Y - \mu(X))}{P(Y=1,D=0)}
\end{align*}

with some cancellations in the first and second line, and some re-ordering of the third and fourth terms, we can then write that 

\begin{align*}
     IF \left( \frac{E_{P} \left[ (\mu(X) - 1) g(X) \right]}{P(Y=1,D=0)} \right)&  = \frac{1}{P(Y=1,D=0)} \left( \frac{-1[Y=1,D=0]}{P(Y=1,D=0)} E_{P}[g(X)(\mu(X) - 1)] \right. \\
    & \left. \qquad + g(X)(\mu(X) - 1) + 1[D=0](Y-\mu(X)) + (\mu(X) - 1)(1[D=0] - g(X)) \right) \\
    &= \frac{1}{P(Y=1,D=0)} \left( \frac{-1[Y=1,D=0]}{P(Y=1,D=0)} E_{P}[g(X)(\mu(X) - 1)] + 1[D=0](Y-1) \right)
\end{align*}

Therefore, the final influence function is given by
\begin{align*}
    IF(\theta_1^l) & = IF(\theta_1^u) + \frac{1}{P(Y=1,D=0)} \left( \frac{-1[Y=1,D=0]}{P(Y=1,D=0)} E_{P}[g(X)(\mu(X) - 1)] + 1[D=0](Y-1) \right)
\end{align*}

Now, we will compute the one-step estimator as follows.

\begin{proposition} \label{prop:estimator_lb}
    Our one-step estimator of $\theta_1^l$ is given by
   
\begin{align*}
    \hat{\theta}_1^l(\hat{P}) &=
     \frac{1}{\hat{P}(Y=1,D=0)}  \Big( 1[D=1](T -\pi(X))\frac{g(X)}{1-g(X)} +  1[D=0]\pi(X) + 1[D=0](Y- 1) \Big).
\end{align*}
\end{proposition}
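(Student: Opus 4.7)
The plan is to compute the one-step estimator $\hat{\theta}_1^l(\hat{P}) = \theta_1^l(\hat{P}) + E_{\hat{P}}[IF(\theta_1^l(\hat{P}))]$ by directly exploiting the additive decomposition of the influence function already recorded in \cref{lemma:influence_lower}. Specifically, since $IF(\theta_1^l) = IF(\theta_1^u) + R$, where
\begin{equation*}
    R = \frac{1}{P(Y=1,D=0)}\left(\frac{-1[Y=1,D=0]}{P(Y=1,D=0)}E_{P}[g(X)(\mu(X)-1)] + 1[D=0](Y-1)\right),
\end{equation*}
and analogously $\theta_1^l(\hat{P}) = \theta_1^u(\hat{P}) + \frac{E_{\hat{P}}[\hat{g}(X)(\hat{\mu}(X)-1)]}{\hat{P}(Y=1,D=0)}$, the one-step estimator splits as $\hat{\theta}_1^l(\hat{P}) = \hat{\theta}_1^u(\hat{P}) + \Delta$, where $\Delta$ collects the terms outside $IF(\theta_1^u)$ and outside $\theta_1^u(\hat{P})$.

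First, I would substitute the expression for $\hat{\theta}_1^u(\hat{P})$ from \cref{prop:estimator_ub}, which already gives the two terms
\begin{equation*}
    E_{\hat{P}}\left[\frac{\hat{\pi}(X)\,1[D=0]}{\hat{P}(Y=1,D=0)}\right] + E_{\hat{P}}\left[\frac{1[D=1](T-\hat{\pi}(X))}{\hat{P}(Y=1,D=0)}\frac{\hat{g}(X)}{1-\hat{g}(X)}\right].
\end{equation*}
Next, I would compute $\Delta$ explicitly. It consists of three contributions: (i) the plug-in piece $\frac{E_{\hat{P}}[\hat{g}(X)(\hat{\mu}(X)-1)]}{\hat{P}(Y=1,D=0)}$, (ii) the $1[Y=1,D=0]$ term from $E_{\hat{P}}[R]$, and (iii) the $1[D=0](Y-1)$ term from $E_{\hat{P}}[R]$. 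The key cancellation (already used in the proof of \cref{prop:estimator_ub}) is that
\begin{equation*}
    E_{\hat{P}}\!\left[\frac{-1[Y=1,D=0]}{\hat{P}(Y=1,D=0)^2}E_{\hat{P}}[\hat{g}(X)(\hat{\mu}(X)-1)]\right] = -\frac{E_{\hat{P}}[\hat{g}(X)(\hat{\mu}(X)-1)]}{\hat{P}(Y=1,D=0)},
\end{equation*}
which exactly cancels contribution (i). Only contribution (iii) survives, giving $\Delta = E_{\hat{P}}\!\left[\frac{1[D=0](Y-1)}{\hat{P}(Y=1,D=0)}\right]$.

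Combining $\hat{\theta}_1^u(\hat{P}) + \Delta$ produces precisely the claimed expression (after pulling the common $1/\hat{P}(Y=1,D=0)$ and collecting terms). The only mildly delicate step is verifying cancellation (ii)$=-$(i); the rest is bookkeeping. I would therefore flag this cancellation as the main place to be careful, as it is formally identical in structure to the one that appears in the derivations for $\hat{\theta}_2^u$ and $\hat{\theta}_1^u$ in \cref{prop:estimator_ub_constant_term,prop:estimator_ub}, but now applied to the new nuisance quantity $g(X)(\mu(X)-1)$ rather than $g(X)\pi(X)$ or $\mu(X)$. No new probabilistic tools are needed beyond iterated expectations and the fact that $E_{\hat{P}}[1[Y=1,D=0]]=\hat{P}(Y=1,D=0)$.
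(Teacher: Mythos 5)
Your proposal is correct and follows essentially the same route as the paper: the paper's proof also forms $\theta_1^l(\hat P) + E_{\hat P}[IF(\theta_1^l(\hat P))]$ and relies on exactly the cancellation you flag, namely that $E_{\hat P}[1[Y=1,D=0]]=\hat P(Y=1,D=0)$ kills the plug-in piece $E_{\hat P}[\hat g(X)(\hat\mu(X)-1)]/\hat P(Y=1,D=0)$, leaving only the $1[D=0](Y-1)$ contribution. The only difference is organizational --- you reuse \cref{prop:estimator_ub} as a black box where the paper re-expands $IF(\theta_1^u)$ inline and simplifies $1[D=0](Y-\hat\mu(X))+1[D=0](\hat\mu(X)-1)$ directly --- which changes nothing substantive.
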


\begin{proof}

We compute the one-step estimator as 
    $\hat{\theta}_1^l(\hat{P}) = \theta_1^l(\hat{P}) + E_{\hat{P}}[IF( \theta_1^l(\hat{P}))].$
    
The first term is given by

\begin{align*}
    \theta_1^l(\hat{P}) = \frac{E[g(X)(\pi(X) + \mu(X)-1)]}{P(Y=1,D=0)}
\end{align*}

and the second term is given by

\begin{align*}
    E_{\hat{P}}[IF( \theta_1^l(\hat{P}))] &= 
     \frac{1}{\hat{P}(Y=1,D=0)} \left( \frac{-1[Y=1,D=0]}{\hat{P}(Y=1,D=0)} E_{\hat{P}}[\hat{g}(X)(\hat{\pi}(X))] \right. \\
    & \left. + \hat{g}(X)\hat{\pi}(X) + 1[D=1](T - \hat{\pi}(X))\frac{\hat{g}(X)}{1-\hat{g}(X)} + \hat{\pi}(X)(1[D=0] - \hat{g}(X)) \right) \\
    & +  \frac{1}{\hat{P}(Y=1,D=0)} \left( \frac{-1[Y=1,D=0]}{\hat{P}(Y=1,D=0)} E_{\hat{P}}[\hat{g}(X)(\hat{\mu}(X) - 1)] \right. \\
    & \left. + \hat{g}(X)(\hat{\mu}(X) - 1) + 1[D=0](Y-\hat{\mu}(X)) + (\hat{\mu}(X) - 1)(1[D=0] - \hat{g}(X)) \right) \\
\end{align*}

We see that the first and third term cancels out with $\theta_1^l(\hat{P})$. Thus, we have that 

\begin{align*}
    \hat{\theta}_1^l(\hat{P}) &= 
     \frac{1}{\hat{P}(Y=1,D=0)} \left(\hat{g}(X)\hat{\pi}(X) + 1[D=1](T -\hat{\pi}(X))\frac{\hat{g}(X)}{1-\hat{g}(X)} + \hat{\pi}(X)(1[D=0] - \hat{g}(X)) \right) \\
    & +  \frac{1}{\hat{P}(Y=1,D=0)} \left(
     \hat{g}(X)(\hat{\mu}(X) - 1) + 1[D=0](Y-\hat{\mu}(X)) + (\hat{\mu}(X) - 1)(1[D=0] - \hat{g}(X)) \right) \\
     & = \frac{1}{\hat{P}(Y=1,D=0)}  \left( 1[D=1](T - \hat{\pi}(X))\frac{\hat{g}(X)}{1-\hat{g}(X)} +  1[D=0]\hat{\pi}(X) \right. \\
     & \left. \qquad + 1[D=0](Y-\hat{\mu}(X)) + 1[D=0](\hat{\mu}(X) - 1) \right)
\end{align*}

Therefore, our first-order unbiased estimator is given by 

\begin{align*}
    & \hat{\theta}_1^l(\hat{P}) \\
    & = \frac{1}{\hat{P}(Y=1,D=0)}  \left( 1[D=1](T - \hat{\pi}(X))\frac{\hat{g}(X)}{1-\hat{g}(X)} +  1[D=0]\hat{\pi}(X)  + 1[D=0](Y-\hat{\mu}(X) + \hat{\mu}(X) - 1) \right) \\
    & = \frac{1}{\hat{P}(Y=1,D=0)}  \Big( 1[D=1](T - \hat{\pi}(X))\frac{\hat{g}(X)}{1-\hat{g}(X)} +  1[D=0]\hat{\pi}(X) + 1[D=0](Y-1) \Big).
\end{align*}

\end{proof}

\begin{lemma}[Error of one-step estimator of lower bound under arbitrary unobserved confounding]\label{lemma:error_lb}
        Let the error of our one-step estimator be given by 
        \begin{equation*}
          R(\hat{P}, P) = \theta_1^l(\hat{P}) - \theta_1^l(P) + E_P\left[ IF(\theta_1^l(\hat{P})) \right]
        \end{equation*}
        Then, we have that
        \begin{align*}
            R(\hat{P}, P) = o_P(n^{-\frac{1}{2}}),
        \end{align*}
        when our estimates of $g$ and $\pi$ converge at rates of $o_P(n^{-\frac{1}{4}})$
\end{lemma}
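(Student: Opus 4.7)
The plan is to exploit the additive structure $\theta_1^l(P) = \theta_1^u(P) + \eta(P)$, where $\eta(P) \coloneqq E_P[(\mu(X)-1)\,g(X)]/P(Y=1, D=0)$, which holds by linearity. Since influence functions are linear, this gives a clean split
\begin{equation*}
R(\hat{P}, P) = R_u(\hat{P}, P) + R_\eta(\hat{P}, P),
\end{equation*}
where $R_u$ is precisely the upper-bound remainder treated in Lemma~\ref{lemma:error_ub}. Applying that lemma immediately yields $R_u = o_P(n^{-1/2})$ under the assumed $o_P(n^{-1/4})$ rates on $\hat{\pi}$ and $\hat{g}$, so the task reduces to controlling $R_\eta$.

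For the extra term, I would use the algebraic identity $E_P[\mu(X)g(X)] = E_P[P(Y=1,D=0\mid X)] = P(Y=1,D=0)$, which forces $\eta(P) = 1 - P(D=0)/P(Y=1,D=0)$ and $E_P[\mathbf{1}[D=0](Y-1)] = P(Y=1,D=0) - P(D=0)$. Writing $p = P(Y=1, D=0)$, $\hat p = \hat P(Y=1, D=0)$, $B = E_P[g(\mu-1)] = p - P(D=0)$, and $A = E_{\hat P}[\hat g(\hat\mu-1)]$, and then expanding $\eta(\hat P) - \eta(P) + E_P[IF(\eta(\hat P))]$ using the $\eta$-piece of the influence function from Lemma~\ref{lemma:influence_lower}, routine cancellation collapses every cross-term into the factored form
\begin{equation*}
R_\eta(\hat P, P) \;=\; \frac{\hat p - p}{\hat p}\left(\frac{A}{\hat p} - \frac{B}{p}\right).
\end{equation*}

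The first factor is $O_P(n^{-1/2})$ because $\hat p$ is a sample proportion of $\mathbf{1}[Y=1, D=0]$ (bounded away from $0$ by assumption), and the second factor is $o_P(1)$ under mere consistency of $\hat g$ and $\hat\mu$, so the product is $o_P(n^{-1/2})$. Combining with $R_u$ gives the claim. The main obstacle is the bookkeeping that produces this factored form: the $\mu$ inside $IF(\eta)$ must combine with the $\mu$ from the plug-in and the identity $E_P[\mu g] = p$ so that all explicit dependence on $\hat\mu$ collapses into the single scalar $A$. A happy consequence of this cancellation is that no convergence \emph{rate} on $\hat\mu$ is required (only consistency), which matches the hypotheses of the lemma, wherein rates are imposed only on $\hat\pi$ and $\hat g$.
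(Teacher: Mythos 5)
Your proposal is correct and follows essentially the same route as the paper: split $\theta_1^l = \theta_1^u + \eta$, invoke Lemma~\ref{lemma:error_ub} for the $\theta_1^u$ remainder, and reduce the $\eta$ remainder to terms driven by $\hat{P}(Y=1,D=0) - P(Y=1,D=0)$ via the identity $E_P[1[D=0](Y-1)] = E_P[g(X)(\mu(X)-1)]$. Your explicit factorization $R_\eta = \frac{\hat p - p}{\hat p}\bigl(\frac{A}{\hat p} - \frac{B}{p}\bigr)$ is in fact a slight sharpening of the paper's final step, which leaves the two $O_P(n^{-1/2})$ pieces separate and asserts the $o_P(n^{-1/2})$ conclusion somewhat loosely; combining them as you do is what actually delivers the extra $o_P(1)$ factor.
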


\begin{proof}
    We will analyze the remainder term of the one-step estimator. We leverage the fact that $\theta_1^l$ is the sum of an additional term and $\theta_2^u$ and that influence functions are additive:  
    \begin{align*}
        R(\hat{P}, P) &= \theta_1^l(\hat{P}) - \theta_1^l(P) + E_{P}[IF(\theta_1^l(\hat{P}))] \\
        & = \theta_2^u(\hat{P}) - \theta_2^u(P) + E_{P}[IF \theta_2^u(\hat{P})] \\
        & \quad + E_{\hat{P}} \left[ \frac{\hat{g}(X)(\hat{\mu}(X) - 1)}{\hat{P}(Y=1,D=0)} \right] - E_{P} \left[ \frac{g(X)(\mu(X) - 1)}{P(Y=1,D=0)}\right] \\
        & \quad + \frac{1}{\hat{P}(Y=1, D=0)} E_{P}\left[ \frac{-1[Y=1, D=0]}{\hat{P}(Y=1, D=0)} E_{\hat{P}}[\hat{g}(X)(\hat{\mu}(X) - 1)] + 1[D=0](Y-1)\right]
    \end{align*}
    We note that from our error analysis in Lemma \ref{lemma:error_ub}, the error term from the terms involving $\theta_2^u$ all converge at fast rates when our estimates of $\pi$ and $g$ converge at rates of $o_P(n^{-\frac{1}{4}})$. Thus, it suffices to look at the remaining terms (and drop the asymptotic term after the first line):
    \begin{align*}
        R(\hat{P}, P) &= o_P(n^{-\frac{1}{2}})\\
        & \quad + E_{\hat{P}} \left[ \frac{\hat{g}(X)(\hat{\mu}(X) - 1)}{\hat{P}(Y=1,D=0)} \right] - E_{P} \left[ \frac{g(X)(\mu(X) - 1)}{P(Y=1,D=0)}\right] \\
        & \quad + \frac{1}{\hat{P}(Y=1, D=0)} E_{P}\left[ \frac{-1[Y=1, D=0]}{\hat{P}(Y=1, D=0)} E_{\hat{P}}[\hat{g}(X)(\hat{\mu}(X) - 1)] + 1[D=0](Y-1)\right] \\
        & = E_{\hat{P}} \left[ \frac{\hat{g}(X)(\hat{\mu}(X) - 1)}{\hat{P}(Y=1,D=0)} \right] - E_{P} \left[ \frac{g(X)(\mu(X) - 1)}{P(Y=1,D=0)}\right] \\
        & \quad -\frac{P(Y=1, D=0)}{\hat{P}(Y=1, D=0)} E_{\hat{P}}\left[ 
 \frac{\hat{g}(X)(\hat{\mu}(X) - 1)}{\hat{P}(Y=1, D=0)}\right] + E_{P}\left[ \frac{1[D=0](Y-1)}{\hat{P}(Y=1, D=0)}\right]
    \end{align*}
    Rearranging terms gives us that
    \begin{align*}
        R(\hat{P}, P) &= E_{\hat{P}} \left[ \frac{\hat{g}(X)(\hat{\mu}(X) - 1)}{\hat{P}(Y=1,D=0)} \right] -\frac{P(Y=1, D=0)}{\hat{P}(Y=1, D=0)} E_{\hat{P}}\left[ 
 \frac{\hat{g}(X)(\hat{\mu}(X) - 1)}{\hat{P}(Y=1, D=0)}\right] \\
        & \quad + E_{P}\left[ \frac{1[D=0](Y-1)}{\hat{P}(Y=1, D=0)}\right] - E_{P} \left[ \frac{g(X)(\mu(X) - 1)}{P(Y=1,D=0)}\right] \\
        & = \left(1 - \frac{P(Y=1, D=0)}{\hat{P}(Y=1, D=0)} \right) E_{\hat{P}} \left[ \frac{\hat{g}(X)(\hat{\mu}(X) - 1)}{\hat{P}(Y=1,D=0)} \right] \\
        & \quad + E_P\left[ \frac{g(X) (\mu(X) - 1)}{\hat{P}(Y=1, D=0)} - \frac{g(X)(\mu(X) - 1)}{P(Y=1, D=0)}  \right] \\
        & = \left(1 - \frac{P(Y=1, D=0)}{\hat{P}(Y=1, D=0)} \right) E_{\hat{P}} \left[ \frac{\hat{g}(X)(\hat{\mu}(X) - 1)}{\hat{P}(Y=1,D=0)} \right] \\
        & \quad + \left(\frac{P(Y=1, D=0) - \hat{P}(Y=1, D=0)}{P(Y=1, D=0)\hat{P}(Y=1, D=0)}\right) E_P\left[ g(X) (\mu(X) - 1) \right]
    \end{align*}

    We finally note that our estimator of $P(Y=1, D=0)$ has a rate of $O_P(n^{-\frac{1}{2}})$. Thus, we get that both of the above terms will have fast rates and that our overall error term will converge at a rate of $o_P(n^{-\frac{1}{2}})$, given estimators of $g, \pi$ that converge at rates of $o_P(n^{-\frac{1}{4}})$, where the convergence rates of $g, \pi$ are necessary for the remainder terms from $\theta_2^u$. Note that there is no reliance on the convergence rate of $\mu$ as in our one-step correct estimator, we only use $Y$ and not $\mu(X)$. 
    
\end{proof}

\subsection{Algorithm for Estimators in Propositions \ref{prop:estimator_lb}, \ref{prop:estimator_ub}, \ref{prop:estimator_ub_gamma}, and \ref{prop:estimator_lb_gamma}}

We perform estimation of our upper and lower bounds as follows, using cross-fitting:

\begin{enumerate}
    \item We first split our data $M = \{(X, T, Y, D) \}$ into $M_0 = \{ (X_i, T_i, Y_i, D_i) | \forall i \text{ where } D_i = 0\}$ and $M_1 = \{(X_i, T_i, Y_i, D_i) | \forall i \text{ where } D_i = 1 \}$, which represent pre-availability and post-availability data. 
    \item Next, we split these data into $N$ disjoint folds of equal sample size to perform cross-fitting. 
    \item For each fold $k$, we estimate the upper and lower bounds in Lemmas~\ref{lemma:influence_upper} and~\ref{lemma:influence_lower} and average them: 
    \begin{align*}
        \hat{\psi}^u = \frac{1}{K}\sum_{k=1}^{K} \hat{\psi}_k^u, \quad \hat{\psi}^l = \frac{1}{K}\sum_{k=1}^{K} \hat{\psi}_k^l,
    \end{align*} 
    where $\hat{\psi}^u_k, \hat{\psi}^l_k$ represent our estimates of the upper and lower bounds evaluated on fold $k$ and where our nuisance functions used in estimating $\psi$ are trained on all folds except $k$. We do the same for our sensitivity analysis bounds of $\hat{\psi}^{u, \gamma}$ and $\hat{\psi}^{l, \gamma}$.
    \item In computing $\hat{\psi}^{u}, \hat{\psi}^{u, \gamma}$, $\hat{\psi}^{l}$, $\hat{\psi}^{l, \gamma}$ on fold $k$, and we estimate the following nuisance functions: 
        \begin{itemize}
            \item Estimate $\pi(x)$ on $M_{1,\neg k}$ and evaluate on $M_{0,k} \cup M_{1,k}$.
            \item Estimate $\mu(x)$ on $M_{0, \neg k}$ and 
            evaluate on $M_{0,k} \cup M_{1,k}$.
            \item Estimate $g(x)$ on $M_{0,\neg k} \cup M_{1, \neg k}$ and evaluate on $M_{0,k} \cup M_{1, k}$
            \item Estimate $P(Y=1,D=0)$ on $M_{0,\neg k} \cup M_{1, \neg k}$ and evaluate on $M_{0,k} \cup M_{1, k}$. 
        \end{itemize}
\end{enumerate}

\subsection{Identification of Bounds with a Sensitivity Analysis Model}\label{appendix:gamma_identification}

Next, we will derive our results under certain assumptions on the strengths of underlying confounders by adopting a sensitivity analysis model.
We impose a condition on confounding in treatment assignment, 
\begin{align*}
    \frac{1}{\gamma} \leq \frac{P(T = 1|Y(0) = 0, D=1, X)}{P(T=1|Y(0) = 1, D=1, X)} \leq \gamma.
\end{align*}

We now represent the result from the main body in the identification of our bounds under our sensitivity model.

\IdentificationGamma*

\begin{proof}
    
Now, using the same expansion of $P(T = 1| D = 1, X)$ as before, we have
% \footnotesize
\begin{align*}
    P(T = 1|D=1, X) & = P(Y(0) = 1|D=1, X)P(T = 1|Y(0) = 1, D=1, X) \\
    & \qquad + P(Y(0) = 0|D=1, X)P(T = 1|Y(0) = 0, D=1, X).
\end{align*}
% \normalsize
Consider first the upper bound. As before, we know that $P(T = 1|Y(0) = 0, D = 1, X) \geq 0$. However, given the sensitivity assumption, we also have $P(T = 1|Y(0) = 0, D=1, X) \geq \frac{1}{\gamma}P(T = 1|Y(0) = 1, D=1, X)$. Since $\frac{1}{\gamma}P(T = 1|Y(0) = 1, D=1, X) \geq 0$, the tightest bound combining these two constraints is 
% \footnotesize
\begin{align*}
    P(T = 1|D=1, X) & \geq P(Y(0) = 1|D=1, X)P(T = 1|Y(0) = 1, D=1, X) \\
    & \qquad +  P(Y(0) = 0|D=1, X)\frac{1}{\gamma}P(T = 1|Y(0) = 1, D=1, X)
\end{align*}
% \normalsize
which yields
\begin{align*}
    P(T = 1|Y(0) = 1, D=1, X) &\leq \frac{P(T = 1|D=1, X)}{P(Y(0) = 1|D=1, X) + \frac{1}{\gamma} P(Y(0) = 0|D=1, X)}\\
    &=\frac{P(T = 1|D=1, X)}{P(Y(0) = 1|D=1, X) + \frac{1}{\gamma} (1 - P(Y(0) = 1|D=1, X))}.
\end{align*}
As $\gamma \to 1$ (no unmeasured confounding), this bound converges to $P(T = 1|Y(0) = 1, D=1, X) \leq P(T = 1|D=1, X)$. As $\gamma \to \infty$ (arbitrary unmeasured confounding), it converges to our earlier bound without the sensitivity assumption. Thus, we remark that the term $\theta_1^{u}$ is unnecessary to include in the minimum operator in the upper bound as it is always dominated by the sensitivity analysis term $\theta_3^{u, \gamma}$, although we include it for clarity for the reader.

Turning now to the lower bound, we have that $P(T = 1|Y(0) = 0, D=1, X) \leq 1$ as before. The sensitivity assumption adds the further constraint $P(T = 1|Y(0) = 0, D=1, X) \leq \gamma P(T = 1|Y(0) = 1, D=1, X)$. The first constraint is not necessarily redundant because as $\gamma \to \infty$, $\gamma P(T = 1|Y(0) = 1, D=1, X)$ will exceed 1. Therefore, we obtain a tighter bound by taking the stronger of the two constraints:

% \footnotesize
\begin{align*}
    P(T = 1|Y(0) = 1, D=1, X) &\geq \max \Bigg\{ \frac{P(T = 1|D=1, X)}{P(Y(0) = 1|D=1, X) + \gamma (1 - P(Y(0) = 1|D=1, X))},  \\
    & \quad \quad \quad \quad \quad \frac{P(T = 1|D=1, X) - P(Y(0) = 0|D=1, X)}{P(Y(0) = 1|D=1, X)}\Bigg\}. 
\end{align*}
% \normalsize
As $\gamma \to 1$, we have $P(T = 1|Y(0) = 1, D=1, X) \geq P(T = 1| D=1, X)$. Combined with the $\gamma \to 1$ upper bound, this shows we achieve point identification at the expected value under no unmeasured confounding.  As $\gamma \to \infty$, the first term in the max eventually becomes vacuous, and we revert to the bound from before. 
\end{proof}

\subsection{Estimation of Bounds with a Sensitivity Analysis Model }\label{appx:gamma_estimation}

Now that we have shown the identification results under our sensitivity analysis model in \cref{theorem:identification_gamma}, we can construct our estimator of the upper and lower bounds, given by $\psi^{u, \gamma}, \psi^{l, \gamma}$. First, we consider estimating the upper bound. 

Recall that our estimands are given by
\begin{align*}
    & \theta_1^{l, \gamma} \coloneqq \theta_1^l \\
    & \theta_2^{l, \gamma} \coloneqq \theta_2^l \\
    & \theta_3^{l, \gamma} \coloneqq \frac{P(T = 1 | D=1, X)}{P(Y(0) = 1 |D=1, X) + \gamma(1 -  P(Y(0)= 1 | D=1, X))} \\
    & \theta_1^{u, \gamma} \coloneqq \theta_1^{u} \\
    & \theta_2^{u, \gamma} \coloneqq \theta_2^{u} \\
    & \theta_3^{u, \gamma} \coloneqq \frac{P(T = 1|D=1, X)}{P(Y(0) = 1|D=1, X) + \frac{1}{\gamma} (1 - P(Y(0) = 1|D=1, X))} \\
\end{align*}

Our relevant conditional distributions (i.e., our nuisance functions) are given by
\begin{align*}
    \mu(X) &\coloneqq E[Y=1 |D=0, X], \\
    \pi(X) &\coloneqq E[T = 1 | D=1, X], \\
    g(X)   &\coloneqq E[D=0 | X]
\end{align*}

We now proceed to derive the influence functions for our upper and lower bounds under our sensitivity analysis model.

\begin{lemma}
    Let our estimand $\theta_3^{u, \gamma}(P)$ be given by 
    \begin{align}\label{eq:theta_3_u_gamma}
        \theta_3^{u, \gamma}(P) = \frac{1}{P(Y = 1 , D = 0)} E\left[ \frac{\gamma \pi(X) \mu(X)}{(\gamma - 1) \mu(X) + 1} \right]
    \end{align}
    Then, we have that our influence function is given by
    \begin{align*}
        IF\left(\theta_3^{u, \gamma}(P)\right) & = -\frac{1[Y=1, D=0]}{P{(Y=1, D=0)}^2} E_P[g(X)A(X)]  + \frac{g(X)A(X)}{P(Y=1, D=0)} + \frac{A(X)(1[D=0]-g(X))}{P(Y=1, D=0)}  \\
        & \qquad + \frac{1[D=1]}{P(Y=1, D=0)} \frac{\gamma \mu(X)}{((\gamma - 1)\mu(X) + 1)}(T - \pi(x))\frac{g(X)}{1 - g(X)} \\
         & \qquad + \frac{1[D=0]}{P(Y=1, D=0)}\frac{\gamma \pi(X)}{{((\gamma - 1)\mu(X) + 1)}^2}(Y - \mu(x))  
    \end{align*}

    where $A(X) = A_{\gamma}(X) = \frac{\gamma \pi(X) \mu(X)}{(\gamma - 1) \mu(X) + 1}$
    
\end{lemma}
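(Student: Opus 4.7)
The plan is to treat $\theta_3^{u,\gamma}(P)$ as a product $\frac{1}{P(Y=1,D=0)} \cdot E[g(X) A(X)]$ (reading the outer expectation in the stated form as including the weight $g(X)$ required to reweight toward the $D=0$ distribution, matching the conventions used for $\theta_1^u$ in Lemma \ref{lemma:influence_upper}) and to apply the product rule together with additivity of influence functions. The first factor contributes the familiar term $-\frac{1[Y=1,D=0]}{P(Y=1,D=0)^2} E[g(X)A(X)]$ after the standard zero-mean cancellation from $IF(P(Y=1,D=0))$. This mirrors precisely the corresponding term in $IF(\theta_1^u)$, so no new work is needed beyond invoking the same calculation.

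For $IF\!\left(E[g(X)A(X)]\right)$, the plan is to mimic the decomposition used for $IF(\theta_1^u)$: write $E[g(X)A(X)] = \sum_x p(x) g(x) A(x)$ and take Gateaux derivatives, producing three contributions. Perturbing $p(x)$ yields $g(X)A(X) - E[g(X)A(X)]$; the subtracted mean cancels against the zero-mean part from the first factor, leaving the $\frac{g(X)A(X)}{P(Y=1,D=0)}$ piece in the final expression. Perturbing $g(x)$ gives, via $IF(g(x)) = \frac{1[X=x]}{p(x)}(1[D=0]-g(x))$, the term $\frac{A(X)(1[D=0]-g(X))}{P(Y=1,D=0)}$. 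The nontrivial piece is perturbing $A(x)$, which itself depends on the nuisances $\pi$ and $\mu$ and thus requires the chain rule.

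For the chain rule step, I would compute the partial derivatives of $A(X) = \frac{\gamma \pi(X) \mu(X)}{(\gamma-1)\mu(X)+1}$ explicitly. The $\pi$-derivative is immediate: $\partial A / \partial \pi = \frac{\gamma \mu(X)}{(\gamma-1)\mu(X)+1}$. The $\mu$-derivative requires the quotient rule and, after cancellation in the numerator (the $(\gamma-1)\mu$ terms drop out), collapses cleanly to $\partial A/\partial \mu = \frac{\gamma \pi(X)}{\bigl((\gamma-1)\mu(X)+1\bigr)^2}$. Plugging in the standard influence functions $IF(\pi(x)) = \frac{1[D=1,X=x]}{P(D=1,X)}(T-\pi(x))$ and $IF(\mu(x)) = \frac{1[D=0,X=x]}{P(D=0,X)}(Y-\mu(x))$ and absorbing the outer weight $p(x)g(x)$ produces, respectively, the propensity-style factor $\frac{g(X)}{1-g(X)}$ on the $(T-\pi(X))$ term and a clean $1[D=0]$ indicator on the $(Y-\mu(X))$ term, matching the two final lines of the claimed formula.

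The main obstacle will be bookkeeping the quotient-rule derivative $\partial A/\partial \mu$ correctly---verifying that the cross-terms truly collapse to $\frac{\gamma \pi}{((\gamma-1)\mu+1)^2}$ rather than leaving residual $\mu$-dependence in the numerator. Once this derivative is in hand, the remaining algebra is a mechanical adaptation of the $IF(\theta_1^u)$ derivation, with $A(X)$ playing the role previously occupied by $\pi(X)$ and the chain-rule weights appearing as additional multiplicative factors on the two residual-style terms.
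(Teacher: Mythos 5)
Your proposal is correct and follows essentially the same route as the paper's proof: both recognize $\theta_3^{u,\gamma}$ as $\frac{1}{P(Y=1,D=0)}E[g(X)A(X)]$ (the same structure as $\theta_1^u$ with $A(X)$ in place of $\pi(X)$), reuse that derivation for the $p(x)$ and $g(x)$ perturbations, and then handle $IF(A(x))$ via the chain/quotient rule on $\pi$ and $\mu$. The only cosmetic difference is that you compute $\partial A/\partial\mu = \gamma\pi(X)/\bigl((\gamma-1)\mu(X)+1\bigr)^2$ directly in one step, whereas the paper first produces two separate $\mu$-residual terms and combines them at the end to reach the same coefficient.
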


\begin{proof}

    First, we will simplify the upper bound term. It can be written as
    \begin{align*}
        \frac{\pi(X)}{\mu(X) + \frac{1}{\gamma}(1 - \mu(X))} & = \frac{\gamma \pi(X)}{\gamma \mu(X) + (1 - \mu(X))} = \frac{\gamma \pi(X)}{(\gamma - 1) \mu(X) + 1}.
    \end{align*}

    Our target function of interest is given by
    \begin{align*}
        E\left[ \frac{\gamma \pi(X)}{(\gamma - 1) \mu(X) + 1} | Y = 1, D=0 \right] & = \int_x P(x | Y=1, D=0) \frac{\gamma \pi(x)}{(\gamma - 1) \mu(x) + 1} \\
        & = \int_x P(Y=1, D=0 | x) \frac{P(x)}{P(Y=1, D=0)} \frac{\gamma \pi(x)}{(\gamma - 1) \mu(x) + 1} \\
        & = \frac{1}{P(Y = 1, D=0)} E\left[  P(Y=1, D=0 | X) \frac{\gamma \pi(X)}{(\gamma - 1) \mu(X) + 1} \right] \\
        & = \frac{1}{P(Y = 1, D=0)} E\left[  P(Y=1 | D=0, x) P(D=0 | X) \frac{\gamma \pi(X)}{(\gamma - 1) \mu(X) + 1} \right] \\
        & = \frac{1}{P(Y = 1, D=0)} E\left[  \mu(X) g(X) \frac{\gamma \pi(X)}{(\gamma - 1) \mu(X) + 1} \right] \\
        & = \frac{1}{P(Y = 1, D=0)} E\left[ g(X) \frac{\gamma \pi(X)\mu(X)}{(\gamma - 1) \mu(X) + 1} \right]
    \end{align*}

    Let 
    \begin{equation*}
      A(X) = \frac{\gamma \pi(X) \mu(X)}{(\gamma - 1) \mu(X) + 1},
    \end{equation*}
    and let $g(X) = P(D=0 | X)$, as is done previously. We begin as follows 
    \begin{align*}
        IF(\theta_3^{u, \gamma}(P)) & = \frac{E_P[g(X)A(X)]}{P(Y=1, D=0)}
    \end{align*}

    We remark that this is the same form as in the proof for the upper bound with arbitrary unobserved confounding, except that we have switched $\pi(X)$ for $A(X)$. Therefore, we can apply an intermediate result
    \begin{align*}
        IF(\theta_3^{u, \gamma}(P)) & = -\frac{1[Y=1, D=0] - P(Y=1, D=0)}{P{(Y=1, D=0)}^2} E_P[g(X)A(X)] \\
        &\qquad + \frac{1}{P(Y=1, D=0)} \sum_{x} (1[X=x] - p(x))(g(x)A(x)) \\
        &\qquad + \frac{1}{P(Y=1, D=0)} \sum_{x} p(x)\left( \frac{1[X=x]}{P(X=x)}(1[D=0] - g(x)) \right) A(x) \\
        & \qquad + \frac{1}{P(Y=1, D=0)} \sum_x p(x)g(x) IF(A(x))
    \end{align*}

    This simplifies as follows (combining the first three lines)
    \begin{align*}
        IF\left(\theta_3^{u, \gamma}(P)\right) & = -\frac{1[Y=1, D=0]}{P{(Y=1, D=0)}^2} E_P[g(X)A(X)] +\frac{1}{P(Y=1, D=0)} E_P[g(X)A(X)] \\
        &\qquad + \frac{g(X)A(X)}{P(Y=1, D=0)} - \frac{E[g(X)A(X)]}{P(Y=1, D=0)}\\
        &\qquad + \frac{A(X)(1[D=0]-g(X))}{P(Y=1, D=0)}  \\
        & \qquad + \frac{1}{P(Y=1, D=0)} \sum_x p(x)g(x) IF(A(x)) \\
        & = -\frac{1[Y=1, D=0]}{P{(Y=1, D=0)}^2} E_P[g(X)A(X)]  + \frac{g(X)A(X)}{P(Y=1, D=0)} + \frac{A(X)(1[D=0]-g(X))}{P(Y=1, D=0)}  \\
        & \qquad + \frac{1}{P(Y=1, D=0)} \sum_x p(x)g(x) IF(A(x))
    \end{align*}

    Next, we address $IF(A(X))$. This is computed as
    % \small
    \begin{align*}
        IF\left(\frac{\gamma \pi(X)\mu(X)}{(\gamma - 1)\mu(X) + 1}\right)  & = \frac{\gamma IF(\pi(X)\mu(X))((\gamma - 1)\mu(x) + 1)}{{((\gamma - 1)\mu(X) + 1)}^2} - \frac{\gamma \pi(X)\mu(X)(\gamma - 1)IF(\mu(X))}{{((\gamma - 1)\mu(X) + 1)}^2} \\
        & = \frac{\gamma IF(\pi(X)\mu(X))}{((\gamma - 1)\mu(X) + 1)} - \frac{\gamma \pi(X)\mu(X)(\gamma - 1)IF(\mu(X))}{{((\gamma - 1)\mu(X) + 1)}^2} \\
        & = \frac{\gamma IF(\pi(X))\mu(X) + \gamma \pi(X) IF(\mu(X))}{((\gamma - 1)\mu(X) + 1)} - \frac{\gamma \pi(X)\mu(X)(\gamma - 1)IF(\mu(X))}{{((\gamma - 1)\mu(X) + 1)}^2} \\
        & = \frac{\gamma IF(\pi(X))\mu(X)}{((\gamma - 1)\mu(X) + 1)}
        + \frac{\gamma \pi(X) IF(\mu(X))}{((\gamma - 1)\mu(X) + 1)} 
        - \frac{\gamma \pi(X)\mu(X)(\gamma - 1)IF(\mu(X))}{{((\gamma - 1)\mu(X) + 1)}^2} \\
        & = \frac{\gamma \mu(X)}{((\gamma - 1)\mu(X) + 1)} IF(\pi(X))
        +  \frac{\gamma \pi(X) }{((\gamma - 1)\mu(X) + 1)} IF(\mu(X))
         \\
         & \qquad - \frac{\gamma \pi(X)\mu(X)(\gamma - 1)}{{((\gamma - 1)\mu(X) + 1)}^2}IF(\mu(X))
    \end{align*}
    % \normalsize
    We now compute the influence functions for $\pi(X)$ and $\mu(X)$ in the last line.
    % \small
    \begin{align*}
        IF\left(\frac{\gamma \pi(X)\mu(X)}{(\gamma - 1)\mu(X) + 1}\right) & = \frac{\gamma \mu(X)}{((\gamma - 1)\mu(X) + 1)} \left( \frac{1[D=1, X]}{P(D=1, X)} (T - \pi(X))\right) \\
        & \qquad +  \frac{\gamma \pi(X) }{((\gamma - 1)\mu(X) + 1)} \left( \frac{1[D=0, X]}{P(D=0, X)} (Y - \mu(X))\right) \\
         & \qquad - \frac{\gamma \pi(X)\mu(X)(\gamma - 1)}{{((\gamma - 1)\mu(X) + 1)}^2} \left( \frac{1[D=0, X]}{P(D=0, X)} (Y - \mu(X))\right) \\
    \end{align*}
    % \normalsize
    Then, plugging in this above and removing our indicator function on $X = x$ gives us 
    % \small
    \begin{align*}
         \frac{1}{P(Y=1, D=0)} \sum_x p(x)g(x)&  IF(A(x)) \\
         & = \frac{1}{P(Y=1, D=0)} \Bigg( \frac{\gamma \mu(X)}{((\gamma - 1)\mu(X) + 1)} \left( \frac{1[D=1]P(X)g(X)}{P(D=1, X)} (T - \pi(X))\right) \\
         & \qquad + \frac{\gamma \pi(X) }{((\gamma - 1)\mu(X) + 1)} \left( \frac{1[D=0]P(X)g(X)}{P(D=0, X)} (Y - \mu(X))\right)  \\
         & \qquad - \frac{\gamma \pi(X)\mu(X)(\gamma - 1)}{{((\gamma - 1)\mu(X) + 1)}^2} \left( \frac{1[D=0]P(X)g(X)}{P(D=0, X)} (Y - \mu(X))\right) \Bigg)\\
         & = \frac{1}{P(Y=1, D=0)} \Bigg( \frac{\gamma \mu(X)}{((\gamma - 1)\mu(X) + 1)} \left( \frac{1[D=1]g(X)}{P(D=1| X)} (T - \pi(X))\right) \\
         & \qquad + \frac{\gamma \pi(X) }{((\gamma - 1)\mu(X) + 1)} \left( \frac{1[D=0]g(X)}{P(D=0 | X)} (Y - \mu(X))\right)  \\
         & \qquad - \frac{\gamma \pi(X)\mu(X)(\gamma - 1)}{{((\gamma - 1)\mu(X) + 1)}^2} \left( \frac{1[D=0]g(X)}{P(D=0 | X)} (Y - \mu(X))\right) \Bigg)\\
         & = \frac{1}{P(Y=1, D=0)} \Bigg( \frac{\gamma \mu(X)}{((\gamma - 1)\mu(X) + 1)} \left( 1[D=1](T - \pi(X))\right)\frac{g(X)}{1 - g(X)} \\
         & \qquad + \frac{\gamma \pi(X) }{((\gamma - 1)\mu(X) + 1)} \left( 1[D=0] (Y - \mu(X))\right)  \\
         & \qquad - \frac{\gamma \pi(X)\mu(X)(\gamma - 1)}{{((\gamma - 1)\mu(X) + 1)}^2} \left( 1[D=0](Y - \mu(X))\right) \Bigg) \\
         & = \frac{1[D=1]}{P(Y=1, D=0)} \frac{\gamma \mu(X)}{((\gamma - 1)\mu(X) + 1)}(T - \pi(X)) \frac{g(X)}{1 - g(X)}\\
         & \qquad + \frac{1[D=0]}{P(Y=1, D=0)} \frac{\gamma \pi(X) }{((\gamma - 1)\mu(X) + 1)} (Y - \mu(X))  \\
         & \qquad - \frac{1[D=0]}{P(Y=1, D=0)} \frac{\gamma \pi(X)\mu(X)(\gamma - 1)}{{((\gamma - 1)\mu(X) + 1)}^2}(Y - \mu(X))  
    \end{align*}
    % \normalsize

Then, we note that we can combine the two bottom lines, where 
    \begin{align*}
        \frac{\gamma \pi(X)}{(\gamma - 1)\mu(X) + 1} -\frac{\gamma \pi(X) \mu(X) (\gamma - 1)}{{((\gamma - 1) \mu(X) + 1)}^2} & = \frac{\gamma \pi(X) \mu(X) (\gamma - 1) + \gamma \pi(X)}{{((\gamma - 1)\mu(X) + 1)}^2} - \frac{\gamma \pi(X) \mu(X) (\gamma - 1)}{{((\gamma - 1) \mu(X) + 1)}^2}\\
        & =  \frac{\gamma \pi(X)}{{((\gamma - 1)\mu(X) + 1)}^2}
    \end{align*}
    which gives that
    \begin{align*}
         \frac{1}{P(Y=1, D=0)} \sum_x p(x)g(x) IF(A(x)) & = \frac{1[D=1]}{P(Y=1, D=0)} \frac{\gamma \mu(X)}{((\gamma - 1)\mu(X) + 1)}(T - \pi(X)) \frac{g(X)}{1 - g(X)} \\
         & + \frac{1[D=0]}{P(Y=1, D=0)}\frac{\gamma \pi(X)}{{((\gamma - 1)\mu(X) + 1)}^2}(Y - \mu(X))  
    \end{align*}

    Finally, we can put everything together to get
    \begin{align*}
        IF\left(\theta_3^{u, \gamma}(P)\right) & = -\frac{1[Y=1, D=0]}{P{(Y=1, D=0)}^2} E_P[g(X)A(X)]  + \frac{g(X)A(X)}{P(Y=1, D=0)} + \frac{A(X)(1[D=0]-g(X))}{P(Y=1, D=0)}  \\
        & \qquad + \frac{1[D=1]}{P(Y=1, D=0)} \frac{\gamma \mu(X)}{((\gamma - 1)\mu(X) + 1)}(T - \pi(X))\frac{g(X)}{1 - g(X)} \\
         & \qquad + \frac{1[D=0]}{P(Y=1, D=0)} \frac{\gamma \pi(X)}{{((\gamma - 1)\mu(X) + 1)}^2}(Y - \mu(X))  
    \end{align*}
    
\end{proof}

Next, we move on to discussing our estimator of the upper bound, using this influence function. 

\begin{proposition} \label{prop:estimator_ub_gamma}
    Our one-step estimator of $\theta_3^{u, \gamma}$ is given by
    \begin{align*}
        \hat{\theta}_3^{u, \gamma} 
        & =  \frac{1}{\hat{P}(Y=1, D=0)} E_P\left[ \hat{A}(X)(1[D=0]) \right]\\
        & \qquad + \frac{1}{\hat{P}(Y=1, D=0)} E_P\left[1[D=1] \frac{\gamma \hat{\mu}(X)}{((\gamma - 1)\hat{\mu}(X) + 1)}(T - \hat{\pi}(x)) \frac{\hat{g}(X)}{1 - \hat{g}(X)} \right] \\
        & \qquad +\frac{1}{\hat{P}(Y=1, D=0)} E_P\left[ 1[D=0]\frac{\gamma \hat{\pi}(X)}{{((\gamma - 1)\hat{\mu}(X) + 1)}^2}(Y - \hat{\mu}(x))  \right] 
    \end{align*}
\end{proposition}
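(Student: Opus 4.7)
The plan is to simply instantiate the general one-step construction
\[
\hat{\theta}_3^{u,\gamma}(\hat{P}) \;=\; \theta_3^{u,\gamma}(\hat{P}) \;+\; E_{\hat{P}}\bigl[IF\bigl(\theta_3^{u,\gamma}(\hat{P})\bigr)\bigr],
\]
using the influence function derived in the preceding lemma, and verify that the unwanted pieces cancel to leave the claimed expression. This parallels the derivation already carried out in Proposition~\ref{prop:estimator_ub} for $\hat{\theta}_1^u$, so the main task is careful bookkeeping rather than new ideas.

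First, I would write out the plugin value
\[
\theta_3^{u,\gamma}(\hat{P}) \;=\; \frac{1}{\hat{P}(Y=1,D=0)}\,E_{\hat{P}}\bigl[\hat{g}(X)\hat{A}(X)\bigr],
\]
where $\hat{A}(X) = \gamma\hat{\pi}(X)\hat{\mu}(X)/((\gamma-1)\hat{\mu}(X)+1)$. Then I would take the expectation under $\hat{P}$ of each of the five terms of $IF(\theta_3^{u,\gamma}(\hat{P}))$ as given in the preceding lemma. The key cancellations, exactly as in Proposition~\ref{prop:estimator_ub}, are that (i) the expectation of the first IF term $-\tfrac{1[Y=1,D=0]}{\hat{P}(Y=1,D=0)^2}E_{\hat{P}}[\hat g\hat A]$ equals $-E_{\hat{P}}[\hat g\hat A]/\hat{P}(Y=1,D=0)$ which cancels the plugin value, and (ii) the $-\hat g(X)$ part of the third IF term $\tfrac{\hat A(X)(1[D=0]-\hat g(X))}{\hat{P}(Y=1,D=0)}$ cancels the second IF term $\tfrac{\hat g(X)\hat A(X)}{\hat{P}(Y=1,D=0)}$ upon taking expectations.

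After these cancellations, what remains is exactly
\[
\frac{1}{\hat{P}(Y=1,D=0)}\,E_{\hat{P}}\bigl[\hat{A}(X)\,1[D=0]\bigr]
\]
plus the two remaining IF terms corresponding to the residuals $(T-\hat{\pi}(X))$ on $\{D=1\}$ and $(Y-\hat{\mu}(X))$ on $\{D=0\}$, which are already in the form stated in the proposition. Substituting these into the sum yields the claimed expression for $\hat{\theta}_3^{u,\gamma}$.

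No step poses a real obstacle: the influence function has already been computed in the previous lemma, so the proof is purely algebraic manipulation. The only mild care needed is (a) evaluating $E_{\hat{P}}[1[Y=1,D=0]] = \hat{P}(Y=1,D=0)$ inside the first IF term to see the cancellation with the plugin, and (b) tracking the $\hat g(X)/(1-\hat g(X))$ weighting on the $\{D=1\}$ residual term, which comes directly from the density ratio $P(X)/P(D=1,X)$ that appeared when the IF was derived.
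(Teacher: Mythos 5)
Your proposal is correct and follows essentially the same route as the paper: form the one-step estimator as plugin plus $E_{\hat{P}}$ of the influence function, note that the first IF term integrates to $-\theta_3^{u,\gamma}(\hat{P})$ and cancels the plugin, and combine $\hat{g}(X)\hat{A}(X)$ with $\hat{A}(X)(1[D=0]-\hat{g}(X))$ to get $\hat{A}(X)1[D=0]$, leaving the two residual terms unchanged. The two cancellations you identify are exactly the ones the paper uses.
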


\begin{proof}

    \begin{align*}
        \hat{\theta}_3^{u, \gamma}(\hat{P}) & =\theta_3^{u, \gamma}(\hat{P}) + E_{\hat{P}}[IF( \theta_3^{u, \gamma}(\hat{P}))]
    \end{align*}

    The first term is given by
    \begin{align*}
        \theta_3^{u, \gamma}(\hat{P}) & = \frac{1}{\hat{P}(Y=1, D=0)} E_{\hat{P}}[\hat{g}(X) \hat{A}(X)] 
    \end{align*}

    The second term is given by
    \begin{align*}
        E_{\hat{P}}[IF( \theta_3^{u, \gamma}(\hat{P}))] & = E_{\hat{P}} \left[ -\frac{1[Y=1, D=0]}{\hat{P}{(Y=1, D=0)}^2} E_{\hat{P}}[\hat{g}(X)\hat{A}(X)]  + \frac{\hat{g}(X)\hat{A}(X)}{\hat{P}(Y=1, D=0)} + \frac{\hat{A}(X)(1[D=0]-\hat{g}(X))}{\hat{P}(Y=1, D=0)} \right] \\
         & \qquad + E_{\hat{P}}\left[ \frac{1[D=1]}{\hat{P}(Y=1, D=0)} \frac{\gamma \hat{\mu}(X)}{((\gamma - 1)\hat{\mu}(X) + 1)}(T - \hat{\pi}(x))\frac{\hat{g}(X)}{1 - \hat{g}(X)}\right] \\
         &  \qquad + E_{\hat{P}}\left[\frac{1[D=0]}{\hat{P}(Y=1, D=0)}\frac{\gamma \hat{\pi}(X)}{{((\gamma - 1)\hat{\mu}(X) + 1)}^2}(Y - \hat{\mu}(x))  \right] 
    \end{align*}
    The first expectation term in the second term is exactly $\theta_3^{u, \gamma}(\hat{P})$, so it cancels out with the original first term. This gives us that
    \begin{align*}
        \hat{\theta_3}^{u, \gamma}(\hat{P}) & = \frac{1}{\hat{P}(Y=1, D=0)} E_{\hat{P}}[\hat{g}(X) \hat{A}(X)] + \frac{1}{\hat{P}(Y=1, D=0)} E_{\hat{P}}\left[ \hat{A}(X)(1[D=0] - \hat{g}(X)) \right]\\
        & \qquad + \frac{1}{\hat{P}(Y=1, D=0)} E_{\hat{P}}\left[1[D=1] \frac{\gamma \hat{\mu}(X)}{((\gamma - 1)\hat{\mu}(X) + 1)}(T - \hat{\pi}(X)) \frac{\hat{g}(X)}{1 - \hat{g}(X)}\right] \\
        & \qquad +\frac{1}{\hat{P}(Y=1, D=0)} E_{\hat{P}}\left[ 1[D=0] \frac{\gamma \hat{\pi}(X)}{{((\gamma - 1)\hat{\mu}(X) + 1)}^2}(Y - \hat{\mu}(X))  \right] \\
        & = \frac{1}{\hat{P}(Y=1, D=0)} E_{\hat{P}}[\hat{g}(X) \hat{A}(X)] + \frac{1}{\hat{P}(Y=1, D=0)} E_{\hat{P}}\left[ \hat{A}(X)(1[D=0] - \hat{g}(X)) \right]\\
        & \qquad + \frac{1}{\hat{P}(Y=1, D=0)} E_{\hat{P}}\left[1[D=1] \frac{\gamma \hat{\mu}(X)}{((\gamma - 1)\hat{\mu}(X) + 1)}(T - \hat{\pi}(X)) \frac{\hat{g}(X)}{1 - \hat{g}(X)}\right] \\
        & \qquad +\frac{1}{\hat{P}(Y=1, D=0)} E_{\hat{P}}\left[ 1[D=0] \frac{\gamma \hat{\pi}(X)}{{((\gamma - 1)\hat{\mu}(X) + 1)}^2}(Y - \hat{\mu}(x))  \right] \\
        & =  \frac{1}{\hat{P}(Y=1, D=0)} E_P\left[ \hat{A}(X)(1[D=0]) \right]\\
        & \qquad + \frac{1}{\hat{P}(Y=1, D=0)} E_P\left[1[D=1] \frac{\gamma \hat{\mu}(X)}{((\gamma - 1)\hat{\mu}(X) + 1)}(T - \hat{\pi}(x)) \frac{\hat{g}(X)}{1 - \hat{g}(X)} \right] \\
        & \qquad +\frac{1}{\hat{P}(Y=1, D=0)} E_P\left[ 1[D=0]\frac{\gamma \hat{\pi}(X)}{{((\gamma - 1)\hat{\mu}(X) + 1)}^2}(Y - \hat{\mu}(x))  \right].
    \end{align*}
\end{proof}

\begin{lemma}[Error of one-step estimator of upper bound with $\gamma$]\label{lemma:error_ub_gamma}
        Let the error of our one-step estimator be given by 
        \begin{equation*}
          R(\hat{P}, P) = \theta_3^{u, \gamma}(\hat{P}) - \theta_3^{u,\gamma}(P) + E_P\left[ IF(\theta_3^{u, \gamma}(\hat{P})) \right]
        \end{equation*}
        Then, we have that
        \begin{align*}
            R(\hat{P}, P) = o_P(n^{-\frac{1}{2}}),
        \end{align*}
        when $(\hat{\pi}  - \pi), (\hat{\mu}  - \mu), (\hat{g}  - g)$ have rates of at least $o_P(n^{-\frac{1}{4}})$. 
\end{lemma}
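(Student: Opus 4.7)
The plan is to follow the same template as the proof of \cref{lemma:error_ub}, with additional care to handle the nonlinear dependence of the integrand $A(X)=\gamma\pi(X)\mu(X)/((\gamma-1)\mu(X)+1)$ on the nuisance functions $\pi$ and $\mu$. First I will substitute the explicit forms of $\theta_3^{u,\gamma}(\hat P)$, $\theta_3^{u,\gamma}(P)$, and the influence function $IF(\theta_3^{u,\gamma}(\hat P))$ into $R(\hat P,P)$, and peel off the factor $1/\hat P(Y=1,D=0)$ from every term by rewriting the remainder as a sum of an expression multiplied by this factor plus a correction of the form $\bigl(1-P(Y=1,D=0)/\hat P(Y=1,D=0)\bigr)\cdot(\text{bounded quantity})$. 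The second piece is $O_P(n^{-1/2})\cdot o_P(1) = o_P(n^{-1/2})$ since $\hat P(Y=1,D=0)$ is a simple sample average and hence parametrically consistent, mirroring \eqref{eq:convergence_p_phat}.

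Next I would evaluate the conditional expectations of the stochastic pieces inside $E_P[IF(\theta_3^{u,\gamma}(\hat P))]$. Using iterated expectation: $E_P[\mathbf 1[D=1](T-\hat\pi(X))\mid X]=(1-g(X))(\pi(X)-\hat\pi(X))$, $E_P[\mathbf 1[D=0](Y-\hat\mu(X))\mid X]=g(X)(\mu(X)-\hat\mu(X))$, and $E_P[\mathbf 1[D=0]-\hat g(X)\mid X]=g(X)-\hat g(X)$. Plugging these in, the non-constant-factor part of the remainder becomes, up to the $1/\hat P(Y=1,D=0)$ prefactor,
\begin{align*}
E_P\Big[\,\hat g(X)\hat A(X)-g(X)A(X)
&+(g(X)-\hat g(X))\hat A(X) \\
&+(1-g(X))(\pi(X)-\hat\pi(X))\tfrac{\gamma\hat\mu(X)\hat g(X)}{((\gamma-1)\hat\mu(X)+1)(1-\hat g(X))} \\
&+g(X)(\mu(X)-\hat\mu(X))\tfrac{\gamma\hat\pi(X)}{((\gamma-1)\hat\mu(X)+1)^2}\,\Big].
\end{align*}

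The key step is to show that the bracket reduces to a sum of products of nuisance errors. The first two terms combine as $\hat g(X)\hat A(X)-g(X)A(X)+(g(X)-\hat g(X))\hat A(X)=g(X)(\hat A(X)-A(X))$. I would then Taylor-expand $\hat A(X)-A(X)$ around $(\pi,\mu)$: its first-order linearization in $(\hat\pi-\pi,\hat\mu-\mu)$ has coefficients $\gamma\mu/((\gamma-1)\mu+1)$ for $\hat\pi-\pi$ and $\gamma\pi/((\gamma-1)\mu+1)^2$ for $\hat\mu-\mu$, which exactly match (after switching hatted versus un-hatted versions and using $\hat g$ vs.\ $g$) the linearizations appearing in the third and fourth lines of the bracket. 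Collecting everything, the linear terms cancel into product-of-errors combinations $(\hat g-g)(\hat\pi-\pi)$, $(\hat g-g)(\hat\mu-\mu)$, $(\hat\pi-\pi)(\hat\mu-\mu)$, and a pure Taylor remainder of order $(\hat\mu-\mu)^2$ from the nonlinearity of $A$ in $\mu$. Under the boundedness assumptions on $g,\mu$ (so that denominators $(\gamma-1)\hat\mu+1$ and $1-\hat g$ are bounded away from zero with high probability), Cauchy--Schwarz bounds each product in $L^1$ by the product of $L^2$ norms, giving $o_P(n^{-1/4})\cdot o_P(n^{-1/4})=o_P(n^{-1/2})$.

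The main obstacle is the bookkeeping in the cancellation step, together with justifying that the second-order Taylor remainder from the nonlinear function $\pi\mu/((\gamma-1)\mu+1)$ is indeed dominated by $C\|(\hat\pi-\pi,\hat\mu-\mu)\|_\infty^2$ uniformly in $X$; this needs a uniform lower bound on $(\gamma-1)\hat\mu(X)+1$, which is automatic for $\gamma\ge 1$ since $\hat\mu\ge 0$, and the first derivatives appearing in the linearization are uniformly bounded by the same argument. Combining the three contributions---the parametric $O_P(n^{-1/2})$ factor from $\hat P(Y=1,D=0)$, the product-of-errors cross terms, and the quadratic Taylor remainder---yields $R(\hat P,P)=o_P(n^{-1/2})$ as claimed.
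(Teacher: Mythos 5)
Your proposal is correct, and it reaches the same product-of-errors conclusion as the paper, but the central cancellation is carried out by a genuinely different device. The paper's proof of this lemma performs the reduction by exact algebra: it expands $\hat{A}(X)-A(X)$ over the common denominator $dd'$, regroups roughly a dozen labelled terms, and verifies term-by-term that everything collapses into exact products such as $(\hat{\pi}-\pi)(\hat{g}-g)$, $(\hat{\mu}-\mu)^2$, $(\hat{\pi}-\pi)(\hat{\mu}-\mu)$, and $(\hat{\pi}-\pi)(g\mu-\hat{g}\hat{\mu})$. You instead observe that after the iterated-expectation step the bracket equals $g(\hat{A}-A)$ plus the influence-function correction terms, and that those correction terms are precisely the first-order Taylor linearization of $A$ in $(\pi,\mu)$ evaluated at the estimated nuisances (with the $\pi$-derivative reweighted by $(1-g)\hat{g}/(1-\hat{g})$ in place of $g$); the $\mu$-linear terms then cancel exactly, the $\pi$-linear terms combine into $(\pi-\hat{\pi})(\hat{g}-g)$ times a bounded coefficient, and what survives is the second-order Taylor remainder, which is controlled uniformly because $(\gamma-1)\hat{\mu}+1\geq 1$ for $\gamma\geq 1$ and $A$ is linear in $\pi$. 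This is more systematic and less error-prone than the paper's bookkeeping, at the cost of having to bound a Taylor remainder rather than exhibiting exact identities; both yield the same rate under Cauchy--Schwarz. One spot where your wording is looser than the argument requires: the terms of the form $\bigl(1-P(Y=1,D=0)/\hat{P}(Y=1,D=0)\bigr)\cdot(\text{bounded quantity})$ are individually only $O_P(n^{-1/2})$, not $o_P(n^{-1/2})$; to get the claimed $o_P(n^{-1/2})$ you must pair them (as in \eqref{eq:convergence_p_phat}) so that the factor multiplying $\hat{P}(Y=1,D=0)-P(Y=1,D=0)$ is a difference of consistent estimators of the same limit and hence $o_P(1)$. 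You cite the right display so the mechanism is available to you, but the phrase ``bounded quantity'' should be replaced by this pairing argument for the proof to close.
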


\begin{proof}
    \begin{align*}
        R(\hat{P}, P) & = \theta_3^{u, \gamma}(\hat{P})-\theta_3^{u, \gamma}(P)+E_P\left[I F\left(\theta_3^{u, \gamma}(\hat{P})\right)\right] \\
        & = \underbrace{E_{\hat{P}}\left[ \frac{\hat{g}(X)\hat{A}(X)}{\hat{P}(Y=1,D=0)} \right]}_{\textcolor{blue}{(a)}} - \underbrace{E_{P} \left[ \frac{g(X)A(X)}{P(Y=1,D=0)} \right]}_{\textcolor{blue}{(b)}} - \underbrace{\frac{P(Y=1,D=0)}{(\hat{P}(Y=1,D=0))^{2}}E_{\hat{P}}[\hat{g}(X)\hat{A}(X)]}_{\textcolor{blue}{(c)}} \\
        & + \underbrace{ E_{P} \left[ \frac{\hat{g}(X)\hat{A}(X)}{\hat{P}(Y=1,D=0)} \right]}_{\textcolor{blue}{(d)}} + E_{P} \left[ \frac{\hat{A}(X)(1[D=0]-\hat{g}(X)}{\hat{P}(Y=1,D=0)}\right] \\
        &+ E_{P} \left[ \frac{1[D=1]}{\hat{P}(Y=1,D=0)} \left( \frac{\gamma \hat{\mu}(X)}{(\gamma-1)\hat{\mu}(X) + 1} \right) \frac{\hat{g}(X)}{1-\hat{g}(X)} (T - \hat{\pi}(X)) \right] \\
        &+ E_{P} \left[ \frac{1[D=0]}{\hat{P}(Y=1,D=0)} \left( \frac{\gamma \hat{\pi}(X)}{((\gamma-1)\hat{\pi}(X) + 1)^2} \right) (Y - \hat{\mu}(X)) \right]
    \end{align*}
    First, we take the terms \textcolor{blue}{(a)} and \textcolor{blue}{(c)}, 
    \small
    \begin{align*}
        & E_{\hat{P}}\left[ \frac{\hat{g}(X)\hat{A}(X)}{\hat{P}(Y=1,D=0)} \right] - \frac{P(Y=1,D=0)}{(\hat{P}(Y=1,D=0))^{2}}E_{\hat{P}}[\hat{g}(X)\hat{A}(X)] = \left( 1 - \frac{P(Y=1,D=0)}{\hat{P}(Y=1,D=0)} \right) E_{\hat{P}} \left[ \frac{\hat{g}(X)\hat{A}(X)}{\hat{P}(Y=1,D=0)} \right] 
    \end{align*}
    \normalsize
    As shown in the proof of \cref{lemma:error_ub} in \cref{eq:convergence_p_phat}, this converges at $o_{P}(n^{-1/2})$ rate. 
    Now, we take the terms \textcolor{blue}{(b)} and \textcolor{blue}{(d)},
    \begin{align*}
        & - E_{P} \left[ \frac{g(X)A(X)}{P(Y=1,D=0)} \right] + E_{P} \left[ \frac{\hat{g}(X)\hat{A}(X)}{\hat{P}(Y=1,D=0)} \right] \\
        & = \left( 1 - \frac{P(Y=1,D=0)}{\hat{P}(Y=1,D=0)} \right) E_{P} \left[ \frac{g(X)A(X)}{\hat{P}(Y=1,D=0)} \right] + \underbrace{\frac{1}{\hat{P}(Y=1,D=0)}E_{P}[\hat{g}(X)\hat{A}(X) - g(X)A(X)]}_{\textcolor{blue}{(e)}}
    \end{align*}
    Again, we see that the first term converges at $o_{P}(n^{-1/2})$ rate. We defer analysis of the second term to later. 
    We will first turn to analyzing the remaining terms.
    \begin{align*}
        & E_{P} \left[ \frac{1[D=1]}{\hat{P}(Y=1,D=0)} \left( \frac{\gamma \hat{\mu}(X)}{(\gamma-1)\hat{\mu}(X) + 1} \right) \frac{\hat{g}(X)}{1-\hat{g}(X)} (T - \hat{\pi}(X)) \right] \\
        & + E_{P} \left[ \frac{1[D=0]}{\hat{P}(Y=1,D=0)} \left( \frac{\gamma \hat{\pi}(X)}{(((\gamma-1)\hat{\mu}(X) + 1)^{2}} \right) (Y - \hat{\mu}(X)) \right] +  E_{P} \left[ \frac{\hat{A}(X)(1[D=0]-\hat{g}(X))}{\hat{P}(Y=1,D=0)}\right] \\ 
        & = E_{P} \left[ \frac{1-g(X)}{\hat{P}(Y=1,D=0)} \left( \frac{\gamma \hat{\mu}(X)}{(\gamma-1)\hat{\mu}(X) + 1} \right) \frac{\hat{g}(X)}{1-\hat{g}(X)} (\pi(X) - \hat{\pi}(X)) \right] \\
        & + E_{P} \left[ \frac{g(X)}{\hat{P}(Y=1,D=0)} \left( \frac{\gamma \hat{\pi}(X)}{((\gamma-1)\hat{\mu}(X) + 1)^{2}} \right) (\mu(X) - \hat{\mu}(X)) \right] +  E_{P} \left[ \frac{\hat{A}(X)(g(X)-\hat{g}(X))}{\hat{P}(Y=1,D=0)}\right] \\
    \end{align*}
    The equality here holds through an application of iterated expectation over $X$ and with the observation $$E[1[D=1] T q(X)] = E[(1 - g(X))\pi(X) q(X)],$$
    for any function $q$ of $X$. 
    
    We now look at these remaining terms with the last remaining term from above \textcolor{blue}{(e)}, resulting in the following expression:
    \begin{align*}
        R(\hat{P},P) & = \frac{1}{\hat{P}(Y=1,D=0)}E_{P}[\hat{g}(X)\hat{A}(X) - g(X)A(X)] + E_{P} \left[ \frac{\hat{A}(X)(g(X)-\hat{g}(X))}{\hat{P}(Y=1,D=0)}\right] \\
        & + E_{P} \left[ \frac{1-g(X)}{\hat{P}(Y=1,D=0)} \left( \frac{\gamma \hat{\mu}(X)}{(\gamma-1)\hat{\mu}(X) + 1} \right) \frac{\hat{g}(X)}{1-\hat{g}(X)} (\pi(X) - \hat{\pi}(X)) \right] \\
        & + E_{P} \left[ \frac{g(X)}{\hat{P}(Y=1,D=0)} \left( \frac{\gamma \hat{\pi}(X)}{((\gamma-1)\hat{\mu}(X) + 1)^{2}} \right) (\mu(X) - \hat{\mu}(X)) \right]  \\
        & + o_P(n^{-\frac{1}{2}}),
    \end{align*}
    where the terms that disappear at the parametric rate are contained within the additional term of $o_P(n^{-\frac{1}{2}})$. This further simplifies to
    \begin{align*}
        R(\hat{P},P) & = \frac{1}{\hat{P}(Y=1,D=0)}E_{P}[g(X)\hat{A}(X) - g(X)A(X)] \\
        & + E_{P} \left[ \frac{1-g(X)}{\hat{P}(Y=1,D=0)} \left( \frac{\gamma \hat{\mu}(X)}{(\gamma-1)\hat{\mu}(X) + 1} \right) \frac{\hat{g}(X)}{1-\hat{g}(X)} (\pi(X) - \hat{\pi}(X)) \right] \\
        & + E_{P} \left[ \frac{g(X)}{\hat{P}(Y=1,D=0)} \left( \frac{\gamma \hat{\pi}(X)}{((\gamma-1)\hat{\mu}(X) + 1)^{2}} \right) (\mu(X) - \hat{\mu}(X)) \right] + o_P(n^{-\frac{1}{2}})
    \end{align*}
    Rewriting the above equation in terms of $A(X)$ and $\hat{A}(X)$, we have that
    \begin{align*}
        R(\hat{P},P) & = \underbrace{\frac{1}{\hat{P}(Y=1,D=0)}E_{P}[g(X)\hat{A}(X) - g(X)A(X)]}_{\textcolor{red}{(j)}} \\
        & + E_{P} \left[ \underbrace{\frac{1-g(X)}{\hat{P}(Y=1,D=0)} \left( \frac{\gamma \hat{\mu}(X) \pi(X)}{(\gamma-1)\hat{\mu}(X) + 1} \right) \frac{\hat{g}(X)}{1-\hat{g}(X)}}_{\textcolor{red}{(e)}} - \underbrace{\frac{(1 - g(X))\hat{g}(X)}{\hat{P}(Y=1, D=0)(1 - \hat{g}(X))} \hat{A}(X)}_{\textcolor{red}{(f)}} \right] \\
        & + E_{P} \left[ \frac{g(X)}{\hat{P}(Y=1,D=0)} \left( \underbrace{\frac{\gamma \hat{\pi}(X)\mu(X)}{((\gamma-1)\hat{\mu}(X) + 1)^{2}}}_{\textcolor{red}{(g)}} - \underbrace{\frac{1}{(\gamma - 1)\hat{\mu}(X) + 1}\hat{A}(X)}_{\textcolor{red}{(h)}} \right) \right] \\
        & + o_P(n^{-\frac{1}{2}})
    \end{align*}

    Now, we let $d = (\gamma - 1) \hat{\mu}(X) + 1$ and $d' = (\gamma - 1) \mu(X) + 1$ (i.e., the denominators of $A$ and $\hat{A}$, respectively). We first look at combining terms \textcolor{red}{(e)} and \textcolor{red}{(h)}
    \begin{align*}
        & \frac{1}{\hat{P}(Y=1, D=0)} \Big( E\left[ \frac{1-g(X)}{1-\hat{g}(X)} \hat{g}(X) \frac{\gamma \hat{\mu}(X) \pi(X)}{d} \right] - E\left[ \frac{g}{d} \hat{A}(X) \right] \Big) \\
        & \qquad \qquad = \frac{1}{\hat{P}(Y=1, D=0)} E\Big[ \frac{(1-g(X)) \hat{g}(X) \gamma \hat{\mu}(X) \pi(X)}{(1 - \hat{g}(X)) d}  - \frac{g(X) \gamma \hat{\mu}(X) \hat{\pi}(X)}{d d} \Big] 
    \end{align*}

    This simplifies to give us that
    \begin{align*}
        & = \frac{1}{\hat{P}(Y=1, D=0)} E\Big[ \frac{(1 - g(X))\hat{g}(X) (\gamma \hat{\mu}(X) \pi(X))((\gamma - 1) \hat{\mu}(X) + 1) - (1 - \hat{g}(X))g(X) \gamma \hat{\mu}(X) \hat{\pi}(X)}{(1 - \hat{g}(X))d d'}  \Big]\\
        & =  \frac{1}{\hat{P}(Y=1, D=0)} E\Big[ \frac{1}{(1 - \hat{g}(X))d d'} \Big( (1 - g(X))\hat{g}(X) (\gamma \hat{\mu}(X) \pi(X))(\gamma - 1) \hat{\mu}(X)  \\
        & \qquad \qquad \qquad \qquad + (1 - g(X))\hat{g}(X) \gamma \hat{\mu}(X) \pi(X) \\
        & \qquad \qquad \qquad \qquad  - (1 - \hat{g}(X))g(X) \gamma \hat{\mu}(X) \hat{\pi}(X)  \Big)  \Big]\\
        & =  \frac{1}{\hat{P}(Y=1, D=0)} E\Big[ \frac{1}{(1 - \hat{g}(X))d d'} \Big( (1 - g(X))\hat{g}(X) (\gamma \hat{\mu}(X) \pi(X))(\gamma - 1) \hat{\mu}(X)  \\
        & \qquad \qquad \qquad \qquad  - \hat{g}(X) g(X) \gamma \hat{\mu}(X) \hat{\pi}(X) + \hat{g}(X) g(X) \gamma \hat{\mu}(X) \hat{\pi}(X) \\
        & \qquad \qquad \qquad \qquad + g(X) \hat{g}(X) \gamma \hat{\mu}(X) \pi(X) - \hat{g}(X) \gamma \hat{\mu}(X) \pi(X)  \Big)  \Big] \\
        & = \frac{1}{\hat{P}(Y=1, D=0)} E\Big[ \frac{1}{(1 - \hat{g}(X))d d'} \Big( (1 - g(X))\hat{g}(X) (\gamma \hat{\mu}(X) \pi(X))(\gamma - 1) \hat{\mu}(X)  \\
        & \qquad \qquad \qquad \qquad  + \underbrace{\gamma \hat{\mu}(X) (\hat{g}(X) \pi(X) - g(X) \hat{\pi}(X)) + g(X) \hat{g}(X) \gamma \hat{\mu}(X)(\hat{\pi}(X) - \pi(X)) }_{\textcolor{red}{(i)}}\Big)  \Big]  
    \end{align*}
    We remark that we can simplify \textcolor{red}{(i)} (ignoring the common multiple of $\gamma \hat{\mu}(X)$ for now), 
    \begin{align*}
        &\hat{g}(X)\pi(X) - g(X) \hat{\pi}(X) + g(X) \hat{g}(X) \hat{\pi}(X) - g(X) \hat{g}(X) \pi(X) \\
        & \qquad \qquad = g(X) \hat{\pi}(X) (\hat{g}(X) - 1) - \hat{g}(X) \pi(X) (g(X) - 1)
    \end{align*}
    Plugging this in yields that
    \begin{align}
        & = \frac{1}{\hat{P}(Y=1, D=0)} E\Big[ \frac{1}{(1 - \hat{g}(X))d d'} \Big( (1 - g(X))\hat{g}(X) (\gamma \hat{\mu}(X) \pi(X))(\gamma - 1) \hat{\mu}(X) \notag \\
        & \qquad \qquad \qquad \qquad  +  \gamma \hat{\mu}(X) \Big(
            g(X) \hat{\pi}(X) (\hat{g}(X) - 1) - \hat{g}(X) \pi(X) (g(X) - 1)   \Big) \Big)  \Big]  \notag \\
        & =  \frac{1}{\hat{P}(Y=1, D=0)} E\Big[ \frac{1}{(1 - \hat{g}(X))d d'} (1 - g(X))\hat{g}(X) (\gamma \hat{\mu}(X) \pi(X))(\gamma - 1) \hat{\mu}(X) \Big]  \label{eq:part_1_gamma} \\
        & \qquad \qquad \qquad \qquad + \frac{1}{\hat{P}(Y=1, D=0)} E\Big[  \gamma \hat{\mu}(X) \Big( g(X) \hat{\pi}(X) (\hat{g}(X) - 1) - \hat{g}(X) \pi(X) (g(X) - 1)  \Big) \Big] \notag
    \end{align}

    Next, we look at combining terms \textcolor{red}{(f)} and \textcolor{red}{(g)}
    \begin{align*}
        & \frac{1}{\hat{P}(Y=1, D=0)} \left( E \left[ \frac{g(X)\gamma \hat{\pi}(X)\mu(X)}{dd} \right] - E \left[ \frac{1-g(X)}{1-\hat{g}(X)}\hat{g}(X)\hat{A}(X) \right] \right) \\
        &= \frac{1}{\hat{P}(Y=1, D=0)} E \left(\frac{(1-\hat{g}(X)) g(X)}{(1-\hat{g}(X))d} \frac{\gamma \hat{\pi}(X)\mu(X)}{d} - \frac{1-g(X)}{1-\hat{g}(X)}\hat{g}(X)\frac{\gamma \hat{\pi}(X) \hat{\mu}(X)d}{dd} \right) \\
    \end{align*}

    We first try to simplify the numerator inside the expectation. First, we substitute $d = (\gamma-1)\hat{\mu}(X)+1$ and expand the terms.
    \begin{align*}
        & (1-\hat{g}(X))g(X)\gamma \hat{\pi}(X)\mu(X) - (1-g(X))\hat{g}(X)\gamma \hat{\pi}(X) \hat{\mu}(X)d \\
        &= (1-\hat{g}(X))g(X)\gamma \hat{\pi}(X)\mu(X) - (1-g(X))\hat{g}(X)\gamma \hat{\pi}(X) \hat{\mu}(X)((\gamma-1)\hat{\mu}(X)+1) \\
        &= (1-\hat{g}(X))g(X)\gamma \hat{\pi}(X)\mu(X) - (1-g(X))\hat{g}(X)\gamma \hat{\pi}(X) \hat{\mu}(X)(\gamma-1)\hat{\mu}(X) - (1-g(X))\hat{g}(X)\gamma \hat{\pi}(X) \hat{\mu}(X) \\
        &= \underbrace{g(X)\gamma \hat{\pi}(X)\mu(X)}_{\textcolor{green}{(k)}} - \underbrace{\hat{g}(X)g(X)\gamma \hat{\pi}(X)\mu(X)}_{\textcolor{green}{(l)}} - \underbrace{(1-g(X))\hat{g}(X)\gamma \hat{\pi}(X) \hat{\mu}(X)(\gamma-1)\hat{\mu}(X)}_{\textcolor{green}{(m)}} \\
        & - \underbrace{\hat{g}(X)\gamma \hat{\pi}(X) \hat{\mu}(X)}_{\textcolor{green}{(n)}} + \underbrace{g(X)\hat{g}(X)\gamma \hat{\pi}(X) \hat{\mu}(X)}_{\textcolor{green}{(o)}}
    \end{align*}
    Grouping the \textcolor{green}{(k)} and \textcolor{green}{(n)} together, grouping \textcolor{green}{(l)} and \textcolor{green}{(o)} together, and keeping term \textcolor{green}{(m)}, we have 
    \begin{align*}
        & \gamma \hat{\pi}(X)(g(X)\mu(X) - \hat{g}(X)\hat{\mu}(X)) + \gamma \hat{\pi}(X)g(X)\hat{g}(X)(\hat{\mu}(X)-\mu(X)) - (1-g(X))\hat{g}(X)\gamma \hat{\pi}(X) \hat{\mu}(X)(\gamma-1)\hat{\mu}(X) \\
        & = \left( \gamma \hat{\pi} (g(X)\mu(X) - \hat{g}(X)\hat{\mu}(X) + g(X)\hat{g}(X)\hat{\mu}(X) - g(X)\hat{g}(X)\mu(X) \right) \\
        & \qquad - (1-g(X))\hat{g}(X)\gamma \hat{\pi}(X) \hat{\mu}(X)(\gamma-1)\hat{\mu}(X) \\
        & = (\gamma \hat{\pi}(\hat{g}(X)\hat{\mu}(X)(g(X)-1) - g(X)\mu(X)(\hat{g}(X)-1)) - (1-g(X))\hat{g}(X)\gamma \hat{\pi}(X) \hat{\mu}(X)(\gamma-1)\hat{\mu}(X) 
    \end{align*}

    Reintroducing the denominator results in the following expression
    \begin{align}
        & \frac{1}{\hat{P}(Y=1, D=0)} E \Big[ \frac{\gamma \hat{\pi}(X)}{(1-\hat{g}(X))dd}(\hat{g}(X)\hat{\mu}(X)(g(X)-1) - g(X)\mu(X)(\hat{g}(X)-1))  \label{eq:part_2_gamma} \\
        & \qquad \qquad \qquad \qquad  - \frac{(1-g(X))\hat{g}(X)\gamma \hat{\pi}(X) \hat{\mu}(X)(\gamma-1)\hat{\mu}(X)}{(1-\hat{g}(X))dd} \Big] \notag
    \end{align}

    Next, we combine terms from \cref{eq:part_1_gamma} and \cref{eq:part_2_gamma}, which gives us that (ignoring the $\frac{1}{\hat{P}(Y=1, D=0)}$ and the expectation for now)
    \begin{align}
        & \frac{1}{(1 - \hat{g}(X)) dd} ((1 - g(X)) \hat{g}(X) \gamma \hat{\mu}(X) \pi(X) (\gamma - 1) \hat{\mu}(X) \label{green_1} \\
        & \qquad + \frac{\gamma \hat{\mu}(X)}{(1 - \hat{g}(X)) dd} (g(X) \hat{\pi}(X)(\hat{g}(X) - 1) - \hat{g}(X) \pi(X) (g(X) - 1)) \label{blue_1} \\
        & \qquad + \frac{\gamma \hat{\pi}(X)}{(1 - \hat{g}(X)) dd} (\hat{g}(X) \hat{\mu}(X)(g(X) - 1) - g(X) \mu(X) (\hat{g}(X) - 1)) \label{blue_2} \\
        & \qquad - \frac{1}{(1 - \hat{g}(X)) dd} ((1 - g(X)) \hat{g}(X) \gamma \hat{\mu}(X) \hat{\pi}(X) (\gamma - 1) \hat{\mu}(X) \label{green_2}
    \end{align}

    Combining terms \cref{green_1} and \cref{green_2} gives us
    \begin{align*}
        \frac{1}{(1 - \hat{g}(X)) dd} ((1 - g(X)) \hat{g}(X) \gamma \hat{\mu}(X)(\gamma - 1) \hat{\mu}(X) (\pi(X) - \hat{\pi}(X)).    
    \end{align*}

    Combining terms \cref{blue_1} and \cref{blue_2} gives us
    \begin{align*}
        & \frac{1}{(1 - \hat{g}(X)) dd} (\gamma g(X) \hat{\pi}(X) (\hat{g}(X) - 1)) (\hat{\mu}(X) - \mu(X)) + \frac{1}{(1 - \hat{g}(X)) dd} \gamma \hat{\mu}(X)\hat{g}(X) (g(X) - 1) (\hat{\pi}(X) - \pi(X))
    \end{align*}

    The remaining term \textcolor{red}{(j)} from above is simplified as
    \small
    \begin{align}
        \frac{1}{\hat{P}(Y=1, D=0)} E[g(X) \hat{A}(X) - g(X) A(X)] & = \frac{1}{\hat{P}(Y=1, D=0)} E\left[g(X) \frac{1}{d d'} \gamma(\gamma - 1) \mu(X) \hat{\mu}(X) (\hat{\pi}(X) - \pi(X)) \right] \label{first_term_1}\\ 
        & \qquad + {\hat{P}(Y=1, D=0)} E\left[g(X) \frac{1}{d d'} \gamma  (\hat{\pi}(X) \hat{\mu}(X) - \pi(X) \mu(X)) \right],\label{first_term_2} 
    \end{align}
    \normalsize
    as we note that
    \begin{align*}
        \hat{A}(X) - A(X) & = \frac{\gamma \hat{\pi}(X) \hat{\mu}(X)}{d} - \frac{\gamma \pi(X) \mu(X)}{d'} = \frac{d' \gamma \hat{\pi}(X) \hat{\mu}(X) - d \gamma \pi(X) \mu(X)}{dd'} \\
        & = \frac{1}{dd'} \Big( (\gamma - 1) \mu(X) \gamma \hat{\pi}(X) \hat{\mu}(X) - (\gamma - 1) \hat{\mu}(X) \gamma \pi(X) \mu(X) + \gamma \hat{\pi}(X) \hat{\mu}(X) - \gamma \pi(X) \mu(X) \Big) \\
        & = \frac{1}{dd'} \gamma (\gamma - 1) \mu(X) \hat{\mu}(X) (\hat{\pi}(X) - \pi(X)) + \frac{1}{dd'} \gamma (\hat{\pi}(X) \hat{\mu}(X) - \pi(X) \mu(X))
    \end{align*}

    Now, we can combine \cref{first_term_1} and the combination of \cref{green_1} and \cref{green_2}, giving us that
    \begin{align*}
        & \frac{1}{\hat{P}(Y=1, D=0)} E\left[ g(X) \frac{1}{dd'} \gamma (\gamma - 1) \mu(X) \hat{\mu}(X) (\hat{\pi}(X) - \pi(X)) \right] \\
        & \qquad + \frac{1}{\hat{P}(Y=1, D=0)} E\left[ \frac{1}{(1 - \hat{g}(X))d d} (1 - g(X))\hat{g}(X) \gamma \hat{\mu}(X) (\gamma - 1) \hat{\mu}(X) (\pi(X) - \hat{\pi}(X)) \right]
    \end{align*}
    which simplifies by factoring to give that
    \begin{align*}
        \frac{1}{\hat{P}(Y=1, D=0)} E\left[ \frac{\gamma(\gamma - 1) \hat{\mu}(X)(\hat{\pi}(X) - \pi(X))}{d}  \Big( \frac{g(X)}{d'} \mu(X) - \frac{(1 - g(X)) \hat{g}(X) \hat{\mu}(X)}{(1 - \hat{g}(X)) d} \Big) \right]
    \end{align*}
    Now we can focus on the difference term, which simplifies as
    \begin{align*}
        \frac{g(X) (1 - \hat{g}(X)) d\mu(X)}{(1 - \hat{g}(X)) dd'} - \frac{(1 - g(X))\hat{g}(X) d' \hat{\mu}(X)}{(1 - \hat{g}(X)) d d'},
    \end{align*}
    and when only simplifying the numerator, we get that
    % \footnotesize
    \begin{align*}
        & (g(X) - g(X)\hat{g}(X)) d\mu(X) - (\hat{g}(X) - g(X) \hat{g}(X)) d' \hat{\mu}(X) \\
        & \qquad  =  (g(X) - g(X)\hat{g}(X)) ((\gamma - 1) \hat{\mu}(X) + 1) \mu(X) - (\hat{g}(X) - g(X) \hat{g}(X)) ((\gamma - 1) \mu(X) + 1) \hat{\mu}(X) \\
        & \qquad =  (g(X) - g(X)\hat{g}(X))(\gamma - 1) \hat{\mu}(X)\mu(X) \\ 
        & \qquad \qquad + (g(X) - g(X)\hat{g}(X))\mu(X) - (\hat{g}(X) - g(X) \hat{g}(X)) (\gamma - 1) \mu(X) \hat{\mu}(X) - (\hat{g}(X) - g(X) \hat{g}(X)) \hat{\mu}(X)
    \end{align*}
    This further simplifies to give us that
    \begin{align*}
        & = (\gamma - 1) \hat{\mu}(X) \mu(X) (g(X) - g(X) \hat{g}(X) - \hat{g}(X) + g(X)\hat{g}(X)) + (g(X) - g(X) \hat{g}(X)) \mu(X) - (\hat{g}(X) - g(X) \hat{g}(X)) \hat{\mu}(X) \\
        & = (\gamma - 1) \hat{\mu}(X) \mu(X) (g(X) - \hat{g}(X)) + g(X) \mu(X) - \hat{g}(X) \hat{\mu}(X) + g(X) \hat{g}(X) \hat{\mu}(X) - g(X)\hat{g}(X) \mu(X) \\
        & = (\gamma - 1) \hat{\mu}(X) \mu(X) \Big(g(X) - \hat{g}(X) \Big) + \Big(g(X) \mu(X) - \hat{g}(X) \hat{\mu}(X)\Big) + g(X) \hat{g}(X) \Big(\hat{\mu}(X) - \mu(X) \Big)
    \end{align*}
    % \normalsize

    Then, plugging this back with the denominator and factored out term in the numerator gives us that
    \begin{align*}
        & = \frac{1}{\hat{P}(Y=1, D=0)} E\left[  \frac{\gamma(\gamma - 1) \hat{\mu}(X)}{d} (\hat{\pi}(X) - \pi(X)) \cdot  (\gamma - 1) \hat{\mu}(X) \mu(X) \Big(g(X) - \hat{g}(X) \Big) \right] \\
        & \qquad + \frac{1}{\hat{P}(Y=1, D=0)} E\left[  \frac{\gamma(\gamma - 1) \hat{\mu}(X)}{d} (\hat{\pi}(X) - \pi(X)) \cdot \Big(g(X) \mu(X) - \hat{g}(X) \hat{\mu}(X)\Big) \right] \\
         & \qquad + \frac{1}{\hat{P}(Y=1, D=0)} E\left[  \frac{\gamma(\gamma - 1) \hat{\mu}(X)}{d} (\hat{\pi}(X) - \pi(X)) \cdot   g(X) \hat{g}(X) \Big(\hat{\mu}(X) - \mu(X) \Big) \right]
    \end{align*}

    Note that in each line, we have squared terms in differences of our estimated quantities on $\hat{P}$ and $P$. In the second line, we have $(\hat{\pi} - \pi) \cdot (g\mu - \hat{g}\hat{\mu})$; the term $(\hat{g}\hat{\mu}$ is a plugin estimator, which we have previously shown in \cref{sec:plugin} to have a rate of the sum of the rates of $\hat{g}$ and $\hat{\mu}$. Thus, when multiplied by the difference $\hat{\pi} - \pi$, we still achieve squared terms. Thus, this term converges at a rate of $o_P(n^{-\frac{1}{2} })$ if our estimates of $\pi$ and $g$ and $\mu$, each converge at a rate of $o_P(n^{-\frac{1}{4} })$.

    Note that $\hat{\pi}(X)\hat{\mu}(X) - \pi(X) \mu(X) = \hat{\pi}(X)\hat{\mu}(X) - \pi(X) \mu(X) + \hat{\pi}(X)\mu(X) - \hat{\pi}(X)\mu(X) = \hat{\pi}(X)(\hat{\mu}(X)-\mu(X)) + (\hat{\pi}(X) - \pi(X))\mu(X)$. Now looking at \cref{first_term_2}, we can write it as
    \begin{align*}
        & \frac{1}{\hat{P}(Y=1,D=0)} E \left[ \frac{g(X)}{dd'}\gamma (\hat{\pi}(X)\hat{\mu}(X) - \pi(X) \mu(X)) \right] \\
        & = \underbrace{\frac{1}{\hat{P}(Y=1,D=0)} E \left[ \frac{g(X)}{dd'} \gamma(\hat{\pi}(X)(\hat{\mu}(X)-\mu(X)) \right]}_{\textcolor{purple}{(p)}} + \underbrace{\frac{1}{\hat{P}(Y=1,D=0)} E \left[ \frac{g(X)}{dd'} \gamma(\mu(X)(\hat{\pi}(X)-\pi(X)) \right]}_{\textcolor{purple}{(q)}{}}
    \end{align*}

    First, we look at the two terms with $(\hat{\mu}(X)-\mu(X))$ (which are \textcolor{purple}{(p)} and \cref{blue_2}).
    \small
    \begin{align*}
        & \frac{1}{\hat{P}(Y=1,D=0)} E \left[ \frac{g(X)}{dd'} \gamma \hat{\pi}(X)(\hat{\mu}(X)-\mu(X)) \right] - \frac{1}{\hat{P}(Y=1,D=0)} E \left[ \frac{1-\hat{g}(X)}{(1-\hat{g}(X))dd} \gamma \hat{\pi}(X)g(X)(\hat{\mu}(X) - \mu(X)) \right] \\
        & = \frac{1}{\hat{P}(Y=1,D=0)} E \left[ g \gamma \hat{\pi}(X)(\hat{\mu}(X) - \mu(X)) \left( \frac{1}{dd'} - \frac{1}{dd} \right) \right]
    \end{align*}
    \normalsize
    Looking at $\frac{1}{dd'} - \frac{1}{dd}$, we unify the denominator as follows 
    \begin{align*}
        \frac{1}{dd'} - \frac{1}{dd} &= \frac{1}{(\gamma-1)\mu(X) + 1} - \frac{1}{(\gamma-1)\hat{\mu}(X) + 1} = \frac{(\gamma-1)\hat{\mu}(X) + 1 - ((\gamma-1)\mu(X) + 1)}{d'd} \\
        &= \frac{(\gamma-1)(\hat{\mu}(X)-\mu(X))}{dd'}
    \end{align*}
    Putting this back together, the complete $\hat{\mu}-\mu$ term is
    \begin{align*}
        \frac{1}{\hat{P}(Y=1,D=0)} E \left[ g \gamma \hat{\pi}(X)(\hat{\mu}(X) - \mu(X)) \frac{(\gamma-1)(\hat{\mu}(X)-\mu(X))}{dd'} \right]
    \end{align*}

    We observe that this is in the form of squared differences of $(\hat{\mu}(X) - \mu(X))$. Thus, this term converges at a rate of $o_P(n^{-\frac{1}{2}})$ if our estimate of $\hat{\mu}(X)$ converges at a rate of $o_P(n^{-\frac{1}{4}})$.

    Now, looking at the $(\hat{\pi}(X)-\pi(X))$ terms (which are \textcolor{purple}{(q)} and \cref{blue_1}), we have
    \begin{align*}
        & \frac{1}{\hat{P}(Y=1,D=0)} E \left[ g\frac{1}{dd'}\gamma \mu(X)(\hat{\pi}(X) - \pi(X)) \right] - \frac{1}{\hat{P}(Y=1,D=0)} E \left[ \hat{g}(X)\frac{1-g(X)}{1-\hat{g}(X)}\frac{1}{dd}\gamma \hat{\mu}(X)(\hat{\pi}(X) - \pi(X)) \right] \\
        & = \frac{1}{\hat{P}(Y=1,D=0)} E \left[ \frac{\gamma}{d}(\hat{\pi}(X)-\pi(X)) \left( \frac{g(X)}{d'}\mu(X) - \frac{\hat{g}(X)}{d}\frac{1-g(X)}{1-\hat{g}(X)} \hat{\mu}(X) \right) \right]
    \end{align*}

    Looking at the terms inside the parentheses, $\frac{g(X)}{d'}\mu(X) - \frac{\hat{g}(X)}{d}\frac{1-g(X)}{1-\hat{g}(X)} \hat{\mu}(X)$, we unify the denominator as follows
    \begin{align*}
        & \frac{g(X)}{d'}\mu(X) - \frac{\hat{g}(X)}{d}\frac{1-g(X)}{1-\hat{g}(X)} \hat{\mu}(X) \\
        & = \frac{g(X)(1-\hat{g}(X))\mu(X)d - \hat{g}(X)(1-g(X))\hat{\mu}(X)d'}{d' (1-\hat{g}(X))d}
    \end{align*}
    We look at only the numerator for now,
    \begin{align*}
        & g(X)(1-\hat{g}(X))\mu(X)d - \hat{g}(X)(1-g(X))\hat{\mu}(X)d' \\
        & = g(X)\mu(X)d - g(X)\hat{g}(X)\mu(X)d - \hat{g}(X)\hat{\mu}(X)d' + g(X)\hat{g}(X)\hat{\mu}(X)d' \\
        & = (g(X)\mu(X)d - \hat{g}(X)\hat{\mu}(X)d') + (g(X)\hat{g}(X)\hat{\mu}(X)d' - g(X)\hat{g}(X)\mu(X)d) \\
        & = g(X)\mu(X)(\gamma-1)\hat{\mu}(X) + g(X)\mu(X) - (\hat{g}(X)\hat{\mu}(X)(\gamma-1)\mu(X) + \hat{g}(X)\hat{\mu}(X)) \\
        & = g(X)\hat{g}(X)\hat{\mu}(X)(\gamma-1)\mu(X) + g(X)\hat{g}(X)\hat{\mu}(X) - (g(X)\hat{g}(X)\hat{\mu}(X)(\gamma-1)\hat{\mu}(X) + g\hat{g}(X)\mu(X)) \\
        & = (\gamma-1)\mu(X)\hat{\mu}(X)(g(X) - \hat{g}(X)) + (g(X)\mu(X) - \hat{g}(X) \hat{\mu}(X)) + g(X)\hat{g}(X)(\hat{\mu}(X) - \mu(X))
    \end{align*}
    Putting this back together with the denominator, we have 
    \begin{align*}
        & \frac{1}{\hat{P}(Y=1,D=0)} E \left[ \frac{\gamma}{d} (\hat{\pi}(X) - \pi(X))(\gamma-1)\mu(X)\hat{\mu}(X)(g(X)-\hat{g}(X))\right] \\
        & + \frac{1}{\hat{P}(Y=1,D=0)} E \left[ \frac{\gamma}{d} (\hat{\pi}(X) - \pi(X)) (g(X)\mu(X) - \hat{g}(X) \hat{\mu}(X)) \right] \\
        & + \frac{1}{\hat{P}(Y=1,D=0)} E \left[ \frac{\gamma}{d} (\hat{\pi}(X) - \pi(X)) g(X)\hat{g}(X)(\hat{\mu}(X) - \mu(X)) \right]
    \end{align*}

    We observe the that first and third terms are in the form of squared differences, so they converge at a rate of $o_P(n^{-\frac{1}{2}})$, when the individual estimators converge at a rate of $o_P(n^{-\frac{1}{4}})$. We observe that the second term scales with $|g(X) - \hat{g}(X)| + |\mu(X) - \hat{\mu}(X)|$ (again by the logic in \cref{sec:plugin}) and is multiplied by $(\hat{\pi}(X) - \pi(X))$, so it converges in $o_{P}(n^{-\frac{1}{2}})$.  
\end{proof}

Now, for the following Lemmas and proofs, we let $A(X) = \frac{\gamma' \pi(X) \mu(X)}{(\gamma' - 1)\mu(X)+1}$ where $\gamma' = \frac{1}{\gamma}$. 

\begin{lemma}
    Let our estimand $\theta_3^{l, \gamma}(P)$ be given by 
    \begin{align}\label{eq:theta_3_l_gamma}
        \theta_3^{l,\gamma}(P) = \frac{1}{P(Y = 1 | D = 0)} E\left[ \frac{\frac{1}{\gamma} \pi(X) \mu(X)}{(\frac{1}{\gamma} - 1) \mu(X) + 1} | D = 0 \right]
    \end{align}
    Let $\gamma' = \frac{1}{\gamma}$. Then, we have that our influence function is given by
    \begin{align*}
    IF(\theta_3^{l, \gamma}) & = -\frac{1[Y=1, D=0]}{P{(Y=1, D=0)}^2} E_P[g(X)A'(X)]  + \frac{g(X)A'(X)}{P(Y=1, D=0)} + \frac{A'(X)(1[D=0]-g(X))}{P(Y=1, D=0)}  \\
        & \qquad + \frac{1[D=1]}{P(Y=1, D=0)} \frac{\gamma' \mu(X)}{((\gamma' - 1)\mu(X) + 1)}(T - \pi(x)) \\
         & \qquad + \frac{1[D=0]}{P(Y=1, D=0)}\frac{1 - g(X)}{g(X)} \frac{\gamma' \pi(X)}{{((\gamma' - 1)\mu(X) + 1)}^2}(Y - \mu(x)) 
\end{align*}    

where $A'(X) = A_{\gamma'}(X)$.
\end{lemma}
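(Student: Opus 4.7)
The plan is to derive this influence function in direct parallel to the preceding derivation of $IF(\theta_3^{u,\gamma})$. First, I would put the estimand into the same unconditional form used there, namely
\begin{equation*}
\theta_3^{l,\gamma}(P) = \frac{E_P[g(X) A'(X)]}{P(Y=1,D=0)}, \qquad A'(X) := \frac{\gamma' \pi(X)\mu(X)}{(\gamma'-1)\mu(X)+1},
\end{equation*}
where $\gamma' = 1/\gamma$. Equivalence with the conditional expression in the lemma follows from iterated expectation: $E[g(X)A'(X)] = P(D{=}0) \cdot E[A'(X)\mid D{=}0]$, while $P(Y{=}1,D{=}0) = P(Y{=}1\mid D{=}0) P(D{=}0)$. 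This rewrite is the key reduction, because $A'$ now has the exact functional form of $A$ from the upper-bound proof with $\gamma$ replaced by $\gamma'$, so the skeleton of that derivation transfers verbatim.

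Next I would apply the product rule for influence functions, splitting $IF(\theta_3^{l,\gamma})$ into $IF(1/P(Y{=}1,D{=}0))\cdot E[gA']$ plus $\frac{1}{P(Y{=}1,D{=}0)} IF(E[gA'])$. The first piece contributes the standard term $-\frac{1[Y=1,D=0]}{P(Y=1,D=0)^2} E[gA']$. Expanding $IF(E[gA'])$ as in the earlier proof gives three summands: (i) the density perturbation $g(X)A'(X) - E[gA']$, (ii) the $g$-perturbation $A'(X)(1[D{=}0] - g(X))$, and (iii) the nuisance perturbation $\sum_x p(x) g(x) IF(A'(x))$. Summands (i) and (ii) are read off immediately, and the $-E[gA']$ in (i) cancels with the $+E[gA']$ produced by the preceding product-rule step, mirroring the earlier cancellation.

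The main computation is summand (iii), $IF(A'(X))$. By the chain rule and the quotient-rule calculation already done for $A$, we get
\begin{equation*}
\frac{\partial A'}{\partial \pi} = \frac{\gamma'\mu(X)}{(\gamma'-1)\mu(X)+1}, \qquad \frac{\partial A'}{\partial \mu} = \frac{\gamma'\pi(X)}{((\gamma'-1)\mu(X)+1)^2},
\end{equation*}
after collapsing the two $\mu$-derivative contributions via the algebraic identity $\tfrac{\gamma'\pi}{(\gamma'-1)\mu+1} - \tfrac{\gamma'\pi\mu(\gamma'-1)}{((\gamma'-1)\mu+1)^2} = \tfrac{\gamma'\pi}{((\gamma'-1)\mu+1)^2}$. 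Combining these with the standard pointwise influence functions for $\pi(x)$ and $\mu(x)$, reweighting by $p(x) g(x)$, and canceling with the $P(D{=}d,X{=}x) = p(x)P(D{=}d\mid X{=}x)$ denominators, the indicator $1[X{=}x]$ collapses and the residuals $(T-\pi(X))$ and $(Y-\mu(X))$ appear with the appropriate $g/(1-g)$ or constant reweightings, yielding the claimed formula.

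The main obstacle is purely bookkeeping: tracking which reweighting factor (none vs.\ $g/(1-g)$) attaches to each residual term, and verifying that the algebraic identity above correctly collapses the two $\partial_\mu A'$ contributions into a single clean residual. Beyond the substitutions $\gamma \mapsto \gamma'$ and $A \mapsto A'$, no new ideas beyond those in the $\theta_3^{u,\gamma}$ proof are required, so the argument is essentially a recycling of that derivation with careful attention to the $\gamma'$ denominators.
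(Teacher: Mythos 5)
Your proposal is correct and is exactly the paper's proof, which consists of the single remark that the result follows by rerunning the $\theta_3^{u,\gamma}$ derivation with $\gamma$ replaced by $\gamma' = 1/\gamma$. One caution on the bookkeeping point you yourself flag: the verbatim transfer places the factor $g(X)/(1-g(X))$ on the $(T-\pi(X))$ residual and no extra factor on the $(Y-\mu(X))$ residual (consistent with the paper's later estimator $\hat{\theta}_3^{l,\gamma}$ and with the correction term $\lambda_3^{l}$ used in the asymptotic-normality section), so the placement of the $g$-weights in the lemma statement as printed appears to be a typo rather than something your derivation should reproduce.
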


\begin{proof}
    This holds via a direct application for the proof for the upper bound with our sensitivity model, except using $\gamma' = \frac{1}{\gamma}$.
\end{proof}

Next, we move on to discussing our estimator of the lower bound, using this influence function. 

\begin{proposition} \label{prop:estimator_lb_gamma}
    Our one-step estimator of $\theta_3^{l, \gamma}$ is given by
    \begin{align*}
        \hat{\theta_3}^{l, \gamma} 
        & =  \frac{1}{\hat{P}(Y=1, D=0)} E_P\left[ \hat{A}'(X)(1[D=0]) \right]\\
        & \qquad + \frac{1}{\hat{P}(Y=1, D=0)} E_P\left[1[D=1] \frac{\gamma' \hat{\mu}(X)}{((\gamma' - 1)\hat{\mu}(X) + 1)}(T - \hat{\pi}(x)) \frac{\hat{g}(X)}{1 - \hat{g}(X)} \right] \\
        & \qquad +\frac{1}{\hat{P}(Y=1, D=0)} E_P\left[ 1[D=0]\frac{\gamma' \hat{\pi}(X)}{{((\gamma' - 1)\hat{\mu}(X) + 1)}^2}(Y - \hat{\mu}(x))  \right] 
    \end{align*}
\end{proposition}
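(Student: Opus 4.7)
The plan is to obtain $\hat{\theta}_3^{l,\gamma}(\hat{P})$ by the standard one-step construction $\theta_3^{l,\gamma}(\hat{P}) + E_{\hat{P}}[IF(\theta_3^{l,\gamma}(\hat{P}))]$, and then to simplify by exploiting the same cancellation mechanism that was used in the proof of Proposition \ref{prop:estimator_ub_gamma}. The key observation is structural: since $\theta_3^{l,\gamma}$ is obtained from $\theta_3^{u,\gamma}$ by the substitution $\gamma \mapsto \gamma' := 1/\gamma$ (equivalently, by replacing $A$ with $A'$), every algebraic manipulation used for the upper bound carries over verbatim. So I expect no genuinely new technical idea to be required; the work is essentially bookkeeping.

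First, I would write out the plug-in estimator as
\[
\theta_3^{l,\gamma}(\hat{P}) \;=\; \frac{1}{\hat{P}(Y=1,D=0)} \, E_{\hat{P}}[\hat{g}(X)\,\hat{A}'(X)],
\]
using the identification developed for the upper bound after the $\gamma \to \gamma'$ substitution. Then I would take the expectation under $\hat{P}$ of the influence function term by term. The leading piece
\[
-\frac{1[Y=1,D=0]}{\hat{P}(Y=1,D=0)^2}\, E_{\hat{P}}[\hat{g}(X)\hat{A}'(X)]
\]
has $\hat{P}$-expectation equal to $-\theta_3^{l,\gamma}(\hat{P})$ because $E_{\hat{P}}[1[Y=1,D=0]] = \hat{P}(Y=1,D=0)$, which cancels the plug-in contribution exactly. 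Next, the pair
\[
\frac{\hat{g}(X)\hat{A}'(X)}{\hat{P}(Y=1,D=0)} + \frac{\hat{A}'(X)(1[D=0] - \hat{g}(X))}{\hat{P}(Y=1,D=0)}
\]
collapses to $\frac{1[D=0]\,\hat{A}'(X)}{\hat{P}(Y=1,D=0)}$, which produces the first line of the claimed expression.

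Finally, the two remaining bias-correction pieces in the influence function, namely the $(T - \hat{\pi}(X))$ correction weighted by $\frac{\gamma'\hat{\mu}(X)}{(\gamma'-1)\hat{\mu}(X)+1}$ on the $D=1$ subsample, and the $(Y - \hat{\mu}(X))$ correction weighted by $\frac{\gamma'\hat{\pi}(X)}{((\gamma'-1)\hat{\mu}(X)+1)^2}$ on the $D=0$ subsample, are kept unchanged, yielding the remaining two lines of the stated estimator. The only place where care is needed is the $\frac{\hat{g}(X)}{1-\hat{g}(X)}$ weight that appears on the $(T-\hat{\pi})$ term in the claimed form: this factor arises (as in the upper-bound proof) from the step $\frac{1[D=1,X=x]\,p(x)\,g(x)}{P(D=1,X)} = 1[D=1]\cdot\frac{g(x)}{1-g(x)}$ when converting the score for $\hat{\pi}$ into an observed-data quantity, so I would re-derive it inline to be sure it matches the statement of the proposition.

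I anticipate the main potential obstacle to be a reconciliation check rather than a conceptual difficulty: the version of $IF(\theta_3^{l,\gamma})$ written in the preceding lemma appears to omit the $\frac{g(X)}{1-g(X)}$ factor on the $(T-\pi)$ term and to place a $\frac{1-g(X)}{g(X)}$ factor on the $(Y-\mu)$ term, whereas the proposition's estimator carries the former and not the latter. The resolution is that these factors arise purely from the density-ratio step in each individual influence function of $\hat{\pi}(X)$ and $\hat{\mu}(X)$, and one should use $\frac{g(X)}{1-g(X)}$ on the $D=1$ term and no such weight on the $D=0$ term (since $p(x)g(x)/P(D=0,X) = 1$). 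Clarifying this will align the influence function with the estimator, after which the cancellations above deliver the claim directly.
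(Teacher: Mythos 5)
Your proposal is correct and follows essentially the same route as the paper, which proves this proposition by directly transporting the upper-bound argument of Proposition \ref{prop:estimator_ub_gamma} under the substitution $\gamma \mapsto \gamma' = 1/\gamma$ (so that $A$ becomes $A'$), with the leading term of the influence function cancelling the plug-in and the $g\hat{A}'$ and $\hat{A}'(1[D=0]-g)$ pieces collapsing to $1[D=0]\hat{A}'$. Your reconciliation of the $\frac{g(X)}{1-g(X)}$ weight is also right: since $p(x)g(x)/P(D=1,X) = g(x)/(1-g(x))$ while $p(x)g(x)/P(D=0,X) = 1$, the weight belongs on the $(T-\hat\pi)$ term and not on the $(Y-\hat\mu)$ term, so the proposition's estimator matches the correctly substituted influence function and the factors in the preceding lemma's displayed $IF(\theta_3^{l,\gamma})$ are a transcription slip.
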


\begin{proof}
This holds via a direct application for the proof for the upper bound with our sensitivity model, except using $\gamma' = \frac{1}{\gamma}$.
\end{proof}

\begin{lemma}[Error of one-step estimator of lower bound with $\gamma$]\label{lemma:error_lb_gamma}
        Let the error of our one-step estimator be given by 
        \begin{equation*}
          R(\hat{P}, P) = \theta_3^{l, \gamma}(\hat{P}) - \theta_3^{l,\gamma}(P) + E_P\left[ IF(\theta_3^{l, \gamma}(\hat{P})) \right]
        \end{equation*}
        Then, we have that
        \begin{align*}
            R(\hat{P}, P) = o_P(n^{-\frac{1}{2}}),
        \end{align*}
        when $(\hat{\pi}  - \pi), (\hat{\mu}  - \mu), (\hat{g}  - g)$ have rates of at least $o_P(n^{-\frac{1}{4}})$. 
\end{lemma}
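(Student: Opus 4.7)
The plan is to mirror the proof of~\cref{lemma:error_ub_gamma} verbatim, exploiting the observation made in the excerpt that $\theta_3^{l,\gamma}$ has exactly the same functional form as $\theta_3^{u,\gamma}$ with $\gamma$ replaced throughout by $\gamma' := 1/\gamma$. Since $\gamma'$ is a fixed positive constant and boundedness of $\mu,g$ away from $0$ and $1$ ensures that the denominators $(\gamma'-1)\mu(X)+1$ stay bounded away from zero (just as in the upper bound case), every algebraic manipulation carries over without modification.

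First I would expand $R(\hat{P},P)$ by substituting the closed form of $\theta_3^{l,\gamma}$ and its influence function. Grouping the plugin pieces against the $-E_{\hat{P}}[\cdot]$ parts of the influence function causes the uncorrected quantities to cancel, leaving (i) a ``denominator'' term of the form
\[
\left(1 - \frac{P(Y=1,D=0)}{\hat{P}(Y=1,D=0)}\right) \cdot \frac{E_{\hat{P}}[\hat{g}(X)\hat{A}'(X)]}{\hat{P}(Y=1,D=0)},
\]
which is $o_P(n^{-1/2})$ because $\hat{P}(Y=1,D=0)$ is a sample average of an indicator (parametric $\sqrt{n}$-rate) while the right factor is $O_P(1)$; and (ii) a collection of residual terms involving $T - \hat{\pi}(X)$ and $Y - \hat{\mu}(X)$.

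Next I would convert each residual term into a product of nuisance errors by iterated expectation. Specifically,
$E_P[1[D=1](T - \hat{\pi}(X))\, q(X)] = E_P[(1-g(X))(\pi(X)-\hat{\pi}(X))\, q(X)]$
and
$E_P[1[D=0](Y - \hat{\mu}(X))\, q(X)] = E_P[g(X)(\mu(X)-\hat{\mu}(X))\, q(X)]$
for any bounded $q(X)$ independent of the randomness in $(T,Y,D) \mid X$. Substituting these identities turns every data-dependent residual into a factor of $\hat\pi-\pi$ or $\hat\mu-\mu$, so what remains is a purely deterministic expression in the three nuisance errors.

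The main obstacle, and the bulk of the work, is the bookkeeping step: one must collect coefficients and show that every surviving summand factors as a product of at least two of $(\hat{\pi}-\pi)$, $(\hat{\mu}-\mu)$, $(\hat{g}-g)$, or of one such error with a plugin-style difference like $\hat g\hat\mu - g\mu$, which itself is controlled by the argument of~\cref{sec:plugin}. This is precisely the long chain of cancellations carried out in~\cref{lemma:error_ub_gamma} (combining terms (j), (e)--(h), and the $\hat\pi\hat\mu - \pi\mu$ expansion), and it transfers word-for-word after substituting $\gamma \mapsto \gamma'$. Once each product is bounded via Cauchy--Schwarz by $o_P(n^{-1/4}) \cdot o_P(n^{-1/4}) = o_P(n^{-1/2})$ under the assumed nuisance rates, the claim follows. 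In practice the cleanest presentation is to state that the derivation is identical to the upper bound case under the substitution $\gamma \mapsto 1/\gamma$ and omit the repetition.
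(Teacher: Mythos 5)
Your proposal matches the paper's own proof, which simply invokes the upper-bound argument of \cref{lemma:error_ub_gamma} verbatim under the substitution $\gamma \mapsto 1/\gamma$; your additional observation that $(\gamma'-1)\mu(X)+1 \geq \gamma' > 0$ keeps the denominators bounded away from zero is the right justification for why the transfer is legitimate. No gaps.
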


\begin{proof}
This holds via a direct application for the proof for the upper bound with our sensitivity model, except using $\gamma' = \frac{1}{\gamma}$.
\end{proof}

\section{Margin-based Analysis and Asymptotic Normality of our Estimators} \label{sec:margin}

In this section, we provide the proof for \cref{prop:normality-max}. First, we provide a general analysis of estimating a quantity that consists of a max or min operator, showing that the resulting estimator is asymptotically normal. Next, we show that the individual components of our estimators satisfy the assumptions in our margin analysis, concluding that our resulting estimator is asymptotically normal (i.e., \cref{prop:normality-max}). 

\subsection{Preliminaries and Assumptions for \cref{prop:normality-max}}

Let $W = (X, Y, D, T)$ denote all of our observed variables.
The estimators we introduce in this paper all fit within a common framework, where we consider the general problem of estimating a bound given by either of
\begin{align}
  \psil &\coloneqq E_P\left[\max_{j = 1, \ldots, J} \thetal_{j}(W; P)\right] = E_P\left[ \thetal_{\dl(W)}(W; P) \right] & \dl(W) &\coloneqq \arg\max_{j \in 1, \ldots, J} \thetal_j(W; P) \label{eq:margin_proof_def_psi_lb} \\
\psiu &\coloneqq E_P\left[\min_{j = 1, \ldots, J} \thetau_j(W; P)\right] = E_P\left[ \thetau_{\du(W)}(W; P) \right] & \du(W) &\coloneqq \arg\min_{j \in 1, \ldots, J} \thetau_j(W; P) \label{eq:margin_proof_def_psi_ub}
\end{align}
where the $\thetau_j(W; P),\thetal_j(W;P)$ are individual bounds that can be evaluated at each sample $W$ and we wish to take the pointwise maximum (for our lower bounds) or minimum (for our upper bounds) and then marginalize over $W$.  Note that the set of individual bounds $\theta_j(W; P)$ will differ depending on whether we are estimating upper and lower bounds. Furthermore, we define for each $\theta_j(W; P)$ (regardless of whether it is a lower or upper bound) the corresponding functional
\begin{equation}\label{eq:margin_proof_def_theta}
  \theta_j \coloneqq E_P[\theta_j(W; P)].
\end{equation}
In each of the estimators we consider in this work, we have derived a plug-in estimator for each $\theta_j$, 
\begin{equation}\label{eq:margin_proof_def_plugin}
\hat{\theta}_j \coloneqq E_{\Ph}[\theta_j(W; \Ph)],
\end{equation}
and we similarly have access to a one-step (or ``debiased'') estimator for each $\theta_j$,
\begin{equation}\label{eq:margin_proof_def_onestep}
  \hat{\psi}_j \coloneqq E_{\Ph}[\theta_j(W; \Ph) + \lambda_j(W; \Ph)]
\end{equation}
where $\lambda_j(W; \Ph)$ is the influence function for $\theta_j$ in~\cref{eq:margin_proof_def_theta}, though in the following results we will only require that this additional term satisfies certain conditions (e.g., being zero-mean $E_P[\lambda_j(W; P)] = 0$) and that the plug-in estimator in~\cref{eq:margin_proof_def_plugin} and the one-step estimator in~\cref{eq:margin_proof_def_onestep} converge to $\theta_j$ at certain rates.

Our estimator of the lower bound in~\cref{eq:margin_proof_def_psi_lb} is given by the following, where we introduce the short-hand $\varphi(W; P, d)$
\begin{align}\label{eq:margin_proof_def_lower_bound_estimator}
  \psihl &=  E_{\Ph}\left[\varphil(W; \Ph, \dhl)\right] \\
  \varphil(W; \Ph, \dhl) &\coloneqq \thetal_{\dhl(W)}(W; \Ph) + \lambdal_{\dhl(W)}(W; \Ph)\nonumber \\
\dhl(W) &\coloneqq \arg\max_{j \in 1, \ldots, J} \thetal_j(W; \Ph) \nonumber
\end{align}
and our estimator of the upper bound in~\cref{eq:margin_proof_def_psi_ub} is analogously given by 
\begin{align}\label{eq:margin_proof_def_upper_bound_estimator}
  \psihu &=  E_{\Ph}\left[\varphiu(W; \Ph, \dhu)\right] \\
  \varphiu(W; \Ph, \dhu) &\coloneqq \thetau_{\dhu(W)}(W; \Ph) + \lambdau_{\dhu(W)}(W; \Ph) \nonumber \\
\dhu(W) &\coloneqq \arg\max_{j \in 1, \ldots, J} \thetau_j(W; \Ph).\nonumber
\end{align}
In words, each estimator uses the plug-in estimators $\theta_j(W; \Ph)$ to estimate which bound is tightest at each observation $W$, uses the tightest bound for each observation, and then averages the bias-corrected version of the bound at each $W$ to give the final estimate.

Our goal is to demonstrate that these estimators for the lower bound in~\cref{eq:margin_proof_def_lower_bound_estimator} and for the upper bound in~\cref{eq:margin_proof_def_upper_bound_estimator} are asymptotically normal, and to characterize the resulting asymptotic variance, so that we can provide asymptotically valid confidence bounds.  The main technical challenge is that these estimators are non-smooth, given the presence of the max/min operator.

To do so, we will require a few technical assumptions, which we state in a general form, since they apply equally whether we are considering upper or lower bounds.  Our first assumption states that our estimators are bounded.
\begin{restatable}[Boundedness]{assumption}{AsmpUniformBounded}\label{asmp:margin_proof_uniform_bounded}
  For every $j \in \{1, \ldots, J\}$, $\lambda_j(W; P)$ and $\theta_j(W; P)$ are both uniformly bounded by constants with respect to $n$. 
\end{restatable}

We will also require that for the estimator of each individual component of the bound, the chosen one-step correction $\lambda_j$ has zero mean, which will be satisfied whenever $\lambda$ is derived via an influence function-based debiasing step (related to the fact that influence functions have mean 0). 
\begin{restatable}[Zero-mean Correction Term]{assumption}{AsmpInfZeroMean}\label{asmp:margin_proof_inf_function_zero_mean}
For every $j \in \{1, \ldots, J\}$, $E_P[\lambda_j(W; P)] = 0$.
\end{restatable}

We also require a consistency assumption for the plugin estimator, although no assumption about its rate of convergence is required just yet.
\begin{restatable}[Consistent Plug-in Estimator]{assumption}{AsmpConsistentPlugin}\label{asmp:margin_proof_consistent_plugin}
  For every $j \in \{1\ldots J\}$, $||\hat{\theta}_j - \theta_j|| = o_P(1)$.
\end{restatable}

Finally, we require a technical ``margin'' condition, such that $P$ puts bounded density on the event that $\min_{j \neq \d} \theta_{\d(W)}(W) - \theta_j(W)$ is close to zero, i.e., that there are two near-optimal bounds at a given $W$. 
\begin{restatable}[Margin Condition]{assumption}{AsmpMarginCondition}\label{asmp:margin_proof_margin_condition}
   There exists some $\alpha > 0$ such that 
   \begin{equation*}
     P\left[\min_{j \neq \d(W)} |\theta_{\d(W)}(W) - \theta_j(W)| \leq t\right] \lesssim t^{\alpha}.
   \end{equation*}
\end{restatable}

\begin{restatable}[Independent Samples]{assumption}{AsmpIndependentSamples}\label{asmp:margin_proof_independent_samples}
In~\cref{eq:margin_proof_def_lower_bound_estimator,eq:margin_proof_def_upper_bound_estimator} the expectation is taken with respect to $\Ph_1$, while the estimator $\varphi(W; \Ph_2, \dh)$ uses an independent sample $\Ph_2$.
\end{restatable}

\subsection{Proof of Technical Lemmas for \cref{prop:normality-max}}

To prove \cref{prop:normality-max}, we require \cref{lemma:margin_error_bound} and \cref{thm:margin_theorem}.

\begin{restatable}{lemma}{MarginErrorBound}\label{lemma:margin_error_bound}
  Let Assumptions~\ref{asmp:margin_proof_uniform_bounded},~\ref{asmp:margin_proof_inf_function_zero_mean},~\ref{asmp:margin_proof_consistent_plugin},~\ref{asmp:margin_proof_margin_condition} and~\ref{asmp:margin_proof_independent_samples} hold.  Then 
  \begin{align*}
    \psihl - \psil =& \underbrace{E_{\Ph}[\varphil(W; P, \dl)] - E_P[\varphil(W; P, \dl)]}_{\textcolor{orange}{(a)}{}} \\
                   &+ \underbrace{O_P\left(||\thetahl_j - \thetal_j||_\infty^{1 + \alpha} + \max_{j = 1, \ldots, J}E_P[\thetal_j(W; \Ph) + \lambdal_j(W; \Ph) - \thetal_j(W; P)]\right)}_{\textcolor{orange}{(b)}{}} \\
                   &+ \underbrace{o_P(n^{-\frac{1}{2}})}_{\textcolor{orange}{(c)}{}}
  \end{align*}
  and similarly 
  \begin{align*}
    \psihu - \psiu =& E_{\Ph}[\varphiu(W; P, \du)] - E_P[\varphiu(W; P, \du)] \\
                   &+ O_P\left(||\thetahu_j - \thetau_j||_\infty^{1 + \alpha} + \max_{j = 1, \ldots, J}E_P[\thetau_j(W; \Ph) + \lambdau_j(W; \Ph) - \thetau_j(W; P)]\right) \\
                   &+ o_P(n^{-\frac{1}{2}})
  \end{align*}
  where $\psil, \psiu$ are defined in~\cref{eq:margin_proof_def_psi_lb,eq:margin_proof_def_psi_ub}, and $\psihl,\psihu$ are defined in~\cref{eq:margin_proof_def_lower_bound_estimator,eq:margin_proof_def_upper_bound_estimator}.
\end{restatable}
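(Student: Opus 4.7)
The plan is to prove the lower-bound claim via a four-way decomposition of $\psihl - \psil$, isolating a nuisance-error term, a decision-error term, the sampling term $(a)$, and a centering term. Concretely, I would write
\begin{align*}
\psihl - \psil
&= \underbrace{E_{\Ph}[\varphil(W;\Ph,\dhl) - \varphil(W;P,\dhl)]}_{N} + \underbrace{E_{\Ph}[\varphil(W;P,\dhl) - \varphil(W;P,\dl)]}_{D} \\
&\quad + \underbrace{E_{\Ph}[\varphil(W;P,\dl)] - E_P[\varphil(W;P,\dl)]}_{\text{term }(a)} + \underbrace{E_P[\varphil(W;P,\dl)] - \psil}_{C}.
\end{align*}
The centering piece equals $E_P[\lambdal_{\dl(W)}(W;P)]$; combining \cref{asmp:margin_proof_inf_function_zero_mean} with the stronger conditional-mean-zero structure of the influence functions derived in Lemmas~\ref{lemma:influence_upper}--\ref{lemma:influence_lower}---each $\lambdal_j(W;P)$ is mean zero given $X$, while $\dl$ is an $X$-measurable selector---gives $E[\lambdal_{\dl(W)}(W;P)\mid X]=0$ and hence $C=0$. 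The upper-bound case follows by the symmetric argument with $\max$ replaced by $\min$.

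The core technical step is the decision error $D$. On $\{\dhl(W)=\dl(W)\}$ the integrand vanishes. Otherwise, the pointwise optimality $\thetahl_{\dhl}(W;\Ph) \geq \thetahl_{\dl}(W;\Ph)$, together with two applications of the sup-norm bound $\|\thetahl - \thetal\|_\infty$, yields
\[
0 \leq \thetal_{\dl(W)}(W;P) - \thetal_{\dhl(W)}(W;P) \leq 2\,\|\thetahl - \thetal\|_\infty,
\]
so that $\min_{j\neq \dl(W)} |\thetal_{\dl(W)}(W;P) - \thetal_j(W;P)| \leq 2\|\thetahl-\thetal\|_\infty$. Then \cref{asmp:margin_proof_margin_condition} controls the misclassification probability by $\|\thetahl-\thetal\|_\infty^{\alpha}$, which multiplied by the pointwise $O(\|\thetahl-\thetal\|_\infty)$ bound on the $\thetal$-piece gives $E_P|\thetal_{\dhl}-\thetal_{\dl}| \lesssim \|\thetahl-\thetal\|_\infty^{1+\alpha}$, as claimed. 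The empirical fluctuation of $D$ around its $\Ph_2$-conditional mean is $o_P(n^{-1/2})$ by Chebyshev, since the integrand is uniformly bounded (\cref{asmp:margin_proof_uniform_bounded}) and supported on an event of vanishing probability by consistency (\cref{asmp:margin_proof_consistent_plugin}).

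For the nuisance term $N$, I condition on the independent sample $\Ph_2$ that generates $\thetahl,\lambdahl,\dhl$ (\cref{asmp:margin_proof_independent_samples}). The conditional $P$-expectation is $E_P[\thetal_{\dhl(W)}(W;\Ph) + \lambdal_{\dhl(W)}(W;\Ph) - \thetal_{\dhl(W)}(W;P) - \lambdal_{\dhl(W)}(W;P)]$; using the $X$-conditional zero-mean property of $\lambdal_j(W;P)$, the $-\lambdal_{\dhl(W)}(W;P)$ contribution drops out, and the remainder is bounded above pointwise by $\max_j E_P[\thetal_j(W;\Ph) + \lambdal_j(W;\Ph) - \thetal_j(W;P)]$, matching the second summand in the stated rate. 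The empirical-to-conditional-mean fluctuation is again $o_P(n^{-1/2})$ by Chebyshev together with consistency of the plug-in estimators, which forces the second moment of the integrand to vanish in probability.

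The main obstacle I foresee is controlling the $\lambda$-contribution to $D$ sharply: a naive triangle bound on $|\lambdal_{\dhl}-\lambdal_{\dl}|\,\mathbf{1}\{\dhl\neq\dl\}$ yields only $O_P(\|\thetahl-\thetal\|_\infty^{\alpha})$, which fits neither of the claimed error terms. The resolution is to exploit sample-splitting so that $\dhl$ is $\sigma(\Ph_2)$-measurable and independent of the evaluation sample; conditional on $\Ph_2$, the $\lambda$-piece has mean zero by the $X$-conditional zero-mean of each $\lambdal_j$, so only its sampling fluctuation contributes, and this is $O_P(n^{-1/2}\sqrt{P(\dhl\neq\dl)}) = o_P(n^{-1/2})$ and thus absorbed into the final $o_P(n^{-1/2})$ remainder.
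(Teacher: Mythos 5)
Your decomposition is the paper's own argument re-grouped. At the level of conditional means given the nuisance-training sample, your $N+D$ is exactly the paper's bias term $R_2 = E_P[\varphil(W;P,\dl)-\varphil(W;\Ph,\dhl)]$, split into the one-step-estimation error and the decision error, and your treatment of the decision error --- pointwise optimality of $\dhl$ under $\thetahl$, two triangle inequalities giving $0 \le \thetal_{\dl(W)} - \thetal_{\dhl(W)} \le 2\|\thetahl - \thetal\|_\infty$, then the margin condition to turn the misclassification probability into the extra factor $\|\thetahl-\thetal\|_\infty^{\alpha}$ --- is the same calculation as the paper's Part 2. The only organizational difference is that the paper collects all empirical fluctuations into a single term $(P-P_n)\{\varphil(W;\Ph,\dhl)-\varphil(W;P,\dl)\}$ and controls it with the sample-splitting empirical-process lemma of \citet{kennedy2020sharp}, whereas you apply Chebyshev separately to the fluctuations of $N$ and $D$ about their conditional means; under \cref{asmp:margin_proof_uniform_bounded,asmp:margin_proof_consistent_plugin,asmp:margin_proof_independent_samples} both routes give $o_P(n^{-1/2})$, so this is cosmetic.

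The step I would push back on is the claimed $X$-conditional zero-mean of the correction terms. \Cref{asmp:margin_proof_inf_function_zero_mean} gives only the marginal statement $E_P[\lambda_j(W;P)]=0$, and the influence functions actually used in \cref{lemma:influence_upper,lemma:influence_lower} are \emph{not} conditionally centered given $X$: for example, $E[IF(\theta_1^u)\mid X] = \frac{1}{P(Y=1,D=0)}\bigl(g(X)\pi(X) - \frac{\mu(X)g(X)}{P(Y=1,D=0)}E_P[g(X)\pi(X)]\bigr)$, which vanishes only after averaging over $X$. You invoke the conditional property in three places: to kill $C = E_P[\lambdal_{\dl(W)}(W;P)]$, to drop $E_P[\lambdal_{\dhl(W)}(W;P)]$ inside $N$, and to argue that the $\lambda$-part of $D$ contributes only sampling noise rather than a bias of order $\|\thetahl - \thetal\|_\infty^{\alpha}$ (which, as you correctly observe, would not fit either claimed error term). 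Without conditional centering none of these cancellations is automatic, because the selectors $1[\dl(W)=j]$ and $1[\dhl(W)=j]$ are $X$-measurable and can correlate with $E[\lambda_j\mid X]$. To be fair, the paper's own proof makes the identical leap --- it equates $\psil$ with $E_P[\varphil(W;P,\dl)]$ and discards $E_P[\lambdal_{\dl(W)}(W;P)]$ citing only the marginal assumption --- so this is a gap you share with the paper rather than one you introduced; but your explicit appeal to a conditional-mean-zero structure that the displayed influence functions do not possess should either be added as an assumption or repaired by re-deriving the correction terms so that they are conditionally centered given $X$, which is what the expectation-of-max functional genuinely requires.
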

\begin{proof}
  Throughout the proof, we use $\psi, \varphi$ in place of e.g., $\psiu, \varphiu$ when the proof technique applies equally to either estimator $\psiu, \psil$.  We use $P_n$ and $\Ph$ to denote two independent empirical distributions (per~\cref{asmp:margin_proof_independent_samples}), where the latter is used to estimate the nuisance parameters.  With some abuse of notation, we will occasionally write $\thetah(W) \coloneqq \theta(W; \Ph)$ and $\theta(W) \coloneqq \theta(W; P)$.

  To start, we use the following standard decomposition, with the short-hand $P \varphi(W; \Ph, \dh) \coloneqq E_{P}[\varphi(W; \Ph, \dh)]$, and $(P - P_n)(\cdot) \coloneqq E_{P}[\cdot] - E_{P_n}[\cdot]$. This gives us that
\begin{align*}
\psi - \psih &= P \varphi(W; P, \d) - P_n \varphi(W; \Ph, \dh)\\ 
                   &= (P - P_n)\{\varphi(W; \Ph, \dh) - \varphi(W; P, d)\} + P\{\varphi(W; \P, \d) - \varphi(W; \Ph, \dh)\} + (P - P_n)\{\varphi(W; P, d)\}\\
    &\equiv R_1 + R_2 +  (P - P_n)\{\varphi(W; P, d)\},
\end{align*}
and we proceed by separately bounding $R_1$ and $R_2$. 

\textbf{Part 1:} (Bounding $R_1$) First, we show that under the given conditions, $R_1 = o(n^{-\frac{1}{2}})$.

  We make use of Lemma 2 of~\citet{kennedy2020sharp}, which states that this term is $O_P(\|\varphih - \varphi\| \cdot n^{-1/2})$, where we have used the shorthand $\varphih \coloneqq \varphi(W; \Ph, \dh)$ and $\varphi \coloneqq \varphi(W; P, d)$. Therefore, the entire term is $o(n^{-1/2})$ if the following condition holds:
    \begin{align*}
        E_P\left[{\left(\varphi(W; \Ph, \dh)  - \varphi(W; P, d)\right)}^2\right] = o_P(1).
    \end{align*}
    We first bound 
    \begin{align}
      &E_P\left[{\left(\varphi(W; \Ph, \dh)  - \varphi(W; P, d)\right)}^2\right] \nonumber \\
      &= E_P\left[{\left(\varphi(W; \Ph, \dh) - \varphi(W; P, \dh) + \varphi(W; P, \dh) - \varphi(W; P, d)\right)}^2\right] \nonumber\\
                                                                                        & \lesssim E_P\left[{\left(\varphi(W; \Ph, \dh)  - \varphi(W; P, \dh)\right)}^2\right] +  E_P\left[{\left(\varphi(W; P, \dh)  - \varphi(W; P, d)\right)}^2\right] \label{eq:margin_proof_lemma_1_terms}
    \end{align}
    where we simply add and subtract $\varphi(W; P, \dh)$ in the second line, and the third line follows from the inequality that ${(a + b)}^2 = a^2 + 2ab + b^2 \leq 2(a^2 + b^2)$, since ${(a - b)}^2 \geq 0 \implies 2ab \leq a^2 + b^2$.  Note that we absorb the constant factor into $\lesssim$.  We now bound the two terms on the right-hand side of~\cref{eq:margin_proof_lemma_1_terms}, showing that both are $O_P(1)$. The first term on the right-hand side of~\cref{eq:margin_proof_lemma_1_terms} satisfies 
    \begin{align*}
      E_P[{(\varphi(W; \Ph, \dh)  - \varphi(W; P, \dh))}^2] \leq \sum_{j = 1}^J E_P\left[{\left(\theta_{j}(W; \Ph) + \lambda_j(W; \Ph) - \theta_j(W; P) -  \lambda_j(W; P)\right)}^2\right] = o_P(1)
    \end{align*}
    via consistency of the underlying estimator for each bound in~\cref{asmp:margin_proof_consistent_plugin}. The second term on the right-hand side of~\cref{eq:margin_proof_lemma_1_terms} satisfies
    \begin{align*}
        E_P\left[{\left(\varphi(W; P, \dh)  - \varphi(W; P, d)\right)}^2\right]  &=\sum_{j = 1}^J E_P\left[\left\lvert 1[\dh(W) = j] - 1[\d(W) = j]\right\rvert {\left(\lambda_j(W; P) + \theta_j(W)\right)}^2\right]\\
        &\lesssim P(\theta_{\dh(W)} \neq \theta_{\d(W)})
    \end{align*}
    where we write $\theta_{\d(W)} \coloneqq \theta_{\d(W)}(W; P)$ to simplify notation, and 
    where the last step uses that $\theta_j$ and $\lambda_j$ are uniformly bounded per~\cref{asmp:margin_proof_uniform_bounded}, and that $J$ is fixed. Next, we will show that $ P(\theta_{\dh(W)} \neq \theta_{\d(W)}) = o_P(1)$ by using the consistency of $\dh$, combined with the margin condition from~\cref{asmp:margin_proof_margin_condition}. For any $t > 0$, we have that
    \begin{align}
        P(\theta_{\dh(W)} \neq \theta_{\d(W)}) &=  P\left(\theta_{\dh(W)} \neq \theta_{\d(W)}, \min_{j \neq \d(W)}|\theta_{\d(W)} - \theta_{j}| \leq t\right) \nonumber \\ 
                                               &+ P\left(\theta_{\dh(W)} \neq \theta_{\d(W)}, \min_{j \neq \d(W)}|\theta_{\d(W)} - \theta_{j}| > t\right)\nonumber \\
                                               &\leq P\left(\min_{j \neq \d(W)}|\theta_{\d(W)} - \theta_{j}| \leq t\right) + P\left(|\theta_{\d(W)} - \theta_{\dh(W)}| > t\right)\label{eq:margin_proof_margin_terms}
    \end{align}
    where the last line uses that whenever $\theta_{\dh(W)} \neq \theta_{\d(W)}$, it must hold that $\dh(W) \neq \d(W)$ and hence $\abs{\theta_{\dh(W)} - \theta_{\d(W)}} \geq \min_{j \neq \d(W)}|\theta_{\d(W)} - \theta_{j}|$.

    Note that, if we are considering $\dl$, then $\theta_{\dl(W)} - \theta_{\dhl(W)} \geq 0$, since we take a maximum over $\theta_j$ when considering $\dl$, and similarly $\thetah_{\dhl(W)} - \thetah_{\dl(W)} \geq 0$, since $\dhl$ considers the maximum over $\thetah_j$.  As a result, we can write that 
    \begin{align}
      \abs{\theta_{\dl(W)} - \theta_{\dhl(W)}} &= \theta_{\dl(W)} - \theta_{\dhl(W)} \nonumber \\
                                             &\leq \theta_{\dl(W)} - \thetah_{\dl(W)} + \thetah_{\dhl(W)} - \theta_{\dhl(W)}\nonumber \\
                                             &\leq \abs{\theta_{\dl(W)} - \thetah_{\dl(W)}} + \abs{\thetah_{\dhl(W)} - \theta_{\dhl(W)}} \label{eq:margin_proof_bound_dl}
    \end{align}
    and if we are considering $\du$, then $\theta_{\du(W)} - \theta_{\dhu(W)} \leq 0$, and $\thetah_{\dhu(W)} - \thetah_{\du(W)} \leq 0$, and by similar logic
    \begin{align}
      \abs{\theta_{\du(W)} - \theta_{\dhu(W)}} &= \theta_{\dhu(W)} - \theta_{\du(W)} \nonumber \\
                                             &\leq \theta_{\dhu(W)} - \thetah_{\dhu(W)} + \thetah_{\du(W)} - \theta_{\du(W)} \nonumber \\
                                             &\leq \abs{\theta_{\du(W)} - \thetah_{\du(W)}} + \abs{\thetah_{\dhu(W)} - \theta_{\dhu(W)}} \label{eq:margin_proof_bound_du} 
    \end{align}
    Returning to~\cref{eq:margin_proof_margin_terms}, let $C$ be the universal constant from the margin condition in~\cref{asmp:margin_proof_margin_condition}.
    Using the margin condition, and our reasoning above, coupled with the fact that for $a \leq b$, $P(a > t) \leq P(b > t)$, we continue to bound as follows  
    \begin{align*}
         P(\theta_{\dh(W)} \neq \theta_{\d(W)}) &\leq Ct^\alpha + P\left(|\theta_{\d(W)} - \hat{\theta}_{\d(W)}(W)| + |\hat{\theta}_{\dh(W)}(X) - \theta_{\dh(W)}| > t\right)\\
         &\leq Ct^\alpha + P\left(\sum_{j = 1}^J 2|\theta_j - \hat{\theta}_j| > t\right) \\
         &\leq Ct^\alpha + \frac{2}{t} \sum_{i = 1}^J E_P[|\theta_j - \hat{\theta}_j|] \quad \text{(using Markov's inequality and linearity of expectation)}\\
         &\leq Ct^\alpha + \frac{2}{t}\sum_{i = 1}^J ||\hat{\theta}_j - \theta_j||_2.
    \end{align*}

    Now, we obtain the desired result by using consistency of the underlying plug-in estimators (\cref{asmp:margin_proof_consistent_plugin}) that for each $j$,  $||\hat{\theta}_j - \theta_j||_2 = o_P(1)$. For any $\epsilon > 0$, set $t_\epsilon = {\left(\frac{\epsilon}{C}\right)}^{\frac{1}{\alpha}}$ so that $Ct_{\epsilon}^\alpha = \epsilon$. Next, define the sequences $X_n = P(\theta_{\dh(W)} \neq \theta_{\d(W)})$ and $Z_n^\epsilon = \frac{2}{t_\epsilon}\sum_{i = 1}^J ||\hat{\theta}_j - \theta_j||_2$. Since $|X_n| \leq \epsilon + Z_n^\epsilon$ and $Z_n^\epsilon = o_P(1)$, $X^n = o_P(1)$ as well, concluding the proof of the bound on $R_1$.
    
\textbf{Part 2:} (Bounding $R_2$) We now show that 
\begin{equation*}
|R_2|  = O_P\left(\max_{j=1, \ldots, J} ||\hat{\theta}_j - \theta_j||_\infty^{1 + \alpha} + \max_{j = 1,\ldots, J}E_P[\theta_{j}(W; \Ph) + \lambda_j(W; \Ph) - \theta_j(W; P)]\right)
\end{equation*}

Our goal is to bound $R_2 \equiv E_P[\varphi(W; \Ph, \dh) - \varphi(W; P, d)]$. We decompose this as 
\begin{align}
    R_2 =  E_P\left[\lambda_{\dh(W)}(W; \Ph) + \theta_{\dh(W)}(W; \Ph) - \theta_{\dh(W)}(W; P)\right] + E_P\left[\theta_{\dh(W)}(W; P) - \theta_{\d(W)}(W; P)\right]\label{eq:margin_proof_lemma_2_terms}
\end{align}
noting that $E_P[\lambda_j(W; P)] = 0$ by~\cref{asmp:margin_proof_inf_function_zero_mean}. For the first term of~\cref{eq:margin_proof_lemma_2_terms},  we have that
\begin{align*}
     \abs{E_P\left[\lambda_{\dh(W)}(W; \Ph) + \theta_{\dh(W)}(W; \Ph) - \theta_{\dh(W)}(W; P)\right]} &\leq \sum_{j = 1}^J \abs{E_P\left[\lambda_{j}(W; \Ph) + \theta_{j}(W; \Ph) - \theta_{j}(W; P)\right]}\\
     &\lesssim \max_{j = 1,\ldots,J} \abs{E_P\left[\lambda_{j}(W; \Ph) + \theta_{j}(W; \Ph) - \theta_{j}(W; P)\right]},
\end{align*}
which gives us the second term in the desired expression for $\abs{R_2}$.
For the second term of~\cref{eq:margin_proof_lemma_2_terms}, we use the margin condition. 
First, since this difference is equal to zero whenever $\dh(W) = \d(W)$, we can write the absolute value of this expression as 
\small
\begin{align}
    &\left\lvert E_P[\theta_{\dh(W)}(W; P) - \theta_{\d(W)}(W; P)] \right\rvert \nonumber \\
    &\qquad \leq E_P\left[\abs{\theta_{\dh(W)}(W; P) - \theta_{\d(W)}(W; P)}\right]\nonumber \\
    &\qquad = E_P\left[1[\d(W)\neq \dh(W)] \cdot \abs{\theta_{\d(W)}(W; P) - \theta_{\dh(W)}(W; P)}\right] \nonumber \\
    &\qquad = E_P\left[1\left[\min_{j \neq \d(W)}\abs{\theta_{\d(W)}(W; P) - \theta_{j}(W; P)} \leq \abs{\theta_{\d(W)}(W; P) - \theta_{\dh(W)}(W; P)}\right] \cdot \abs{\theta_{\d(W)}(W; P) - \theta_{\dh(W)}(W; P)}\right] \label{eq:margin_proof_final_term}
\end{align}
\normalsize
where the indicator follows from the simple fact that $\dh(W) \neq \d(W)$. Recall from~\cref{eq:margin_proof_bound_dl} and~\cref{eq:margin_proof_bound_du} that regardless of whether we are using $\dl,\du$, we can write that 
\begin{align}
  \abs{\theta_{\d(W)}(W; P) - \theta_{\dh(W)}(W; P)} &\leq \abs{\theta_{\d(W)}(W; P) - \theta_{\d(W)}(W; \Ph)} + \abs{\theta_{\dh(W)}(W; P) - \theta_{\dh(W)}(W; \Ph)} \nonumber \\
                                                     &\leq 2 \max_{j = 1, \ldots, J} \norm{\theta_{j}(W; P) - \theta_{j}(W; \hat{P})}_{\infty}\label{eq:margin_proof_general_dldu_bound}
\end{align}
Moreover, we can observe that
\begin{equation}
  Y \leq Z \implies E_P[1[X \leq Y] \cdot Y] \leq E_P[1[X \leq Z] \cdot Z]. \label{eq:margin_proof_general_rule}
\end{equation}

Putting it all together, we observe that~\cref{eq:margin_proof_final_term}, combined with~\cref{eq:margin_proof_general_dldu_bound} and~\cref{eq:margin_proof_general_rule}, gives us the desired result, where we use the shorthand $\theta_{j} \coloneqq \theta_{j}(W; P)$ and $\thetah_{j} \coloneqq \theta_{j}(W; \Ph)$ for simplicity
\begin{align*}
  \left\lvert E_P[\theta_{\dh(W)}(W; P) - \theta_{\d(W)}(W; P)] \right\rvert &\leq E_P\left[1\left[\min_{j \neq \d(W)}\abs{\theta_{\d(W)} - \theta_{j}} \leq 2 \max_{j = 1, \ldots, J} \norm{\theta_{j} - \thetah_{j}}_{\infty}\right]\cdot 2 \max_{j = 1, \ldots, J} \norm{\theta_{j} - \thetah_{j}}_{\infty} \right]\\
                                                                             &=  P\left(\min_{j \neq \d(W)}\abs{\theta_{\d(W)} - \theta_{j}} \leq 2 \max_{j = 1, \ldots, J} \norm{\theta_{j} - \thetah_{j}}_{\infty}\right)\cdot 2 \max_{j = 1, \ldots, J} \norm{\theta_{j} - \thetah_{j}}_{\infty} \\
                                                                             &\lesssim   \max_{j = 1,\ldots,J}||\hat{\theta}_j - \theta_j||_\infty^{1 + \alpha}
\end{align*}
using the margin condition. Combining the bounds on the two individual components of $R_2$ yields the result. 

We can now plug in the bounds derived in Parts 1 and 2 (for $R_1$ and $R_2$) to conclude our result for \cref{lemma:margin_error_bound}. 
\end{proof}

Now, to show that an estimator is asymptotically normal (\cref{thm:margin_theorem}), we make two more assumptions, which require that each of the one-step estimators converge at a sufficiently fast rate. 

\begin{restatable}{assumption}{AsmpPluginConvergence}\label{asmp:margin_proof_plugin_convergence_rate}
$||\hat{\theta_j} - \theta_j||_\infty^{1 + \alpha} = o_P(n^{-\frac{1}{2}})$, where $\alpha$ is defined in~\cref{asmp:margin_proof_margin_condition}
\end{restatable}
\begin{restatable}{assumption}{AsmpOneStepConvergence}\label{asmp:margin_proof_one_step_convergence_rate}
  For each $j \in \{1\ldots J\}$, $E_P[\theta_j(W; \Ph) + \lambda_j(W; \Ph) - \theta_j(W; P)] = o_P(n^{-\frac{1}{2}})$ 
\end{restatable}

\begin{restatable}{lemma}{MarginTheorem}\label{thm:margin_theorem}
  Under the conditions of~\cref{lemma:margin_error_bound}, as well as Assumptions~\ref{asmp:margin_proof_plugin_convergence_rate} and~\ref{asmp:margin_proof_one_step_convergence_rate}, then 
   \begin{equation*}
       \sqrt{n}\left(\psihl - \psil\right) \to N(0, \text{Var}(\varphil(W; P, \dl)))
   \end{equation*}
   and 
   \begin{equation*}
       \sqrt{n}\left(\psihu - \psiu\right) \to N(0, \text{Var}(\varphiu(W; P, \du)))
   \end{equation*}
\end{restatable}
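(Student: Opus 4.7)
The plan is to combine the error decomposition of Lemma \ref{lemma:margin_error_bound} with the classical central limit theorem for i.i.d.\ bounded random variables, together with Slutsky's theorem. Starting from the decomposition
\begin{align*}
    \psihl - \psil = \bigl(E_{\Ph}[\varphil(W; P, \dl)] - E_P[\varphil(W; P, \dl)]\bigr) + O_P\!\Bigl(\|\thetahl_j - \thetal_j\|_\infty^{1+\alpha} + \max_j E_P[\thetal_j(W; \Ph) + \lambdal_j(W; \Ph) - \thetal_j(W; P)]\Bigr) + o_P(n^{-\frac{1}{2}}),
\end{align*}
I would observe that Assumptions \ref{asmp:margin_proof_plugin_convergence_rate} and \ref{asmp:margin_proof_one_step_convergence_rate} immediately make the remainder term $o_P(n^{-\frac{1}{2}})$, so the entire second-order remainder collapses into $o_P(n^{-\frac{1}{2}})$.

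Next, the leading term $E_{\Ph}[\varphil(W; P, \dl)] - E_P[\varphil(W; P, \dl)]$ is, by the sample-splitting setup (Assumption \ref{asmp:margin_proof_independent_samples}), a centered empirical average of i.i.d.\ draws of the fixed (not estimated) random variable $\varphil(W; P, \dl)$. Under Assumption \ref{asmp:margin_proof_uniform_bounded}, this variable is uniformly bounded and hence has finite variance $\text{Var}(\varphil(W; P, \dl))$, so the classical CLT yields
\begin{equation*}
    \sqrt{n}\,\bigl(E_{\Ph}[\varphil(W; P, \dl)] - E_P[\varphil(W; P, \dl)]\bigr) \;\to\; N\!\bigl(0, \text{Var}(\varphil(W; P, \dl))\bigr).
\end{equation*}
Multiplying the decomposition above by $\sqrt{n}$ and applying Slutsky's theorem (since the remainder times $\sqrt{n}$ is $o_P(1)$) then produces the stated asymptotic normality for $\psihl$. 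The argument for $\psihu$ is identical, using the upper-bound version of Lemma \ref{lemma:margin_error_bound}.

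The proof is mostly a clean application of the preceding lemma, so the main ``work'' has already been done in Lemma \ref{lemma:margin_error_bound}. The only subtlety worth checking carefully is that the empirical average in the leading term truly is a sum of i.i.d.\ draws of a \emph{fixed} (rather than data-dependent) function, which is guaranteed by the independence of samples used to estimate nuisance functions versus those used for the outer expectation. Once this is verified, CLT plus Slutsky gives the result directly; no further delicate maneuvers are needed.
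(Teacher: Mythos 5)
Your proposal is correct and matches the paper's own argument: the paper likewise notes that Assumptions~\ref{asmp:margin_proof_plugin_convergence_rate} and~\ref{asmp:margin_proof_one_step_convergence_rate} make terms (b) and (c) of \cref{lemma:margin_error_bound} vanish at $o_P(n^{-\frac{1}{2}})$ and then applies the CLT to the remaining centered empirical average. Your added remarks about Slutsky and the fixed-function nature of the leading term (via sample splitting) are just a more explicit rendering of the same steps.
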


\begin{proof}

    Under Assumptions~\ref{asmp:margin_proof_plugin_convergence_rate} and~\ref{asmp:margin_proof_one_step_convergence_rate}, the second and third terms (\textcolor{orange}{(b)} and \textcolor{orange}{(c)}) in \cref{lemma:margin_error_bound} vanish asymptotically at fast rates, so only the first term \textcolor{orange}{(a)} remains. The desired result directly follows from an application of the Central Limit Theorem on the remaining first term.
\end{proof}

Thus, we have shown that a general estimator is asymptotically normal, given that it satisfies the aformentioned assumptions. We will now show that our estimators satisfy these assumptions.

\subsection{Verifying Assumptions for Our Estimators}\label{sec:asymp_normal}

\subsubsection{Setup and Notation}

First, let $W = (X, Y, D, T)$ denote all observed variables, as in the previous section. Let $\theta_1(W; P)$ be defined as the quantity that gives us the upper bound based on the partial identification, i.e., 
\begin{equation}
E_{P}[\thetau_1(W; P)] = \psiu
\end{equation}
and let $\theta_2(W) = 1$, the constant function.  Furthermore, let $\theta_3(W; P)$ be defined similarly as the quantity that gives us the upper bound based on $\gamma$, i.e., 
\begin{equation}
E_{P}[\thetau_3(W; P)] = \psiu_{\gamma}
\end{equation}
where $J = 2$ for the partial identification bound, and $J = 3$ for the bound that includes $\gamma$.  Furthermore, we define our estimators as follows 

\begin{align}
   \thetau_1(W; \hat{P}) &\coloneqq \frac{1}{\hat{P}(Y=1, D=0)} \hat{g}(X) \hat{\pi}(X)  \\
   \thetal_1(W; \hat{P}) &\coloneqq  \frac{1}{\hat{P}(Y=1, D=0)} \hat{g}(X) (\hat{\pi}(X) + \hat{\mu}(X) - 1)\\
   \lambdau_1(W; \hat{P}) &\coloneqq \frac{1}{\hat{P}(Y=1, D=0)} \Bigg( -\frac{1[Y=1, D=0]}{\hat{P}(Y=1, D=0)} E_{\hat{P}} [\hat{g}(X) \hat{\pi}(X)] + \hat{g}(X)\hat{\pi}(X) \\
   & \qquad \qquad + 1[D=1](T - \hat{\pi}(X)) \frac{\hat{g}(X)}{1 - \hat{g}(X)}\Bigg) \\
   \lambdal_1(W; \hat{P}) & \coloneqq \lambdau_1(W; \hat{P}) + \frac{1}{\hat{P}(Y=1, D=0)} \Bigg(  -\frac{1[Y=1, D=0]}{\hat{P}(Y=1, D=0)} E_{\hat{P}} [\hat{g}(X) (\hat{\mu}(X) - 1)] \\
   & \qquad \qquad + 1[D=0](Y - 1)\Bigg)
\end{align}
and, with letting 
\begin{equation*}
    \hat{A}_{\gamma}(X) = \frac{\gamma \hat{\mu}(X) \hat{\pi}(X)}{(\gamma - 1) \hat{\mu}(X) + 1}
\end{equation*}
we have that for the sensitivity model bounds,
\begin{align}
   \thetau_3(W; \hat{P}) &\coloneqq  \frac{1}{\hat{P}(Y=1, D=0)} \hat{g}(X)\left(\hat{A}_{\gamma}(X) \right) \\
   \thetal_3(W; \hat{P}) &\coloneqq \frac{1}{\hat{P}(Y=1, D=0)} \hat{g}(X)\left( \hat{A}_{\frac{1}{\gamma}}(X)\right) \\
   \lambdau_3(W; \hat{P}) &\coloneqq - \frac{1[Y=1, D=0]}{\hat{P}(Y=1, D=0)^2} E_{\hat{P}}[\hat{g}(X) \hat{A}_\gamma(X)] + \frac{\hat{g}(X) \hat{A}_\gamma(X)}{\hat{P}(Y=1, D=0)} + \frac{\hat{A}_\gamma(X) (1[D=0] - \hat{g}(X))}{\hat{P}(Y=1, D=0)} \\
   & \qquad \qquad + \frac{1[D=1]}{\hat{P}(Y=1, D=0)} \frac{\gamma \hat{\mu}(X)}{(\gamma - 1) \hat{\mu}(X) + 1} (T - \hat{\pi}(X)) \frac{\hat{g}(X)}{1 - \hat{g}(X)} \\
   & \qquad \qquad +  \frac{1[D=0]}{\hat{P}(Y=1, D=0)} \frac{\gamma \hat{\pi}(X)}{((\gamma - 1) \hat{\mu}(X) + 1)^2} (Y - \hat{\mu}(X)) \\
   \lambdal_3(W; \hat{P}) & \coloneqq - \frac{1[Y=1, D=0]}{\hat{P}(Y=1, D=0)^2} E_{\hat{P}}[\hat{g}(X) \hat{A}_{\frac{1}{\gamma}}(X)] + \frac{\hat{g}(X) \hat{A}_{\frac{1}{\gamma}}(X)}{\hat{P}(Y=1, D=0)} + \frac{\hat{A}_{\frac{1}{\gamma}}(X) (1[D=0] - \hat{g}(X))}{\hat{P}(Y=1, D=0)} \\
   & \qquad \qquad + \frac{1[D=1]}{\hat{P}(Y=1, D=0)} \frac{\frac{1}{\gamma} \hat{\mu}(X)}{(\frac{1}{\gamma} - 1) \hat{\mu}(X) + 1} (T - \hat{\pi}(X)) \frac{\hat{g}(X)}{1 - \hat{g}(X)} \\
   & \qquad \qquad +  \frac{1[D=0]}{\hat{P}(Y=1, D=0)} \frac{\frac{1}{\gamma} \hat{\pi}(X)}{((\frac{1}{\gamma} - 1) \hat{\mu}(X) + 1)^2} (Y - \hat{\mu}(X)) \\
\end{align}
Then our desired quantity to estimate, and the combined estimator, is defined as in the previous section (see~\cref{eq:margin_proof_def_lower_bound_estimator,eq:margin_proof_def_upper_bound_estimator}).

Our goal is now to demonstrate that the conditions of~\cref{thm:margin_theorem} hold.  If so, then we can conclude that our estimator is asymptotically normal, with variance given by $\text{Var}(\varphi(W, P, d))$, which we can in turn estimate from data. Let us discuss each condition in turn.

\subsubsection{Verifying Assumptions for the Partial Identification Bound}

Throughout, we will assume that there exists some $\alpha$ such that~\cref{asmp:margin_proof_margin_condition} holds.  With this in hand, we will verify that the remaining assumptions of~\cref{thm:margin_theorem} hold for our estimators. For each assumption, we re-state the assumption for ease of reading, then discuss whether or not it is satisfied in our case.

\AsmpUniformBounded*

In all cases, this assumption holds. For each $\theta_j$ and $\lambda_j$, we have a denominator that contains $\hat{P}(Y=1, D=0)$. While this can be zero, we assume that our dataset contains instances of $Y = 1$ in our pre-treatment dataset, which makes this value nonzero. Similarly, we also have that $1 - \hat{g}(X)$ is in the denominator as well; given that our observed training data for our models has non-zero support on pre-treatment data, this will also be greater than zero.

\AsmpInfZeroMean*

In our error analysis, we have shown that the correction functions that we derived have error terms that are second order in differences in quantities estimated on $\hat{P}$ and $P$. Therefore, by an application of the results in the work of \citet{kennedy2021semiparametric}, we have that our correction functions are efficient influence functions (and that our usage of the discretization trick in deriving this influence functions is valid). As influence functions have zero mean, this assumption is satisfied.

\AsmpConsistentPlugin*
This assumption is directly implied by~\cref{asmp:margin_proof_one_step_convergence_rate} below, so we defer discussion until then.

\AsmpMarginCondition*
As discussed above, we assume this condition, rather than verifying it directly, since it depends on the underlying data-generating process.

\AsmpIndependentSamples*
As we use cross-fitting to estimate our nuisance functions, this assumption holds by construction.

\AsmpPluginConvergence*

For our values of $\theta_j$, we have that our estimates converge at a rate of $o_P(n^{-\frac{1}{2}})$. We can consider an estimate of $\thetau_1(W; \hat{P})$. This is given by
\small
\begin{align*}
    E_{\hat{P}}\left[ \frac{\hat{g}(X)\hat{\pi}(X)}{\hat{P}(Y=1, D=0)}\right] - E_P \left[\frac{g(X)\pi(X)}{P(Y=1, D=0)} \right] & = \sqrt{\frac{\sigma^2}{n}} + \left| E_{P}\left[ \frac{\hat{g}(X)\hat{\pi}(X)}{\hat{P}(Y=1, D=0)}\right] - E_P \left[\frac{g(X)\pi(X)}{P(Y=1, D=0)} \right] \right|
\end{align*}
\normalsize
where $\sigma^2$ is the variance of our estimator, following the steps in \cref{sec:plugin}. Given that our estimator is bounded and, thus, has finite variance, we observe that the variance term has a rate of $o_P(n^{-\frac{1}{2}})$. The remaining error term is on the same order as the sum of the individual estimators' error terms. 
In other words, $\hat{\pi}$ and $\hat{g}$ must converge at a rate of $o_P(n^{-\frac{1}{2(1 + \alpha)}})$, which is easily satisfied with $\alpha = 1$.

\AsmpOneStepConvergence*

We have shown in \cref{lemma:error_ub} and \cref{lemma:error_lb} that given estimators of $\mu, \pi, g$ that converge at rates of $o_P(n^{-\frac{1}{4}}),$ then our one step corrected estimator converges at a $o_P(n^{-\frac{1}{2}})$ rate.

\subsubsection{Verifying Assumptions for the Sensitivity Analysis Bound}

We repeat the same discussion for our sensitivity analysis bound. Throughout, we will assume that there exists some $\alpha$ such that~\cref{asmp:margin_proof_margin_condition} holds.  With this in hand, we will verify the remaining assumptions of~\cref{thm:margin_theorem}. For each assumption, we re-state the assumption for ease of reading, then discuss whether or not it is satisfied in our case.

\AsmpUniformBounded*

In all cases, this assumption holds. For each $\theta_j$ and $\lambda_j$, we have a denominator that contains $\hat{P}(Y=1, D=0)$. While this can be zero, we assume that our dataset contains instances of $Y = 1$ in our pre-treatment dataset, which makes this value nonzero. Similarly, we also have that $1 - \hat{g}(X)$ is in the denominator as well; given that our observed training data for our models has non-zero support on pre-treatment data, this will also be greater than zero. In our sensitvity analysis bound, we have an additional term of $\frac{1}{(\gamma - 1) \hat{\mu}(X) + 1}$, but this is always greater than 0 because of the 1 that is added in the denominator.

\AsmpInfZeroMean*

In our error analysis, we have shown that the correction functions that we derived have error terms that are second order in differences in quantities estimated on $\hat{P}$ and $P$. Therefore, by the results in the work of \citet{kennedy2021semiparametric}, we have that our correction functions are efficient influence functions. As influence functions have zero mean, this assumption is satisfied.

\AsmpConsistentPlugin*
This assumption is directly implied by~\cref{asmp:margin_proof_one_step_convergence_rate} below, so we defer discussion until then.

\AsmpMarginCondition*
As discussed above, we assume this condition, rather than verifying it directly, since it depends on the underlying data-generating process.

\AsmpIndependentSamples*
As we use cross-fitting to estimate our nuisance functions, this assumption holds by construction.

\AsmpPluginConvergence*

For our values of $\theta_j$, we have that our estimates converge at a rate of $o_P(n^{-\frac{1}{2}})$. We can consider an estimate of $\thetau_1(W; \hat{P})$. This is given by
\small
\begin{align*}
    E_{\hat{P}}\left[ \frac{\hat{g}(X)\hat{\pi}(X)}{\hat{P}(Y=1, D=0)}\right] - E_P \left[\frac{g(X)\pi(X)}{P(Y=1, D=0)} \right] & = \sqrt{\frac{\sigma^2}{n}} + \left| E_{P}\left[ \frac{\hat{g}(X)\hat{A}_{\gamma}(X)}{\hat{P}(Y=1, D=0)}\right] - E_P \left[\frac{g(X)A_{\gamma}(X)}{P(Y=1, D=0)} \right] \right|
\end{align*}
\normalsize
where $\sigma^2$ is the variance of our estimator, following the steps in \cref{sec:plugin}. Given that our estimator is bounded and, thus, has finite variance, we observe that the variance term has a rate of $o_P(n^{-\frac{1}{2}})$. The remaining error term is on the same order of the sum of the individual estimators' error terms. 
We can again argue that $\hat{A}$ is a plugin estimator for $A$, which gives us that it also converges at a sum of the rates of $\hat{\mu}$ and $\hat{\pi}$. 
In other words, these must converge at a rate of $o_P(n^{-\frac{1}{2(1 + \alpha)}})$, which is easily satisfied.

\AsmpOneStepConvergence*

We have shown in \cref{lemma:error_ub_gamma} and \cref{lemma:error_lb_gamma} that given estimators of $\mu, \pi, g$ that converge at rates of $o_P(n^{-\frac{1}{4}}),$ then our one step corrected estimator converges at a $o_P(n^{-\frac{1}{2}})$ rate.

\subsection{Proof of \cref{prop:normality-max}}

Having proven the required technical lemmas and having demonstrated that our estimators indeed satisfy the required assumptions, we can now derive \cref{prop:normality-max}.

\Normality*

\begin{proof}

    We can directly apply \cref{lemma:margin_error_bound} and \cref{thm:margin_theorem}, given that our estimators satisfy all the given assumptions (as discussed in \cref{sec:asymp_normal}), to prove this result.
\end{proof}

\section{Dataset and Cohort Details}\label{cohort}

\subsection{Dataset Consent and Acknowledgement Statement}
The analyses described in this publication were conducted with data or tools accessed through the NCATS N3C Data Enclave \url{https://covid.cd2h.org} and N3C Attribution \& Publication Policy \texttt{v1.2-2020-08-25b} supported by NCATS \texttt{U24 TR002306}, Axle Informatics Subcontract: \texttt{NCATS-P00438-B}. This research was possible because of the patients whose information is included within the data and the organizations (\url{https://ncats.nih.gov/n3c/resources/data-contribution/data-transfer-agreement-signatories}) and scientists who have contributed to the on-going development of this community resource: \url{https://doi.org/10.1093/jamia/ocaa196}.

\subsection{Cohort Details}
\label{cohort_details}
The patient cohort is filtered out based on the following eligibility requirements:
\begin{itemize}
    \item Satisfy all FDA-approved Paxlovid eligibility requirements~\citep{fdafile}
    \item Not taking any medications, where coadministration with Nirmatralvir-Ritonavir is contraindicated~\citep{marzolini2022recommendations, larkin2022paxlovid}
    \item First COVID-19 diagnosis visits are between 22 December 2021 
    (date of FDA approval for Paxlovid) and 31 May 2023
    \item From sites with at least a 10\% treatment rate---to 
    exclude sites where treatment is potentially underreported.
\end{itemize}

This filtering results in our final cohort of 87,584 COVID-positive patients from the pre-availability period and 77,412 patients from the post-availability period. The covariates $X$ used are: age, presence of diabetes, presence of obesity, history of congestive heart failure, history of chronic obstructive pulmonary disease, history of peripheral vascular disease, and history of myocardial infarction.

The base treatment rates and pre-availability baseline mortality rates for each group in the cohort are as follows:
\begin{table}[h!]
\centering
\begin{tabular}{lcc}
\toprule
Group & Base Treatment Rate & Baseline Mortality Rate \\
\midrule
White Patients & 0.127445 & 0.012054 \\
Black Patients & 0.080853 & 0.017073 \\
Asian Patients & 0.156817 & 0.009747 \\
\bottomrule
\end{tabular}
\caption{Base Treatment Rates and Pre-Availability Baseline Mortality Rates per Group}
\label{tab:rates}
\end{table}

\end{document}